\newtheorem{assumption}{Assumption}
\newtheorem{approximation}{Approximation}
\newtheorem{lemma}{Lemma}
\newtheorem{proposition}{Proposition}
\newtheorem{definition}{Definition}
\title{Probabilistic Inference in Reinforcement Learning Done Right}
\author{%
  Jean Tarbouriech \\
  Google DeepMind \\
  {\small\texttt{jtarbouriech@google.com}} \\
  \And
  Tor Lattimore \\
  Google DeepMind\\
  {\small\texttt{lattimore@google.com}} \\
  \And
  Brendan O'Donoghue \\
  Google DeepMind\\
  {\small\texttt{bodonoghue@google.com}}
}
\begin{document}

\maketitle

\doparttoc 
\faketableofcontents 

\begin{abstract}
A popular perspective in Reinforcement learning (RL) casts the problem as probabilistic inference on a graphical model of the Markov decision process (MDP). The core object of study is the probability of each state-action pair being visited under the optimal policy. Previous approaches to approximate this quantity can be arbitrarily poor, leading to algorithms that do not implement genuine statistical inference and consequently do not perform well in challenging problems. In this work, we undertake a rigorous Bayesian treatment of the posterior probability of state-action optimality and clarify how it flows through the MDP. We first reveal that this quantity can indeed be used to generate a policy that explores efficiently, as measured by regret. Unfortunately, computing it is intractable, so we derive a new variational Bayesian approximation yielding a tractable convex optimization problem and establish that the resulting policy also explores efficiently. We call our approach \vaporopt and show that it has strong connections to Thompson sampling, K-learning, and maximum entropy exploration. We conclude with some experiments demonstrating the performance advantage of a deep RL version of \vaporopt.
\end{abstract}

\captionsetup[table]{labelfont=small,font=small}
\captionsetup[figure]{labelfont=small,font=small}

\vspace{-0.1in}
\section{Introduction}

Reinforcement learning (RL) is the problem of learning to control an unknown system by taking actions to maximize its cumulative reward through time \cite{sutton2018reinforcement}. As the agent navigates the environment, it receives noisy observations which it can use to update its (posterior) beliefs about the environment \cite{ghavamzadeh2015bayesian}. Unlike supervised learning, where the performance of an algorithm does not influence the data it will later observe, in RL the policy of the agent affects the data it will collect, which in turn affects the policy, and so on. As a result, an agent must sometimes take actions that lead to states where it has \emph{epistemic uncertainty} about the value of those states, and sometimes take actions that lead to more certain payoff. The tension between these two modes is the \emph{exploration-exploitation} trade-off \cite{auer2002using,kearns2002near}. In this light, RL is a \emph{statistical inference problem} wrapped in a \emph{control problem}, and the two problems must be tackled simultaneously for good data efficiency. 

It is natural to apply Bayesian probabilistic inference to the uncertain parameters in RL \cite{ghavamzadeh2015bayesian} and, since the goal of the agent is to find the optimal policy, a relevant object of study is the posterior probability of \emph{optimality} for each state-action pair. In fact, the popular `RL as inference' approach, most clearly summarized in the tutorial and review of \cite{levine2018reinforcement}, embeds the control problem into a graphical model by introducing optimality binary random variables that indicate whether a state-action is optimal. In the `RL as inference' setup these optimality variables are assumed to be observed and take the value one with probability depending on the local reward. This surrogate potential seems like a peculiar and arbitrary choice, yet it has been widely accepted due to its mathematical convenience: for deterministic dynamics, the resulting probabilistic inference is equivalent to maximum entropy RL \cite{todorov2006linearly, ziebart2010modeling}. However, this approximation leads to a critical shortcoming: The surrogate potential functions do not take into account the Bayesian (epistemic) uncertainty, and consequently do not perform genuine statistical inference on the unknowns in the MDP. This leads to `posteriors' that are in no way related to the \emph{true} posteriors in the environment, and acting with those false posteriors leads to poor decision making and agents that require exponential time to solve even small problems~\cite{o2020making}. 

Our contributions can be centered around \textbf{a new quantity that we uncover as key for inference and control}, denoted by $\pop$, which represents the \emph{posterior probability of each state-action pair being visited under the optimal policy}.
\begin{itemize}[leftmargin=.2in,topsep=-2.5pt,itemsep=1pt,partopsep=0pt, parsep=0pt]
\item We reveal that $\pop$ formalizes from a Bayesian perspective what it means for a `state-action pair to be optimal', an event at the core of the `RL as inference' framework which had never been properly analyzed.
\item We establish that knowledge of $\pop$ is sufficient to derive a policy that explores efficiently, as measured by regret (\Cref{section_principled_inference}).
\item Since computing $\pop$ is intractable, we propose a variational optimization problem that tractably approximates it (\Cref{section_var_approx}).
\item We first solve this optimization problem exactly, resulting in a new tabular model-based algorithm with a guaranteed regret bound (\Cref{section_unknownP}). 
\item We then solve a variant of this optimization problem using policy-gradient techniques, resulting in a new scalable model-free algorithm (\Cref{sec_vaporlite}). 
\item We show that both Thompson sampling \cite{thompson1933likelihood,strens2000bayesian,osband2013more,russo2018tutorial} and K-learning \cite{klearning} can be directly linked to $\pop$, thus shedding a new light on these algorithms and tightly connecting them to our variational approach (\Cref{section_connections}). 
\item Our approach has the unique algorithmic feature of adaptively tuning optimism and entropy regularization for \emph{each} state-action pair, which is empirically beneficial as our experiments on `DeepSea' and Atari show (\Cref{section_experiments}).
\end{itemize}

\section{Preliminaries}

We model the RL environment as a finite state-action, time-inhomogeneous MDP given by the tuple $\Mc \coloneqq \{ \Sc, \Ac, L, P, R, \rho \}$, where $L$ is the horizon length, $\Sc = \Sc_1 \cup \ldots \cup \Sc_L$ is the state space with cardinality $S = \sum_{l=1}^L S_l$ where $S_l = \vert \Sc_l \vert$, $\Ac$ is the action space with $A$ possible actions, $P_l : \Sc_l \times \Ac \rightarrow \Delta(\Sc_{l+1}) $ denotes the transition dynamics at step $l$, $R_l: \Sc_l \times \Ac \rightarrow \Delta(\mathbb{R})$ is the reward function at step $l$ 
and $\rho \in \Delta(\Sc_1)$ is the initial state distribution. Concretely, the initial state $s_1 \in \Sc_1$ is sampled from $\rho$, then for steps $l=1,\ldots, L$ the agent is in state $s_l \in \Sc_l$, selects action $a_l \in \Ac$, receives a reward sampled from $R_l(s_l,a_l)$ with mean $r_l(s_l,a_l) \in \mathbb{R}$ and transitions to the next state $s_{l+1} \in \Sc_{l+1}$ with probability $P_l(s_{l+1}\mid  s_l,a_l)$. An agent following a policy $\pi_l \in \Delta(\Ac)^{S_l}$ at state $s$ at step $l$ selects action $a$ with probability $\pi_l(s,a)$. The value functions are defined as 
\begin{align*}
    &Q_l^{\pi}(s,a) = r_l(s,a) + \!\!\sum_{s' \in \Sc_{l+1}} \!\! P_l(s'\mid  s,a) V_{l+1}^{\pi}(s'), &&~ V_l^{\pi}(s) = \sum_{a \in \Ac} \pi_l(s,a) Q_l^{\pi}(s,a), &&&  V^{\pi}_{L+1} = 0,\\
     &Q_l^{\star}(s,a) = r_l(s,a) + \!\!\sum_{s' \in \Sc_{l+1}} \!\! P_l(s'\mid  s,a) V_{l+1}^{\star}(s'), &&~ V_l^{\star}(s) = \max_{a \in \Ac} Q_l^{\star}(s,a),  &&& V^{\star}_{L+1} = 0.
\end{align*}

\vspace{-0.1in}
An optimal policy $\pi^{\star}$ satisfies $V_l{\vphantom{V_l}}^{\pi^{\star}}(s) = V^{\star}_l(s)$ for each state $s$ and step $l$. Let \mbox{$\{ \pi^\star_l(s) = a \}$} be the event that action $a$ is optimal for state $s$ at step $l$, with ties broken arbitrarily so that only one action is optimal at each state. 
Let $\realsplsa \coloneqq \{ \mathbb{R}_+^{S_l \times A}\}_{l=1}^L$ be the set of functions $\Sc_l \times \Ac \rightarrow \mathbb{R}_+$ for $l \in [L]$. 

\textbf{Occupancy measures.} We denote the set of occupancy measures (a.k.a.\,stationary state-action distributions) with respect to the transition dynamics $P$ as 
    \begin{align*}
        \Lambda(P) \coloneqq \Big\{ \lambda \in \realsplsa ~:~~ \sum_{a} \lambda_1(s,a) = \rho(s)  , ~~ \sum_{a'}\lambda_{l+1}(s',a') = \sum_{s,a}  P_l(s'\mid  s,a)  \lambda_l(s,a) \Big\}.
    \end{align*}

\vspace{-0.1in}
Note the correspondence between $\Lambda(P)$ and the set of policies $\Pi = \{\Delta(A)^{S_l}\}_{l=1}^L$ \cite{puterman2014markov}. Any policy $\pi \in \Pi$ induces $\lambda^\pi \in \Lambda(P)$ such that 
$\lambda^\pi_l(s,a)$ denotes the probability of reaching $(s,a)$ at step $l$ under $\pi$. Conversely, any $\lambda \in \Lambda(P)$ induces $\pi^\lambda \in \Pi$ given by $\pi_l^{\lambda}(s, a) \coloneqq \lambda_l(s,a) / (\sum_{a'} \lambda_l(s,a'))$ so long as $\sum_{a'} \lambda_l(s,a') > 0$; otherwise $\pi_l^{\lambda}(s,\cdot)$ can be any distribution, \eg, uniform. 

\textbf{Bayesian RL.} We consider the Bayesian view of the RL problem, where the agent has some beliefs about the MDP represented by a distribution $\phi$ on the space of all MDPs. We use the shorthand notation $\Expect_{\phi}$ and $\Prob_{\phi}$ to denote the expectation and probability under $\phi$. In the Bayesian view, the mean rewards $r$ and transition dynamics $P$ are random variables, and thus so are $Q^{\star}, V^{\star}, \pi^{\star}$.

\subsection{Previous Approach to `RL as Inference'} \label{section_previous_inference}

The popular line of research casting `RL as inference' is most clearly summarized in the tutorial and review of Levine, 2018 \cite{levine2018reinforcement}. It embeds the control problem into a graphical model, mirroring the dual relationship between optimal control and inference \cite{todorov2009efficient,kappen2012optimal, rawlik2012stochastic}. As shown in \Cref{fig-pgm-a}, this approach defines an additional \emph{optimality} binary random variable denoted by $\op_l$, which indicates which state-action at timestep $l$ is `optimal', \ie, which state-action is visited under the (unknown) optimal policy. In these works the probability of $\op_l$ is then modeled as proportional to the exponentiated reward.

\begin{approximation}[Inference over exponentiated reward \cite{levine2018reinforcement}] \label{assumption_inference_exponentiated_rewards}
Denoting by $\op_l(s,a)$ the event that state-action pair $(s,a)$ is `optimal' at timestep $l$, set $\mathbb{P}(\op_l(s,a)) \propto \exp( r_l(s,a) )$.
\end{approximation}

Under \Cref{assumption_inference_exponentiated_rewards}, the optimality variables become observed variables in the graphical model, over which we can apply standard tools for inference. This approach has the advantage of being mathematically and computationally convenient. In particular, under deterministic dynamics it directly leads to RL algorithms with `soft' Bellman updates and added entropy regularization, thus recovering the maximum-entropy RL framework \cite{todorov2006linearly,ziebart2010modeling} and giving a natural, albeit heuristic exploration strategy. Many popular algorithms lie within this class \cite{peters2010relative, haarnoja2017reinforcement, abdolmaleki2018maximum} and have demonstrated strong empirical performance in domains where efficient exploration is not a~bottleneck. 

Despite its popularity and convenience, the potential function of \Cref{assumption_inference_exponentiated_rewards} \emph{ignores epistemic uncertainty} \cite{o2020making}. Due to this shortcoming, the inference procedure produces invalid and arbitrarily poor posteriors. It can be shown that the resulting algorithms fail even in basic bandit-like problems (\eg, \cite[Problem~1]{o2020making}), as they underestimate the probability of optimal actions and take actions which have no probability of optimality under the true posterior. This implies that the resulting agents (\eg, Soft Q-learning \cite{haarnoja2017reinforcement}) can perform poorly in simple domains that require deep exploration \cite{osband2016deep}. 
The fact that the `RL as inference' framework does not perform valid inference on the optimality variables leads us to consider how a proper Bayesian inference approach might proceed. This brings us to the first contribution of this manuscript.


\section{A Principled Bayesian Inference Approach to RL} \label{section_principled_inference}

In this section, we provide a principled Bayesian treatment of statistical inference over `optimality' variables in an MDP. Although it is the main quantity of interest of the popular `RL as inference' framework, a rigorous definition of whether a state-action pair is optimal has remained elusive. We provide a recursive one below.

\begin{definition}[State-action optimality] \label{definition_stateaction_optimality} 
For any step $l \in [L]$, state-action $(s,a) \in \Sc_l \times \Ac$, we define
\begin{align*}
    &\op_{1}(s) \coloneqq \{s_1 = s\}, \quad && \op_{l+1}(s') \coloneqq \bigcup_{s \in \Sc_l, a \in \Ac}  \op_{l}(s,a) \cap  \{s_{l+1} = s'\}, \\
    &\op_{1}(s,a) \coloneqq \op_{1}(s) \cap \{ \pi^\star_1(s) = a \}
    , &&\op_{l+1}(s',a') \coloneqq  \op_{l+1}(s')  \cap \{ \pi^\star_{l+1}(s') = a' \}.
\end{align*}
\end{definition}
In words, $\op_{l}(s)$ is the \emph{random} event that state $s$ is reached at step $l$ after executing optimal actions from steps $1$ to $l-1$, and the event $\op_{l}(s,a)$ further requires that the action $a$ is optimal at state $s$ in step $l$. Equipped with this definition, `optimality’ in an MDP flows both in a \emph{backward} way (via action optimality $\pistar$, which can be computed using dynamic programming) and in a \emph{forward} way (via state-action optimality $\op$). We illustrate this bidirectional property in the simplified graphical model of \Cref{fig-pgm-b} (which isolates a single trajectory). In general, state-action optimality encapsulates all possible trajectories leading to a given state being optimal (and there may be exponentially many of them). Thanks to the MDP structure and Bayes' rule, we show in \Cref{lemma_probopt_occupancymeasure} that the \emph{posterior probability} of state-action optimality lies in the space of occupancy measures. To simplify exposition, we first assume that the transition dynamics $P$ are known (only rewards are unknown), and we extend our analysis to unknown $P$ in \Cref{section_unknownP}.

\begin{restatable}{lemma}{lemmaproboptoccupancymeasure}\label{lemma_probopt_occupancymeasure}
    For known $P$, it holds that~~ $\pop \coloneqq  \left\{ \Prob_{\phi}\left( \op_{l} \right) \right\}_{l=1}^L \in \Lambda(P)$.
\end{restatable}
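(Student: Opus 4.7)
The plan is to verify directly that $\lambda_l(s,a) := \Prob_\phi(\op_l(s,a))$ satisfies the two defining constraints of $\Lambda(P)$. Non-negativity is automatic since these are probabilities. Everything else follows from two disjointness observations combined with the Markov property.

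First I would record the following partition facts. For each $(l,s)$ the events $\{\op_l(s,a)\}_{a\in\Ac}$ are disjoint and union to $\op_l(s)$, because ties in $\pi^\star$ are broken so that exactly one action is optimal at every state; hence $\sum_{a}\Prob_\phi(\op_l(s,a)) = \Prob_\phi(\op_l(s))$. Similarly, for each $(l,s')$ the events $\{\op_l(s,a) \cap \{s_{l+1}=s'\}\}_{(s,a)}$ are disjoint because on any realized trajectory at most one $(s_l,a_l)$ occurs, and by \Cref{definition_stateaction_optimality} they union to $\op_{l+1}(s')$.

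For the initial-distribution constraint, the base case is immediate: $\op_1(s) = \{s_1 = s\}$ gives $\Prob_\phi(\op_1(s)) = \rho(s)$, and the partition fact yields $\sum_a \Prob_\phi(\op_1(s,a)) = \rho(s)$. For the flow constraint at step $l+1$, I would chain
\[
\sum_{a'}\Prob_\phi(\op_{l+1}(s',a')) = \Prob_\phi(\op_{l+1}(s')) = \sum_{s,a}\Prob_\phi\bigl(\op_l(s,a)\cap\{s_{l+1}=s'\}\bigr),
\]
using the two partition facts above. The remaining step is to show $\Prob_\phi(\{s_{l+1}=s'\}\mid \op_l(s,a)) = P_l(s'\mid s,a)$, after which factoring the joint probability produces exactly the desired flow identity.

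The one genuinely delicate point, and the main obstacle, is this conditional-independence step: the conditioning event $\op_l(s,a)$ encodes not only $(s_l,a_l)=(s,a)$ but also the history of optimal actions leading there and the identity of the random $\pi^\star$. I would argue it as follows. Because $P$ is known (so transitions are not functions of any posterior-random quantity) and the underlying graphical model is Markov, $s_{l+1}$ is conditionally independent of the past trajectory and of the random rewards $r$ given $(s_l,a_l)$. Since $\op_l(s,a)\subseteq\{s_l=s,\,a_l=a\}$, conditioning on $\op_l(s,a)$ collapses to conditioning on $(s_l,a_l)=(s,a)$ for the purposes of computing $\Prob_\phi(s_{l+1}=s'\mid\cdot)$, yielding $P_l(s'\mid s,a)$. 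Substituting back gives $\sum_{a'}\Prob_\phi(\op_{l+1}(s',a')) = \sum_{s,a}P_l(s'\mid s,a)\,\Prob_\phi(\op_l(s,a))$, which is the flow constraint, completing the proof.
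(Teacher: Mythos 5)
Your proposal is correct and follows essentially the same route as the paper's proof: both establish the flow constraint by partitioning $\op_{l+1}(s')$ over the preceding $(s,a)$ and optimal actions, and both hinge on the identity $\Prob_\phi(\{s_{l+1}=s'\}\mid \op_l(s,a)) = P_l(s'\mid s,a)$, which the paper justifies tersely by ``$P$ is assumed known'' and you justify slightly more explicitly via the Markov property. The only cosmetic difference is that you sum over $a'$ before invoking the transition step, whereas the paper first factors $\Prob_\phi(\op_{l+1}(s',a'))$ with Bayes' rule and sums over $a'$ at the end.
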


The significance of \Cref{lemma_probopt_occupancymeasure} is that we can readily \emph{sample} from $\pop$ with the induced policy 
\begin{align}\label{eq_policy_prob_optimality}
    \pi_l(s,a) =  \frac{\Prob_{\phi}\left( \op_{l}(s,a) \right) }{\sum_{a' \in \Ac} \Prob_{\phi}\left( \op_{l}(s,a') \right)} = \Prob_{\phi}\left(  \pi^\star_l(s) = a  \mid \op_{l}(s) \right),
\end{align}
which is the probability under the posterior that action $a$ is optimal at state $s$ conditioned on all actions taken before timestep $l$ being optimal. It turns out that this policy explores efficiently, as measured by regret (see \Cref{corpopregretbound}). Concretely, we have shown that the two facets of RL (inference and control) can be distilled into a single quantity: $\pop$. We use Bayesian inference to compute $\pop$ and then use that to derive a good control policy. This is a principled and consistent Bayesian `RL as inference' approach. The remaining challenge thus lies in computing $\pop$. Unfortunately, doing so involves computing several complicated integrals with respect to the posterior and is intractable in most cases.  In \Cref{section_var_approx}, we introduce the second contribution of this manuscript, which is a way to approximate $\pop$ using a computationally tractable variational approach.

\begin{figure}[t]
\begin{minipage}{0.43\linewidth}
	\captionof{figure}{Pair of MDPs $\{ \Mc^+, \Mc^- \}$ with uniform prior $\phi = (\frac{1}{2}, \frac{1}{2})$. $\Mc^+$ and $\Mc^-$ only differ through their reward at state $s_L$, resp.\,$+1$ and $-1$. Two actions are available at states $s_1$ to $s_{L-1}$: $\downarrow$ exits the chain and moves to an absorbing, zero-reward state, and $\rightarrow$ moves right with small negative reward $-\epsilon$.}
    \label{fig:toy_MDP}
	\end{minipage}\hfill
	\begin{minipage}{0.56\linewidth}
	\vspace{-0.1in}
\centering
  \begin{tikzpicture}[scale=1.1]
    \tikzset{
      state/.style={circle, draw, minimum size=8mm},
      action/.style={->, shorten >=1pt, >=stealth, semithick},
    }

    \node[state] (s1) at (0, 0) {$s_1$};
    \node[state] (s2) at (1.5, 0) {$s_2$};
    \node[state] (sLm1) at (3.5, 0) {\!$s_{\scaleto{L-1}{4pt}}$\!};
    \node[state] (sL) at (5, 0) {$s_L$};
    \node[right,align=left]  at (sL.east) {\small $+1$ w.p.\,$0.5$ \\ \small $-1$ w.p.\,$0.5$};
    \node[right,align=left,yshift=-0.3ex]  at (s2.north east) {$-\epsilon$};
    
    \node[state] (t1) at (0.5, -1) {$s'_2$};
    \node[state] (t2) at (2, -1) {$s'_3$};
    \node[state] (tLm1) at (4, -1) {$s'_{L}$};

    \draw[action] (s1) -- node[above] {$-\epsilon$} (s2);
    \draw[action] (s1) -- node[left] {$0$} (t1);
    \draw[action] (s2) -- (sLm1) node[pos=0.5,fill=white] {~\ldots~};
    \draw[action] (sLm1) -- node[above] {$-\epsilon$} (sL);
    \draw[action] (s2) -- node[left] {$0$} (t2);
    \draw[action] (sLm1) -- node[left] {$0$} (tLm1);

  \end{tikzpicture}
\end{minipage}

\renewcommand\theadalign{bc}
\renewcommand\theadfont{}
\renewcommand\theadgape{\Gape[1pt]}
\renewcommand\cellgape{\Gape[1pt]}

\definecolor{LightGray}{gray}{0.98}
\definecolor{LightCyan}{rgb}{0.88,1,1}
\definecolor{LightRed}{rgb}{0.95,0.9,0.9}

\newcommand{\dotted}{\hspace*{-0.05in}...\hspace*{-0.05in}}

\vspace{-0.1in}

\centering
\renewcommand*{\arraystretch}{1.2}
\begin{footnotesize}
    \begin{tabular}{  c | c  c  c }
         Policy & \makecell{Agent starts at $s_1$\\$\pi_1(s_1, a)$ for $a \in \{ \downarrow, \rightarrow\}$} & \makecell{Agent reaches $s_l$, $l>1$ \\ $\pi_l(s_l, a)$ for $a \in \{ \downarrow, \rightarrow\}$} & \makecell{Expected \# episodes \\ to reach $s_L$} \\
     \hline
     Bayes-optimal & $(0, 1)$ & $(0, 1)$ & $1$ \\
     \hline 
     \makecell{\Cref{assumption_inference_exponentiated_rewards} \\ (e.g., Soft Q-Learning)} & $(0.5, 0.5)$ & $(0.5, 0.5)$ & $\Omega(2^L)$ \\
     \hline 
     Marginal $\Prob_{\phi}(\pistar)$ & $(0.5, 0.5)$ & $(0.5, 0.5)$ & $\Omega(2^L)$ \\
     \hline 
     Conditional $\Prob_{\phi}(\pistar \mid \op)$ & $(0.5, 0.5)$ & $(0, 1)$ & $2$ \\
     \hline
     Thompson Sampling & $\begin{array}{l} (0, 1)\ \mathrm{w.p.}\ 0.5 \\ (1,0)\ \mathrm{w.p.}\ 0.5 \end{array}$ & $(0, 1)$ & $2$ \\
     \hline
     \vaporalg & $(0, 1)$ & $(0, 1)$ & $1$ \\
    \end{tabular}
\end{footnotesize}
\captionof{table}{Comparison of algorithm performance on MDP in \Cref{fig:toy_MDP}. Unprincipled approaches like Soft Q-learning take time exponential in $L$ to reach the uncertain reward state. Naively using the marginal posterior probability of \emph{action optimality} as the policy at each state also requires exponential time, however the policy derived from the posterior probability of \emph{state-action optimality} $\pop$ takes \emph{consistent} actions and is therefore efficient. Similar good performance is achieved by Thompson sampling (implicit approximation of $\pop$, \Cref{section_connections}) and \vaporalg (explicit approximation of $\pop$, \Cref{section_var_approx}).}
\vspace{-0.05in}
\label{tab_toy}
\end{figure}


\subsection{Instructive Example}

To gain intuition on the difference between \emph{action optimality} (\ie, $\Prob_\phi(\pistar)$) and \emph{state-action optimality} (\ie, $\pop$), we consider the simple decision problem of \Cref{fig:toy_MDP}. Each episode starts in the leftmost state and the agent can traverse a chain of states in order to reach the rightmost state, where the reward is \emph{always} either $+1$ or $-1$ with equal probability. Once the agent reaches the rightmost state it will resolve its uncertainty. 
At each state the agent has the option to move $\rightarrow$, paying a cost of $\epsilon$ where $\epsilon L \ll 1$, or move $\downarrow$ with no cost. We see that there are only $2$ consistent trajectories: either take action $\downarrow$ at state $s_1$ (optimal for $\Mc^-$), or repeat actions $\rightarrow$ to reach $s_L$ (optimal for $\Mc^+$). We detail how a selection of algorithms perform on this problem in \Cref{tab_toy}.

It is worth examining in detail what has gone wrong for the policy that samples according to $\Prob_\phi(\pi^\star)$. Even if the agent has traveled to the right a few times it will exit the chain with probability $0.5$ at each new state, which leads to poor exploration. Taking action $\rightarrow$ a few times and then exiting the chain has zero probability of being optimal under the posterior, so this sequence of actions is \emph{inconsistent} with the posterior. Conditioning on the event $\op(s)$ when at state $s$ forces the policy to be consistent with the set of the beliefs that would get the agent to state $s$ in the first place, which, as we prove later, yields deep exploration. In our chain example, the only belief that takes action $\rightarrow$ is the one in which the end state has reward $+1$, so the only consistent policy conditioned on having taken $\rightarrow$ once is to continue all the way to the end state. The same issue arises in Thompson sampling, where in order to achieve consistency the agent samples a policy from its posterior and keeps it fixed for the entire episode since resampling at every timestep leads to inefficient exploration \cite{russo2018tutorial}. As we shall show in \Cref{section_connections}, Thompson sampling is exactly equivalent to sampling from $\pop$. In conclusion: 
\[
\begin{array}{rll}
    \mbox{\ action optimality: \ }& a_l \sim \Prob_{\phi}(\pistar_l(s_l) = \cdot)  & \Rightarrow \mbox{\color{red} Poor exploration,} \\
    \mbox{\ state-action optimality: \ } & a_l \sim \Prob_{\phi}(\pistar_l(s_l) = \cdot \mid \op_l(s_l)) \propto  \Prob_{\phi}(\op_l(s_l, \cdot)) &\Rightarrow \mbox{\color{green!50!black} Efficient exploration.}
\end{array}
\]

\section{A Variational Bayesian Approach} \label{section_var_approx}

Having access to $\pop$ is sufficient to enable deep exploration, but computing this probability is intractable in general. In this section, we derive a \emph{variational}, \ie, optimization-based, approach to approximate $\pop$. In most variational Bayesian approaches \cite{fox2012tutorial}, the optimization objective is to find a surrogate probability measure that minimizes some dissimilarity metric (\eg, a KL-divergence) from the intractable probability measure of interest. Yet unlike standard variational inference techniques, we cannot minimize the KL-divergence to this distribution as we do not have access to samples. However, we do have a downstream \emph{control} objective. That is to say, we are not simply interested in approximating the distribution, but also in our approximation doing well in the control problem. This insight provides an alternative objective that takes into account the rewards and uncertainties that the agent is likely to encounter. The optimal value function $V^\star$ is a random variable under the beliefs $\phi$. Our next lemma relates the expected value of $V^\star$ under $\phi$ to our state-action optimality event $\op$.

\begin{restatable}{lemma}{lemmadualviewRLopt}\label{lemma_dual_view_RL_opt}
    It holds that
    \begin{align*}
        \Expect_{s \sim \rho} \Expect_{\phi} V_1^\star(s) 
    = \sum_{l,s,a} \Prob_{\phi}( \op_l(s,a) )  \Expect_{\phi} \left[ r_l(s, a) \mid \op_l(s,a)  \right].
    \end{align*}
\end{restatable}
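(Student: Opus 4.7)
The plan is to decompose $V_1^\star$ as a sum of expected rewards along the optimal trajectory and then swap the two levels of randomness (the MDP draw from $\phi$ and the trajectory under the dynamics) using the tower property. Concretely, for any fixed realization of $(r,P)$ the optimal policy $\pi^\star$ is well-defined, deterministic, and induces an occupancy measure $\lambda^{\pi^\star} \in \Lambda(P)$. The standard LP representation of MDPs then gives
\begin{equation*}
    \Expect_{s \sim \rho}\, V_1^{\star}(s) \;=\; \sum_{l,s,a} \lambda^{\pi^\star}_l(s,a)\, r_l(s,a),
\end{equation*}
where the expectation on the left is only over the random trajectory (with $r, P$ fixed).

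Next I would identify the occupancy mass $\lambda^{\pi^\star}_l(s,a)$ with the conditional state-action optimality probability $\Prob( \op_l(s,a) \mid r, P )$, which follows directly from \Cref{definition_stateaction_optimality}: once $r, P$ are fixed, $\op_l(s,a)$ is precisely the event that the optimal trajectory reaches $(s,a)$ at step $l$. Taking expectation over $\phi$ and using that $P$ is known (so conditioning on $r$ suffices) yields
\begin{equation*}
    \Expect_{\phi}\Expect_{s \sim \rho} V_1^\star(s) \;=\; \sum_{l,s,a} \Expect_\phi\!\left[\,\Prob(\op_l(s,a)\mid r)\, r_l(s,a)\,\right].
\end{equation*}

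The last step is the tower property. Since $r_l(s,a)$ is $r$-measurable, we can write $\Prob(\op_l(s,a)\mid r)\, r_l(s,a) = \Expect[\,\mathbbm{1}\{\op_l(s,a)\}\, r_l(s,a)\mid r]$, so averaging over $\phi$ gives $\Expect_\phi[\mathbbm{1}\{\op_l(s,a)\}\, r_l(s,a)] = \Prob_\phi(\op_l(s,a))\,\Expect_\phi[r_l(s,a)\mid \op_l(s,a)]$ by the definition of conditional expectation. Summing over $l,s,a$ concludes the proof.

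The main obstacle is bookkeeping of the probability space: one has to be clean about the fact that $\op_l(s,a)$ lives on a joint space combining the random MDP parameters from $\phi$ and the random trajectory under $P$, and to justify that the ``outer'' $\Expect_\phi$ consistently absorbs both sources of randomness when the tower property is applied. A related but minor subtlety is that the LP identity in the first step requires $\pi^\star$ to be uniquely defined, which is guaranteed by the tie-breaking convention already adopted after \Cref{definition_stateaction_optimality}.
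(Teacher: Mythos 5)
Your proposal is correct, but it reaches the key identity $\Expect_{s \sim \rho} \Expect_{\phi} V_1^\star(s) = \sum_{l,s,a} \Expect_\phi\left[ r_l(s,a) \mathbbm{1}\{\op_l(s,a)\}\right]$ by a genuinely different route than the paper. The paper works pathwise: it defines $Z^\star$, the realized return of a single optimal rollout, proves by a backward telescoping argument that $Z^\star = \sum_{l,s,a} R_l(s,a)\mathbbm{1}\{\op_l(s,a)\}$ identically on the sample space, takes expectations, and then uses conditional independence of the reward noise and the optimality indicator given the mean rewards to replace $R_l(s,a)$ by $r_l(s,a)$. You instead condition on the MDP parameters first, invoke the standard occupancy-measure (dual LP) representation $\Expect_{s\sim\rho} V_1^\star(s) = \sum_{l,s,a}\lambda^{\pi^\star}_l(s,a) r_l(s,a)$, and identify $\lambda^{\pi^\star}_l(s,a) = \Prob(\op_l(s,a)\mid r, P)$ before integrating over $\phi$; this identification is exactly the observation the paper only makes later, in the proof of \Cref{lemma_TS_lambda}, and it does deserve the one-line induction on $l$ (the recursion in \Cref{definition_stateaction_optimality} conditional on the MDP is precisely the flow equation defining $\lambda^{\pi^\star}$), but that is a presentational omission rather than a gap. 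Your final tower-property step converting $\Expect_\phi[r_l(s,a)\mathbbm{1}\{\op_l(s,a)\}]$ into $\Prob_\phi(\op_l(s,a))\Expect_\phi[r_l(s,a)\mid\op_l(s,a)]$ coincides with the paper's. What your route buys is transparency about the connection to Thompson sampling and a clean reliance on textbook LP duality; what the paper's route buys is self-containedness (no appeal to the occupancy-measure machinery) and an explicit handling of the realized versus mean reward, which makes visible exactly where conditional independence of the reward noise is used.
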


\subsection{Information Theoretic Upper Bound}
\Cref{lemma_dual_view_RL_opt} depends on $\pop$, but the conditional expectation is not easy to deal with.
To handle this, we upper bound each $ \Expect_{\phi}[ r_l(s, a) \mid \op_l(s,a)] $ in \Cref{lemma_dual_view_RL_opt} using tools from information theory. To simplify the exposition, we consider the following standard sub-Gaussian assumption \cite{russo2016information,osband2017posterior,klearning} and we defer to \Cref{appendix_proof_DV} the treatment of the general case. We say that $X:\Omega \rightarrow \reals$ is $\upsilon$-sub-Gaussian for $\upsilon >0$ if $\Expect\exp (c (X - \Expect X)) \leq \exp ( c^2 \upsilon^2 / 2 )$, for all $c \in \reals$.

\begin{lemma}\label{lemma_DVdescription}
    If $r_l(s, a)$ is $\sigma_l(s,a)$-sub-Gaussian under $\phi$ for $l \in [L]$ and $(s,a) \in \Sc_l \times \Ac$, then
    \begin{align*}
    \Expect_{\phi} \left[ r_l(s, a) \mid \op_l(s,a)  \right] &\leq \Expect_{\phi} r_l(s, a) + \min_{\tau_l(s,a) > 0} \left( \frac{ \sigma^2_l(s,a)}{2 \tau_l(s,a)} - \tau_l(s,a) \log \Prob_{\phi}( \op_l(s,a)) \right) \\
    &= \Expect_{\phi} r_l(s, a) + \sigma_l(s,a) \sqrt{- 2  \log \Prob_{\phi}( \op_l(s,a))}. 
\end{align*}
\end{lemma}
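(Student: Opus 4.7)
The plan is to combine the Donsker--Varadhan change-of-measure inequality (equivalently, the Gibbs variational principle) with the sub-Gaussian moment generating function bound. The key observation is that for any event $E$ with $\Prob_\phi(E) > 0$, the conditional measure $\phi(\cdot \mid E)$ has KL divergence exactly $-\log \Prob_\phi(E)$ from $\phi$, since its Radon--Nikodym derivative with respect to $\phi$ is the indicator of $E$ rescaled by $1/\Prob_\phi(E)$.

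Concretely, I would invoke the Gibbs variational inequality in the form: for any $\tau > 0$, any random variable $X$, and any probability measure $Q$ absolutely continuous with respect to $P$,
\begin{align*}
\Expect_Q[X] \leq \tau \log \Expect_P[\exp(X/\tau)] + \tau \cdot \mathrm{KL}(Q \,\|\, P).
\end{align*}
Taking $P = \phi$, $Q = \phi(\cdot \mid \op_l(s,a))$, $X = r_l(s,a)$, and $\tau = \tau_l(s,a) > 0$, the KL term becomes $-\tau_l(s,a)\log \Prob_\phi(\op_l(s,a))$. For the MGF term, the $\sigma_l(s,a)$-sub-Gaussian assumption on $r_l(s,a)$ under $\phi$ gives $\log \Expect_\phi[\exp(r_l(s,a)/\tau)] \leq \Expect_\phi r_l(s,a)/\tau + \sigma_l^2(s,a)/(2\tau^2)$. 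Multiplying through by $\tau$ and isolating $\Expect_\phi r_l(s,a)$ on the right-hand side yields the first displayed inequality of the lemma.

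The closed form on the second line is a routine one-variable optimization. Writing $p = \Prob_\phi(\op_l(s,a)) \in (0,1]$ and $\sigma = \sigma_l(s,a)$, the objective $f(\tau) = \sigma^2/(2\tau) - \tau \log p$ is convex on $\tau > 0$ (since $-\log p \geq 0$); its derivative $-\sigma^2/(2\tau^2) - \log p$ vanishes at $\tau^\star = \sigma/\sqrt{-2\log p}$, where $f$ attains $\sigma\sqrt{-2\log p}$.

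The main obstacle is essentially bookkeeping: one must verify that $r_l(s,a)$ and the event $\op_l(s,a)$ live on the same probability space under the Bayesian prior $\phi$ --- both are measurable functions of the random MDP parameters (and the trajectory) --- so that the conditional expectation and the change-of-measure step are legitimate. The degenerate case $\Prob_\phi(\op_l(s,a)) = 0$ is vacuous. The general (non-sub-Gaussian) extension deferred to the appendix presumably replaces the sub-Gaussian MGF bound with a generic bound on the cumulant generating function of $r_l(s,a)$, leaving the architecture of the argument unchanged.
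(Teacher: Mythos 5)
Your proposal is correct and follows essentially the same route as the paper: both rest on the Donsker--Varadhan/Gibbs change-of-measure inequality between $\phi(\cdot \mid \op_l(s,a))$ and $\phi$, bound the relevant KL term by $-\log \Prob_{\phi}(\op_l(s,a))$, control the moment generating function via sub-Gaussianity, and optimize over the temperature $\tau$ in closed form. The only (immaterial) difference is packaging: the paper first proves a generic bound via the convex conjugate of the cumulant generating function and bounds the KL between the \emph{marginal laws} of the reward using Bayes' rule, whereas you apply the Gibbs inequality directly at scale $1/\tau$ on the underlying space, where the KL equals $-\log \Prob_{\phi}(\op_l(s,a))$ exactly.
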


\subsection{Optimization Problem}

Finally, we combine \Cref{lemma_dual_view_RL_opt,lemma_DVdescription} to reveal a concave function in $\pop$ that upper bounds the value function objective $\Expect_{s \sim \rho} \Expect_{\phi} V_1^\star(s)$. 
For any $\tau \in \realsplsasmall$ and occupancy measure $\lambda \in \Lambda(P)$, we define the $\tau$-\emph{weighted entropy} of $\lambda$ (summed over steps $l$) as $\mathcal{H}_{\tau}(\lambda) \coloneqq - \sum_{l,s,a} \tau_l(s,a) \lambda_l(s,a) \log \lambda_l(s,a)$. 
In the following definition we take division and the square-root to be applied elementwise, and $\circ$ denotes elementwise multiplication. With this we can define the following \emph{optimistic} value functions
\begin{align}
      \V_{\phi}(\lambda, \tau) &\coloneqq \lambda ^\top  \left( \Expect_\phi r  + \frac{ \sigma^2}{2 \tau} \right) + \Hc_{\tau}(\lambda), \label{eq_defV_tau} \\
      \V_{\phi}(\lambda) &\coloneqq  \min_{\tau \in \realsplsasmall} \V_{\phi}(\lambda, \tau)  = \lambda ^\top \left( \Expect_\phi r + \sigma \circ \sqrt{- 2  \log \lambda} \right).  \label{eq_defV_min_tau}
\end{align}

\begin{restatable}{lemma}{lemmavariationalknownP}\label{lemma_variational_knownP}
For known $P$ and $\sigma$-sub-Gaussian $r$, we have 
    \begin{align}\label{eq_varapprox_knownP}
   \Expect_{s \sim \rho} \Expect_{\phi} V_1^\star(s) \leq  \V_{\phi}( \pop) \leq \underbrace{ \max_{\lambda \in \Lambda(P)} \V_{\phi}(\lambda)}_{\vaporopt},
\end{align}
\vspace{-0.2in}

\noindent where the \vaporopt optimization problem is concave.
\end{restatable}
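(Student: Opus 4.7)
}

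The plan is to establish the two inequalities by chaining the preceding lemmas, and then to establish concavity of the $\vaporopt$ problem through the min-over-$\tau$ representation of $\V_\phi(\lambda)$. The work is essentially bookkeeping once the earlier lemmas are in hand.

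First, for the inequality $\Expect_{s \sim \rho} \Expect_{\phi} V_1^\star(s) \leq \V_{\phi}(\pop)$, I would start from \Cref{lemma_dual_view_RL_opt}, which decomposes the expected optimal value as a sum over $(l,s,a)$ of $\Prob_{\phi}(\op_l(s,a)) \cdot \Expect_{\phi}[r_l(s,a) \mid \op_l(s,a)]$. Then I would apply \Cref{lemma_DVdescription} termwise to bound each conditional reward expectation by $\Expect_{\phi} r_l(s,a) + \sigma_l(s,a)\sqrt{-2\log \Prob_{\phi}(\op_l(s,a))}$. Substituting $\pop_l(s,a) = \Prob_{\phi}(\op_l(s,a))$ and recognising the resulting sum as precisely $\V_{\phi}(\pop)$ per the closed form in \eqref{eq_defV_min_tau} completes this step. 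The nonnegativity of the $\sigma_l(s,a)$ weights is what makes the termwise upper bound legitimate under the outer sum.

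For the second inequality $\V_{\phi}(\pop) \leq \max_{\lambda \in \Lambda(P)} \V_{\phi}(\lambda)$, I would simply invoke \Cref{lemma_probopt_occupancymeasure}, which establishes that $\pop \in \Lambda(P)$, so $\pop$ is feasible for the maximisation and the inequality is immediate.

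For concavity of $\vaporopt$, I would use the variational representation \eqref{eq_defV_min_tau}: $\V_{\phi}(\lambda) = \min_{\tau \in \realsplsasmall} \V_{\phi}(\lambda,\tau)$. For each fixed $\tau > 0$, the function $\V_{\phi}(\lambda,\tau)$ defined in \eqref{eq_defV_tau} is the sum of a linear term in $\lambda$ and the $\tau$-weighted entropy $\Hc_{\tau}(\lambda) = -\sum_{l,s,a} \tau_l(s,a)\lambda_l(s,a)\log \lambda_l(s,a)$, which is concave in $\lambda$ since each $-x\log x$ is concave and the weights $\tau_l(s,a)$ are nonnegative. A pointwise infimum of concave functions is concave, so $\V_{\phi}(\lambda)$ is concave in $\lambda$. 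Together with the fact that $\Lambda(P)$ is a convex polytope (it is cut out by linear equalities and nonnegativity), this makes $\vaporopt$ a concave maximisation over a convex set.

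The main obstacle, if any, is the concavity of the closed-form expression $\lambda \mapsto \sum_{l,s,a} \sigma_l(s,a)\lambda_l(s,a)\sqrt{-2\log \lambda_l(s,a)}$, which would be awkward to verify directly by computing second derivatives term-by-term. The min-over-$\tau$ representation sidesteps this entirely by exhibiting the function as an infimum of manifestly concave functions, so I would rely on that rather than any direct Hessian argument.
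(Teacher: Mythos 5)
Your proposal is correct, and for the two inequalities it is exactly the paper's argument: the first inequality chains \Cref{lemma_dual_view_RL_opt} with the termwise bound of \Cref{lemma_DVdescription} (the nonnegativity of the weights $\Prob_\phi(\op_l(s,a))$ making the termwise substitution legitimate), and the second follows from feasibility of $\pop$ via \Cref{lemma_probopt_occupancymeasure}. The one place you diverge is the concavity claim: the paper asserts concavity of the objective directly from concavity of the scalar map $x \in (0,1) \mapsto x\sqrt{-\log x}$, whereas you obtain it by writing $\V_\phi(\lambda) = \min_{\tau} \V_\phi(\lambda,\tau)$ and observing that each $\V_\phi(\cdot,\tau)$ is linear plus a nonnegatively weighted entropy, hence concave, so the pointwise infimum is concave. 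Both are valid; your route buys a cleaner argument that avoids any second-derivative check of $x\sqrt{-\log x}$, at the cost of relying on the identity $\min_{\tau > 0}\{a/\tau + b\tau\} = 2\sqrt{ab}$ underlying \eqref{eq_defV_min_tau}, which the paper has already established in \Cref{lemma_DVdescription}. One small point the paper includes that you omit: it also notes that the maximum in the \vaporopt problem is attained because $\Lambda(P)$ is closed, bounded, and nonempty (nonemptiness again witnessed by $\pop$), which is needed for the $\max$ in \eqref{eq_varapprox_knownP} to be well defined.
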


The above optimization problem is our \emph{variational} objective, the solution of which yields an occupancy measure that approximates $\pop$. We call the approach `\vaporopt', for \underline{v}ariational \underline{a}pproximation of the posterior \underline{p}robability of \underline{o}ptimality in \underline{R}L. We discuss various properties of the optimization problem in \Cref{app_properties_optim_problem} (its unconstrained dual problem and message passing interpretation).

Since we are using variational inference a natural question to ask is how well our variational solution approximates $\pop$. Here we provide a bound quantifying the dissimilarity between $\pop$ and the solution of the \vaporopt optimization problem, according to a weighted KL-divergence, where for any $\tau \in \realsplsasmall$, $\lambda, \lambda' \in \Lambda(P)$ we define $\KLweighted{\tau}{\lambda }{ \lambda'} \coloneqq \sum_{l,s,a} \tau_l(s,a) \lambda_l(s,a) \log(\lambda_l(s,a) / \lambda'_l(s,a))$.

\begin{restatable}{lemma}{lemmaBoundWeightedKl}\label{lemmaBoundWeightedKl} Let $\lambda^\ast$ solve the \vaporopt optimization problem \eqref{eq_varapprox_knownP} and $\tau^\ast \in \argmin_{\tau} \V_{\phi}(\lambda^\ast, \tau)$, then
\begin{align*}
    \KLweighted{\tau^\ast}{\pop}{\lambda^\ast} \leq \V_\phi(\lambda^\ast) - \Expect_{s \sim \rho} \Expect_{\phi} V_1^\star(s).
\end{align*}
\end{restatable}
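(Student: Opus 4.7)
The strategy is to bound $\Expect_{s \sim \rho}\Expect_\phi V_1^\star(s)$ from above by an expression built from $\pop$ and $\tau^\ast$, and to recognise the weighted KL as the gap to $\V_\phi(\lambda^\ast)$ that is forced by concavity plus first-order optimality. The natural entry point is to apply Lemma~\ref{lemma_DVdescription} pointwise with the \emph{specific} choice $\tau = \tau^\ast$ (rather than its pop-optimal value), multiply by $\pop_l(s,a)$, and sum via Lemma~\ref{lemma_dual_view_RL_opt}. This yields
\[
\Expect_{s\sim \rho}\Expect_\phi V_1^\star(s) \;\le\; \sum_{l,s,a}\pop_l(s,a)\!\left(\Expect_\phi r_l(s,a)+\tfrac{\sigma_l^2(s,a)}{2\tau^\ast_l(s,a)}-\tau^\ast_l(s,a)\log \pop_l(s,a)\right)=\V_\phi(\pop,\tau^\ast).
\]
Since $\V_\phi(\lambda^\ast)=\V_\phi(\lambda^\ast,\tau^\ast)$ by definition of $\tau^\ast$, this already reduces the claim to showing $\V_\phi(\lambda^\ast,\tau^\ast)-\V_\phi(\pop,\tau^\ast)\ge \KLweighted{\tau^\ast}{\pop}{\lambda^\ast}$.

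Next I would algebraically expand the difference. Using the identity $\pop\log\pop-\lambda^\ast\log\lambda^\ast=\pop\log(\pop/\lambda^\ast)+(\pop-\lambda^\ast)\log\lambda^\ast$ elementwise, and collecting terms, one obtains
\[
\V_\phi(\lambda^\ast,\tau^\ast)-\V_\phi(\pop,\tau^\ast)=\KLweighted{\tau^\ast}{\pop}{\lambda^\ast}+\langle \lambda^\ast-\pop,\,\Expect_\phi r+\tfrac{\sigma^2}{2\tau^\ast}-\tau^\ast\log\lambda^\ast\rangle,
\]
and the bracketed vector is exactly $\nabla_\lambda \V_\phi(\lambda^\ast,\tau^\ast)+\tau^\ast$. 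By the envelope theorem, $\nabla_\lambda \V_\phi(\lambda^\ast,\tau^\ast)$ coincides with $\nabla \V_\phi(\lambda^\ast)$, so first-order optimality at the (concave) vaporopt program over the convex set $\Lambda(P)$ gives $\langle \nabla \V_\phi(\lambda^\ast),\lambda^\ast-\pop\rangle\ge 0$.

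The main obstacle is the leftover linear piece $\sum_{l,s,a}\tau^\ast_l(s,a)(\lambda^\ast-\pop)_l(s,a)$: even though $\sum_{s,a}\lambda^\ast_l(s,a)=\sum_{s,a}\pop_l(s,a)=1$ at each step, the $(s,a)$-dependent weights $\tau^\ast_l(s,a)$ prevent this term from telescoping to zero and its sign is a priori indefinite. To absorb it I would sharpen step~1 by using the tightest form of Lemma~\ref{lemma_DVdescription} (the min-in-$\tau$ version) to get $\Expect V_1^\star\le \V_\phi(\pop)\le \V_\phi(\pop,\tau^\ast)$; the slack $\V_\phi(\pop,\tau^\ast)-\V_\phi(\pop)$ is easily computed to equal $\sum_{l,s,a}\tau^\ast_l(s,a)\pop_l(s,a)\bigl(\sqrt{-2\log\lambda^\ast_l(s,a)}-\sqrt{-2\log\pop_l(s,a)}\bigr)^2/2\ge 0$. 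I would then show, exploiting $\sum_{s,a}\lambda^\ast_l(s,a)=\sum_{s,a}\pop_l(s,a)$ together with the convex inequality $e^{-t}-1+w^2/2\ge 0$ applied coordinate-wise (with $t_l(s,a)=\log(\pop_l(s,a)/\lambda^\ast_l(s,a))$ and $w_l(s,a)=\sqrt{-2\log\lambda^\ast_l(s,a)}-\sqrt{-2\log\pop_l(s,a)}$), that the quadratic slack dominates the indefinite linear correction in aggregate, completing the proof.
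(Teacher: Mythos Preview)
Your plan parallels the paper's argument closely: the paper also writes $\V_\phi(\lambda)=\langle\lambda,\Expect_\phi r\rangle-\Omega(\lambda)$ with a strictly convex $\Omega$, observes that the entropic barrier forces $\lambda^\ast\in\mathrm{relint}(\Lambda(P))$ so that first-order optimality holds as an \emph{equality}, and deduces the exact Bregman identity $D_\Omega(\pop,\lambda^\ast)=\V_\phi(\lambda^\ast)-\V_\phi(\pop)$. It then repeats this with the $\tau$-weighted negentropy $\Omega_\tau(\lambda)=\sum_{l,s,a}\tau_l(s,a)\lambda_l(s,a)\log\lambda_l(s,a)$, uses strong duality ($\min_\tau\max_\lambda=\max_\lambda\min_\tau$) to conclude that at $\tau=\tau^\ast$ the maximiser is again $\lambda^\ast$, and obtains $D_{\Omega_{\tau^\ast}}(\pop,\lambda^\ast)=\V_\phi(\lambda^\ast)-\V_\phi(\pop,\tau^\ast)\le\V_\phi(\lambda^\ast)-\V_\phi(\pop)$; it finishes by identifying the Bregman divergence of weighted negentropy with the weighted KL. Your expansion is exactly this Bregman computation unpacked by hand, and the residual $\langle\lambda^\ast-\pop,\tau^\ast\rangle$ you isolate is precisely the difference between $D_{\Omega_{\tau^\ast}}(\pop,\lambda^\ast)$ and $\KLweighted{\tau^\ast}{\pop}{\lambda^\ast}$ as defined in the paper.

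Where your proposal has a genuine gap is the final ``absorption'' step. The coordinate-wise inequality you invoke, written in the variables $\lambda^\ast_l(s,a)=e^{-u^2/2}$, $\pop_l(s,a)=e^{-v^2/2}$, amounts to $\lambda^\ast-\pop+\tfrac12\pop\,(u-v)^2\ge 0$, and this is \emph{false} pointwise (take $v=1$, $u=1.1$: the left-hand side is about $-0.057$). The constraint $\sum_{s,a}(\lambda^\ast-\pop)_l=0$ does not rescue the aggregate either, because the weights $\tau^\ast_l(s,a)$ depend on $(s,a)$ and do not factor through the per-step marginal. So the claim that ``the quadratic slack dominates the indefinite linear correction in aggregate'' is unsubstantiated as stated. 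The paper does not try to dominate this residual; it instead works directly with the Bregman divergence of weighted negentropy and appeals to the KL--Bregman identification. If you want to follow the paper's route, replace your last paragraph by: invoke strong duality to ensure $\lambda^\ast\in\arg\max_\lambda\V_\phi(\lambda,\tau^\ast)$, then your expansion (with the first-order term taken as an equality at the interior optimum) is exactly the Bregman identity $D_{\Omega_{\tau^\ast}}(\pop,\lambda^\ast)=\V_\phi(\lambda^\ast)-\V_\phi(\pop,\tau^\ast)$, and bound the right-hand side by $\V_\phi(\lambda^\ast)-\V_\phi(\pop)\le\V_\phi(\lambda^\ast)-\Expect_{s\sim\rho}\Expect_\phi V_1^\star(s)$.
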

In other words, the (weighted) KL-divergence between $\pop$ and $\lambda^*$ is upper bounded by how loose our upper bound in \eqref{eq_varapprox_knownP} is. In practice however, we care less about a bound on the divergence metric than we do about performance in the control problem. Our next result shows that the variational policy also satisfies a strong sub-linear Bayesian regret bound, \ie, the resulting policy performs well.

\subsection{Bayesian Regret Analysis} \label{subsection_no_regret_algorithm}

\paragraph{Learning problem.} We consider that the agent interacts with the MDP $\Mc$ over a (possibly unknown) number of $N$ episodes. We denote by $\Fc_t$ the sigma-algebra generated by all the history (\ie, sequences of states, actions and rewards) before episode $t$, with $\Fc_1 = \emptyset$. We let $\Expect^t[\cdot] = \Expect[\cdot \mid \Fc_t]$, $\Prob^t(\cdot) = \Prob(\cdot \mid \Fc_t)$. We denote by $n_l^t(s,a)$ the visitation count to $(s,a)$ at step $l$ before episode $t$, and $(\cdot \vee 1) \coloneqq \max(\cdot,1)$. In the Bayesian approach where $\Mc$ is sampled from a known prior $\phi$, we want to minimize the \emph{Bayes regret} over $T \coloneqq N L$ timesteps of an algorithm \alg producing policy $\pi^t$ at each episode $t$, which is defined as its expected regret under that prior distribution 
\begin{align*}
    \regret_{\mathcal{M}}(\alg, T) \coloneqq \sum_{t=1}^{N} \Expect_{s \sim \rho} \left( V_1^{\star}(s) - V_1^{\pi^t}(s) \right), \qquad\bayesregret_{\phi}(\alg, T) \coloneqq \Expect_{\mathcal{M} \sim \phi} \regret_{\mathcal{M}}(\alg, T).
\end{align*}

The \vaporalg learning algorithm proceeds as follows: at the beginning of each episode, it solves the \vaporopt optimization problem and executes the induced policy. We now show that it enjoys a sub-linear Bayesian regret bound under the following standard assumption \cite{osband2017posterior,klearning,osband2019deep}.

\begin{assumption}\label{assumption_1}
    The mean rewards are bounded in $[0, 1]$ almost surely with independent priors and the reward noise is additive $\nu$-sub-Gaussian for a constant $\nu > 0$.
\end{assumption}

\Cref{assumption_1} implies that the mean rewards are sub-Gaussian under the posterior (see \cite[App.\,D.2]{russo2016information}), where we can upper bound the sub-Gaussian parameter $\sigma_l^t(s,a) \leq \sqrt{(\nu^2 + 1) /(n_l^t(s,a) \vee 1)}$ at the beginning of each episode $t$.

\begin{restatable}{theorem}{thmnopregretbound}\label{thm_no_p_regret_bound} For known $P$ and under \Cref{assumption_1}, it holds that
\begin{align*}
      \bayesregret_{\phi}(\vaporalg, T) &\leq \sqrt{2 (\nu^2 + 1) T S A \log(SA) (1 + \log T/L)} = \widetilde{O}(\sqrt{S A T}).  
    \end{align*}
\end{restatable}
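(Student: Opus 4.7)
The plan is to perform a per-episode Bayes regret decomposition that leverages the variational upper bound of \Cref{lemma_variational_knownP} applied under the posterior at each episode, and then control the resulting sum with Cauchy--Schwarz together with a standard pigeonhole argument on visitation counts.

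Concretely, fix an episode $t$ with posterior $\phi^t$ and let $\lambda^t \in \arg\max_{\lambda \in \Lambda(P)} \V_{\phi^t}(\lambda)$ be the \vaporopt solution inducing policy $\pi^t$. Since $P$ is known, the occupancy measure of $\pi^t$ is exactly $\lambda^t$, so $\Expect^t \Expect_{s \sim \rho} V_1^{\pi^t}(s) = \sum_{l,s,a} \lambda_l^t(s,a)\, \Expect^t r_l(s,a)$. Applying \Cref{lemma_variational_knownP} under $\phi^t$ yields $\Expect^t \Expect_{s \sim \rho} V_1^\star(s) \le \V_{\phi^t}(\lambda^t)$. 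Subtracting the two using the closed form \eqref{eq_defV_min_tau} produces the per-episode bound
\begin{align*}
\Expect^t \Expect_{s \sim \rho}\!\left( V_1^\star(s) - V_1^{\pi^t}(s) \right) \leq \sum_{l,s,a} \sigma_l^t(s,a)\, \lambda_l^t(s,a)\, \sqrt{-2 \log \lambda_l^t(s,a)},
\end{align*}
and the tower property then gives $\bayesregret_{\phi}(\vaporalg, T) \leq \Expect \sum_{t,l,s,a} \sigma_l^t \lambda_l^t \sqrt{-2 \log \lambda_l^t}$.

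Applying Cauchy--Schwarz (once over all $(t,l,s,a)$) and Jensen's inequality to push expectations inside square roots separates this into
\begin{align*}
\sqrt{\Expect \sum_{t,l,s,a} (\sigma_l^t)^2 \lambda_l^t}\, \cdot\, \sqrt{\Expect \sum_{t,l,s,a} -2 \lambda_l^t \log \lambda_l^t}.
\end{align*}
The entropy factor is straightforward: for each fixed $(t,l)$ the vector $\lambda_l^t$ is a distribution over $S_l A$ pairs (since $\sum_{s,a} \lambda_l^t(s,a)=1$), so each slice contributes at most $\log(SA)$, yielding a total bound of $2 N L \log(SA) = 2 T \log(SA)$. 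For the variance factor, plug in the posterior sub-Gaussian bound $(\sigma_l^t(s,a))^2 \leq (\nu^2+1)/(n_l^t(s,a) \vee 1)$ from \Cref{assumption_1}. Since $n_l^t(s,a)$ is $\Fc_t$-measurable, the tower property converts $\Expect[\lambda_l^t(s,a)/(n_l^t(s,a) \vee 1)]$ into $\Expect[\mathbbm{1}\{s_l^t = s, a_l^t = a\}/(n_l^t(s,a) \vee 1)]$, and the standard potential argument $\sum_{t=1}^N \mathbbm{1}_t/(n^t \vee 1) \leq 1 + \log N$ (with $N = T/L$) bounds this by $1 + \log(T/L)$ per triple. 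Summing over the $SA$ triples and combining the two factors gives
\begin{align*}
\bayesregret_{\phi}(\vaporalg, T) \leq \sqrt{2(\nu^2+1)\, T\, SA\, \log(SA)\, (1 + \log(T/L))},
\end{align*}
as advertised.

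The main obstacle is the measurability juggling at the step converting the variance factor into a counts-based quantity: one must carefully exploit that $\sigma_l^t$ and $n_l^t$ are $\Fc_t$-measurable while the posterior-expected occupancy $\lambda_l^t(s,a)$ matches the conditional expectation of the visitation indicator, so that the pigeonhole-style potential bound applies. Everything else is a clean consequence of the variational inequality of \Cref{lemma_variational_knownP}, Cauchy--Schwarz, and the elementary entropy bound for distributions on $S_l A$ atoms.
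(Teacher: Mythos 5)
Your proof is correct and follows essentially the same route as the paper's: the paper merely factors the argument through a generic regret lemma (\Cref{thm_generic_regret_bound}), but the content — per-episode domination via \Cref{lemma_variational_knownP} under the posterior $\phi^t$, the Cauchy--Schwarz split into an entropy factor bounded by $\log(SA)$ per $(t,l)$-slice and an uncertainty factor controlled by the pigeonhole principle on visitation counts with $\sigma_l^t(s,a)^2 \leq (\nu^2+1)/(n_l^t(s,a)\vee 1)$ — is identical. The only nit is that separating $\Expect[\sqrt{A}\sqrt{B}]$ into $\sqrt{\Expect A}\,\sqrt{\Expect B}$ is a second application of Cauchy--Schwarz rather than Jensen, which is exactly how the paper handles that step.
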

In the above $\widetilde O$ suppresses log factors. The same regret bound holds for the (intractable) algorithm that uses \eqref{eq_policy_prob_optimality} as the policy each episode.
\begin{restatable}{corollary}{corpopregretbound}\label{corpopregretbound} Denote by $\mathrm{alg}_{\op}$ the algorithm that produces policies based on $\pop^t$ for each episode $t$ using \eqref{eq_policy_prob_optimality}. Then,
under the same conditions as \Cref{thm_no_p_regret_bound}, we have $\bayesregret_{\phi}(\mathrm{alg}_{\op}, T) \leq \widetilde{O}(\sqrt{S A T})$.
\end{restatable}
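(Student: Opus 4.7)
The plan is to mimic the Bayesian regret analysis of \Cref{thm_no_p_regret_bound} almost verbatim, with the single substitution of the \vaporopt maximizer $\lambda^{\ast,t}$ by $\pop^t$. The crucial observation that unlocks this is that \Cref{lemma_variational_knownP} actually delivers a two-sided chain: $\Expect_{s \sim \rho}\Expect_{\phi}^t V_1^\star(s) \leq \V_{\phi}^t(\pop^t) \leq \max_{\lambda \in \Lambda(P)} \V_{\phi}^t(\lambda)$. The left inequality says that $\pop^t$ is itself an \emph{optimistic} feasible point, even though it is not the variational argmax. This is the only thing we actually need from the variational machinery in the regret argument, since optimism is what controls the ``estimation'' part of the decomposition.

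\textbf{Decomposition and bonus computation.} For each episode $t$, I would write the per-episode regret as
\begin{align*}
\Expect^t\bigl[\Expect_{s\sim\rho} V_1^\star(s) - \Expect_{s\sim\rho} V_1^{\pi^t}(s)\bigr]
= \underbrace{\Expect^t\bigl[\Expect_{s\sim\rho}V_1^\star(s) - \V_\phi^t(\pop^t)\bigr]}_{\leq 0 \text{ by Lem.\,\ref{lemma_variational_knownP}}}
+ \Expect^t\bigl[\V_\phi^t(\pop^t) - \Expect_{s\sim\rho}V_1^{\pi^t}(s)\bigr].
\end{align*}
For the second term, the essential step is that the policy $\pi^t$ defined by \eqref{eq_policy_prob_optimality} has occupancy measure \emph{exactly} $\pop^t$ under the true $P$. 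This follows from \Cref{lemma_probopt_occupancymeasure} ($\pop^t \in \Lambda(P)$) combined with the policy/occupancy correspondence stated in the preliminaries. Consequently $\Expect_{s\sim\rho}\Expect_\phi^t V_1^{\pi^t}(s) = (\pop^t)^\top \Expect_\phi^t r$, and using the closed form \eqref{eq_defV_min_tau},
\begin{align*}
\V_\phi^t(\pop^t) - \Expect_{s\sim\rho}\Expect_\phi^t V_1^{\pi^t}(s) = (\pop^t)^\top \bigl(\sigma^t \circ \sqrt{-2\log \pop^t}\bigr).
\end{align*}

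\textbf{Summing over episodes.} The remaining task is to bound $\sum_{t=1}^N \Expect^t[(\pop^t)^\top(\sigma^t\circ\sqrt{-2\log \pop^t})]$ by $\widetilde O(\sqrt{SAT})$. I would apply Cauchy--Schwarz over $(l,s,a)$ to split this into $\sqrt{(\pop^t)^\top (\sigma^t)^2}\cdot \sqrt{-2(\pop^t)^\top \log \pop^t}$, bound the entropy factor by $\sqrt{2L\log(SA)}$ using that each $\pop^t_l(\cdot,\cdot)$ sums to at most a quantity controlled by $\rho$, then use \Cref{assumption_1} to substitute $(\sigma^t_l(s,a))^2 \leq (\nu^2+1)/(n_l^t(s,a)\vee 1)$. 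A second Cauchy--Schwarz over $t$ combined with the identity $\Expect^t[\mathbbm{1}\{s_l^t=s, a_l^t=a\}] = \pop^t_l(s,a)$ and the harmonic-visit bound $\sum_{t}\pop^t_l(s,a)/(n_l^t(s,a)\vee 1) \leq O(\log(n_l^N(s,a)\vee 1))$ yields the desired $\widetilde O(\sqrt{SAT})$ rate, reproducing the constants in \Cref{thm_no_p_regret_bound}.

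\textbf{Main obstacle.} There is no conceptually new obstacle; the proof is essentially a recycling of the \vaporalg analysis. The one point requiring care is the identity $\lambda^{\pi^t} = \pop^t$, which depends on the conditional formula \eqref{eq_policy_prob_optimality} and the fact that $\pop^t \in \Lambda(P)$ from \Cref{lemma_probopt_occupancymeasure}; once this is in hand, the bonus has the same structural form as in \vaporalg and all counting arguments carry over without change.
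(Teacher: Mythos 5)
Your proposal is correct and follows essentially the same route as the paper: the paper packages the common argument into a generic regret lemma whose only hypothesis on the policy's occupancy measure is the optimism condition $\Expect_{s\sim\rho}\Expect_{\phi}V_1^\star(s)\leq \V_{\phi^t}(\Expect^t\lambda^{\pi^t})$, verified for $\pop^t$ via the first inequality of Lemma~\ref{lemma_variational_knownP} together with the identity $\lambda^{\pi^t}=\pop^t$ from Lemma~\ref{lemma_probopt_occupancymeasure}. Your inline re-derivation (Cauchy--Schwarz split into an entropy factor and a count-based uncertainty factor, then the pigeonhole bound) reproduces exactly the steps of that generic lemma, so there is no substantive difference.
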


\subsection{Interpretation of \vaporopt} \label{section_interpretation_algo}

Inspecting $\V_{\phi}$ \eqref{eq_defV_min_tau} reveals that the \vaporopt optimization problem is equivalent to solving a two-player zero-sum game between a `policy' player $\lambda \in \Lambda(P)$ and a `temperature' player $\tau \in \realsplsasmall$. The latter finds the tightest $\tau$ that best balances two exploration mechanisms on a per state-action basis: an \emph{optimism} term $\sigma^2/ \tau$ that augments the expected reward $\Expect_{\phi} r$, and a $\tau$-weighted \emph{entropy regularization} term. The policy player maximizes the entropy-regularized optimistic reward.

The fact that entropy regularization falls out naturally from \vaporopt is interesting because the standard `RL as inference' framework conveniently reveals policy entropy regularization (\Cref{section_previous_inference}), which has been widely studied theoretically \cite{neu2017unified, geist2019theory} and empirically for deep RL exploration  \cite{mnih2016asynchronous,haarnoja2018soft}. The specificity here is that it is with respect to the \emph{occupancy measure} instead of the policy, and it is \emph{adaptively weighted for each state-action pair}. This enables us to obtain an `RL as inference' framework with entropy regularization that explores provably efficiently. Regularizing with the (weighted) entropy of the occupancy measure is harder to implement in practice, therefore in \Cref{sec_vaporlite} we propose a principled, albeit looser, upper bound of the \vaporopt optimization problem that regularizes the optimistic reward with only the (weighted) entropy of the \emph{policy}, making the resulting objective amenable to online optimization with policy gradients.

\section{Extension to Unknown Transition Dynamics} \label{section_unknownP}

So far we have focused on the special case of known transition dynamics $P$. In this section, we derive a generic reduction from the case of unknown $P$ to known $P$, which may be of independent interest. We prove that $\emph{any}$ algorithm that enjoys a Bayesian regret bound in the known-$P$ special case can be easily converted into one that enjoys a regret bound for the more challenging unknown-$P$ case. Our analysis relies on the mean rewards being sub-Gaussian under the posterior (which holds \eg, under \Cref{assumption_1}) and on the following standard assumption \cite{osband2017posterior,klearning,osband2019deep}.

\begin{assumption}\label{assumption_2}
     The transition functions are independent Dirichlet under $\phi$, with parameter $\alpha_l(s,a) \in \reals_+^{S_{l+1}}$ for each $(s,a) \in \Sc_l \times \Ac$ with $\sum_{s' \in \Sc_{l+1}} \alpha_l(s,a,s') \geq 1$.
\end{assumption}

At a high level, our reduction transfers the uncertainty on the transitions to additional uncertainty on the rewards in the form of carefully defined zero-mean Gaussian noise. It extends the reward perturbation idea of the RLSVI algorithm \cite{osband2019deep}, by leveraging a property of Gaussian-Dirichlet optimism \cite{osband2017gaussian} and deriving a new property of Gaussian sub-Gaussian optimism (\Cref{app_gaussian_subgaussian_so}) which allows to bypass the assumption of binary rewards in $\{0,1\}$ from \cite[Asm.\,3]{osband2019deep}. Our reduction implies the following `dominance' property of the expected $V^\star$ under the transformed and original beliefs.

\begin{restatable}{lemma}{lemmareductionV}\label{lemma_reduction_V}
Define the mapping $\posteriormap: \phi \mapsto \modifphi$ that transforms the beliefs $\phi$ into a distribution $\modifphi$ on the same space, with transition dynamics equal to $\Expect_\phi P$ and rewards distributed as $\Nc(\Expect_\phi r, \modif\sigma^2)$ with

\begin{minipage}{0.5\linewidth}
\vspace{-0.15in}
\begin{align*}
    \hspace{-0.05in}\modif\sigma_l^2(s,a) \coloneqq 3.6^2 \sigma_l^2(s,a) + \frac{(L-l)^2}{\sum_{s'} \alpha_l(s,a,s')}.
\end{align*}
Then it holds that 
    \begin{align*}
    \Expect_{s \sim \rho} \Expect_{\phi} V_1^\star(s) \leq \Expect_{s \sim \rho} \Expect_{\modif\phi} V_1^\star(s).
\end{align*}
\end{minipage}\hfill
\begin{minipage}{0.48\linewidth}
\vspace{0.05in}
\renewcommand*{\arraystretch}{2.0}
\begin{small}
\begin{normalfont}
         \begin{tabular}{ c | c | c }
       & \makecell{Original \\ beliefs $\phi$} & \makecell{Transformed \\ beliefs $\modifphi$} \\
      \hline
      \hspace*{-0.05in}Mean reward $r$\hspace*{-0.05in} & \makecell{$\sigma$-sub-Gaussian 
      } 
      & \makecell{$\Nc\left(\Expect_{\phi} r, \modif\sigma^2 \right)$ 
      }
      \\
      \hline
     \hspace*{-0.05in}Transitions $P$\hspace*{-0.05in} &  \makecell{$\textrm{Dirichlet}(\alpha)$ 
     } 
     & \makecell{$\Expect_\phi P$ 
     } \\
    \end{tabular}
\end{normalfont}
\end{small}
\label{tab_modified}
\end{minipage}
\end{restatable}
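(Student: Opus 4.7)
The plan is to decompose the transformation $\posteriormap$ into two successive intermediate steps and invoke a convex stochastic-optimism argument at each one, combining them through the convolution property of Gaussians. I introduce an intermediate belief $\phi'$ that keeps the Dirichlet transitions of $\phi$ but replaces each sub-Gaussian reward prior by $\Nc(\Expect_\phi r, 3.6^2 \sigma^2)$; then I pass from $\phi'$ to $\modifphi$ by collapsing the transitions to $\Expect_\phi P$ while injecting an additional, independent Gaussian reward perturbation of variance $(L-l)^2 / \sum_{s'} \alpha_l(s,a,s')$. Because two independent Gaussians convolve to a Gaussian with summed variances, the rewards under $\modifphi$ become $\Nc(\Expect_\phi r, \modif\sigma^2)$ as claimed, so it suffices to show
\begin{align*}
    \Expect_{s \sim \rho} \Expect_\phi V_1^\star(s) \leq \Expect_{s \sim \rho} \Expect_{\phi'} V_1^\star(s) \leq \Expect_{s \sim \rho} \Expect_{\modifphi} V_1^\star(s).
\end{align*}

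For the first inequality, the key observation is that for any fixed transition kernel, $V_1^\star$ is a maximum over policies of a quantity linear in the reward vector, hence convex in $r$. This reduces the claim to the coordinatewise statement that for any $\sigma$-sub-Gaussian random variable $X$ with mean $\mu$ and $Y \sim \Nc(\mu, 3.6^2 \sigma^2)$, one has $\Expect f(X) \leq \Expect f(Y)$ for every convex $f$. I would isolate this as the auxiliary lemma advertised in \Cref{app_gaussian_subgaussian_so}, proving it by comparing the moment generating function of $X$ on both sides of the mean against that of $Y$ and using a tail-comparison/Chernoff argument to pin down the numerical inflation constant. Independence of the rewards across $(l,s,a)$ under \Cref{assumption_1} lets me apply the scalar comparison one coordinate at a time to the jointly convex $V_1^\star$ via iterated conditioning.

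For the second inequality, I would invoke the Gaussian--Dirichlet stochastic-optimism fact of Osband and Van Roy (2017) layer by layer through a backward induction on $l$. Suppose inductively that the desired dominance holds starting from step $l+1$; then at step $l$ the Bellman backup evaluates $Q_l^\star(s,a) = r_l(s,a) + P_l(\cdot \mid s,a)^\top V_{l+1}^\star$. Under \Cref{assumption_1} one has $V_{l+1}^\star \in [0, L-l]$, so the Dirichlet-distributed linear functional $P_l(\cdot \mid s,a)^\top V_{l+1}^\star$ is convex-stochastically dominated by a Gaussian centered at $\Expect_\phi P_l(\cdot \mid s,a)^\top V_{l+1}^\star$ with variance $(L-l)^2 / \sum_{s'} \alpha_l(s,a,s')$, and the Gaussian component already in $\phi'$'s reward absorbs this compensation by convolution. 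Taking $\max_a$ and expectations propagates the optimism to $V_l^\star$, closing the induction.

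The main obstacle is the first step. The Gaussian--Dirichlet comparison is a relatively clean and established calculation, but the sub-Gaussian-to-Gaussian convex dominance with an explicit universal constant is new, and pinning down the sharp value $3.6$ is delicate because sub-Gaussianity is only a one-sided MGF bound and does not immediately control expectations of arbitrary convex functions. I expect the cleanest route is to bound an arbitrary convex $f$ by a quadratic upper envelope plus a tail remainder, control the remainder via the sub-Gaussian tail inequality, and then optimize the variance-inflation factor so that the remainder is absorbed into the Gaussian's second moment; verifying that a universal constant suffices and computing its numerical value is the technical heart of the reduction and is what motivates isolating the argument in \Cref{app_gaussian_subgaussian_so}.
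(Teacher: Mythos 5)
Your first reduction (Gaussianizing the rewards while keeping the Dirichlet transitions) is sound: for fixed $P$ the map $r \mapsto V_1^\star$ is a maximum of linear functions with nonnegative coefficients, hence convex and coordinatewise increasing, and the coordinatewise swap against $\Nc(\Expect_\phi r, 3.6^2\sigma^2)$ is exactly the Gaussian/sub-Gaussian stochastic-optimism lemma the paper isolates in \Cref{app_gaussian_subgaussian_so} (its proof goes through a sublevel-set/tail comparison $\Expect f(Z) = \int_0^\infty \Prob(f(Z)\geq\epsilon)\,d\epsilon$ together with the scaling inequality $s\,g(x) \leq g(sx)$ for convex $g$ with $g(0)=0$, rather than an MGF argument, but your route is plausible). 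The genuine gap is in your second step. You invoke the Gaussian--Dirichlet optimism of \Cref{lemma_dirichletgaussian} with variance $(L-l)^2/\sum_{s'}\alpha_l(s,a,s')$, which requires $\mathrm{Span}(V_{l+1}^\star) \leq L-l$. That bound holds under the \emph{original} beliefs $\phi$, where \Cref{assumption_1} keeps the mean rewards in $[0,1]$. But your induction compares $\phi'$ to $\modifphi$, so the fixed realization of $V_{l+1}^\star$ entering the Dirichlet inner product is drawn under $\phi'$ --- and under $\phi'$ the rewards are already Gaussian, hence unbounded, so $V_{l+1}^\star \in [0, L-l]$ fails almost surely and no fixed variance of the form $\mathrm{Span}^2/\sum\alpha$ dominates the Dirichlet term. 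Your sentence ``Under \Cref{assumption_1} one has $V_{l+1}^\star \in [0,L-l]$'' silently conflates the two measures.

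The paper sidesteps this by \emph{not} factoring the transformation into two sequential belief changes: it runs a single backward induction on stochastic Bellman operators, proving at each layer that $\modif{\Bc}_l Q_{l+1} \mid \modifphi \geq_{SO} \Bc_l Q_{l+1} \mid \phi$ for fixed $Q_{l+1}$ with $\mathrm{Span}(Q_{l+1}) \leq L-l$, and in the induction the Gaussian--Dirichlet comparison is always applied to $Q_{l+1} = \Bc_{l+1}\cdots\Bc_L 0$, i.e.\ a value function computed under the original bounded-reward beliefs, so the span hypothesis is available; a separate monotonicity property then chains the layers. If you want to keep a two-stage decomposition, reverse the order: first collapse the transitions to $\Expect_\phi P$ while injecting the $(L-l)^2/\sum_{s'}\alpha_l(s,a,s')$ Gaussian noise (here the relevant value functions are still those of the bounded-reward model, so the span condition holds), and only then replace the sub-Gaussian reward component by $\Nc(\Expect_\phi r, 3.6^2\sigma^2)$, since that step needs only convexity in $r$ and no span control.
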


\begin{algorithm}[t]
\caption{\vaporalg learning algorithm}
\setstretch{1.3}
\textbf{For} episode $t=1,2,\ldots$ \textbf{do} \\
\ \textbf{1.} Compute expected rewards $\Expect^t r$, transitions $\Expect^t P$, uncertainty measure $\wh\sigma^t$ \\ 
\ \textbf{2.} Solve \vaporopt optimization problem $\lambda^{t} \in \argmax_{\lambda \in \Lambda(\Expect^t P)} \V_{\modif\phi^t}(\lambda)$ from \Cref{eq_defV_min_tau} \\
\ \textbf{3.} Execute policy ${\pi}_l^{t}(s, a) \propto \lambda_l^{t}(s,a)$, for $l=1,\ldots,L$ 
\label{algo}
\end{algorithm}

Now, let us return to the episodic interaction case and denote by $\phi^t \coloneqq \phi(\cdot \mid \Fc_t)$ the posterior at the beginning of episode $t$. Under Assumptions \ref{assumption_1} and \ref{assumption_2}, we can upper bound the uncertainty $\modif\sigma^t$ of the transformed posteriors $\modifphit$ as $(\modif\sigma_l^t)^2(s,a) \leq (3.6^2 (\nu^2 +1) + (L-l)^2)/(n_l^t(s,a)\vee 1)$. This brings us to a key result necessary for a general regret bound.

\makeatletter
\newcommand{\raisemath}[1]{\mathpalette{\raisem@th{#1}}}
\newcommand{\raisem@th}[3]{\raisebox{#1}{$#2#3$}}
\makeatother

\begin{restatable}{lemma}{lemmareductionregret}\label{lemmareductionregret}
Let \alg be any procedure that maps posterior beliefs to policies, and denote by
${\bayesregret}_{\posteriormap, \phi}(\alg, T)$ the Bayesian regret of \alg where at each episode $t=1, \ldots, T$, the policy and regret are computed by replacing $\phi^t$ with $\posteriormap(\phi^t)$ (\Cref{lemma_reduction_V}), then under Assumptions \ref{assumption_1} and \ref{assumption_2},
    \begin{align*}
        \bayesregret_{\phi}(\alg, T) \leq {\bayesregret}_{\posteriormap, \phi}(\alg, T).
    \end{align*}
\end{restatable}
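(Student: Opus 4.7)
The plan is to prove the regret inequality episode-by-episode by conditioning on the history $\Fc_t$ and comparing, termwise, the expectations under the true posterior $\phi^t$ and under $\modifphi^t \coloneqq \posteriormap(\phi^t)$. \Cref{lemma_reduction_V} handles the optimistic benchmark $V_1^\star$, and a separate independence-based argument handles the policy value $V_1^{\pi^t}$.

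First I apply the tower property. Given $\Fc_t$, the policy $\pi^t$ produced by \alg is fixed and the MDP is distributed as $\phi^t$, so
\begin{align*}
\bayesregret_\phi(\alg, T) = \Expect \sum_{t=1}^{N} \Expect_{s \sim \rho}\bigl[\Expect_{\phi^t} V_1^\star(s) - \Expect_{\phi^t} V_1^{\pi^t}(s)\bigr],
\end{align*}
and analogously for $\bayesregret_{\posteriormap,\phi}(\alg, T)$ with $\modifphi^t$ in place of $\phi^t$. Since Assumptions~\ref{assumption_1}--\ref{assumption_2} are preserved by Bayesian updating (rewards stay independent and sub-Gaussian as in \cite[App.\,D.2]{russo2016information}, and transitions stay independent Dirichlet), \Cref{lemma_reduction_V} applied to $\phi^t$ yields $\Expect_{s\sim\rho}\Expect_{\phi^t} V_1^\star(s) \leq \Expect_{s\sim\rho}\Expect_{\modifphi^t} V_1^\star(s)$.

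The key remaining identity is $\Expect_{\phi^t} V_1^{\pi^t}(s) = \Expect_{\modifphi^t} V_1^{\pi^t}(s)$. Since $\pi^t$ is $\Fc_t$-measurable, conditional on $\Fc_t$ it acts as a fixed policy. The posterior $\phi^t$ still factorizes over the triples $(l,s,a)$, so a backward induction on $l$ using the Bellman recursion
\begin{align*}
V_l^{\pi^t}(s) = \sum_a \pi^t_l(s,a)\!\left(r_l(s,a) + \sum_{s'} P_l(s'\mid s,a)\, V_{l+1}^{\pi^t}(s')\right)
\end{align*}
shows that $\Expect_{\phi^t} V_l^{\pi^t}(s)$ obeys the same recursion with $r$ replaced by $\Expect_{\phi^t} r$ and $P$ by $\Expect_{\phi^t} P$: the independence of $(r_l(s,a), P_l(\cdot\mid s,a))$ from the layers beyond $l$ lets one distribute the expectation through the sum over $s'$. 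By construction, $\modifphi^t$ has deterministic transitions equal to $\Expect_{\phi^t} P$ and Gaussian rewards with mean $\Expect_{\phi^t} r$, so the same recursion holds under $\modifphi^t$ and the two expected values coincide.

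Combining the two estimates gives $\Expect_{s\sim\rho}[\Expect_{\phi^t} V_1^\star - \Expect_{\phi^t} V_1^{\pi^t}] \leq \Expect_{s\sim\rho}[\Expect_{\modifphi^t} V_1^\star - \Expect_{\modifphi^t} V_1^{\pi^t}]$ at each episode; summing over $t$ and taking the outer expectation concludes. The main obstacle will be the policy-value equality: one must carefully verify the posterior factorization across $(l,s,a)$ (which follows because the likelihood factorizes over these triples and the prior is independent) and exploit the $\Fc_t$-measurability of $\pi^t$ so that the posterior expectation commutes with the Bellman backup. Once that is in hand, \Cref{lemma_reduction_V} and the tower property do the rest.
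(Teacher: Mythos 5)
Your proposal is correct and follows essentially the same route as the paper: the tower property to reduce to per-episode comparisons under $\phi^t$ versus $\posteriormap(\phi^t)$, \Cref{lemma_reduction_V} (in its per-state form) for the $V_1^\star$ term, and a backward-induction argument exploiting the layered structure to show $\Expect_{\phi^t} V_1^{\pi^t}(s) = \Expect_{\posteriormap(\phi^t)} V_1^{\pi^t}(s)$ for the $\Fc_t$-measurable policy. The paper isolates that last identity as a separate lemma proved by exactly the induction you sketch, using time-inhomogeneity to factor $\Expect_{\phi^t}[P_l(s'\mid s,a)\,Q_{l+1}^{\pi^t}(s',a')]$ into a product of expectations.
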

This tells us that if we have an algorithm with a Bayesian regret bound under known transitions~$P$, then we can convert it directly into an algorithm that achieves a regret bound for the case of unknown~$P$, simply by increasing the amount of uncertainty in the posterior of the unknown rewards and replacing the unknown~$P$ with its mean when executing the algorithm. We now instantiate this generic reduction to \vaporalg in \Cref{algo}. Combining \Cref{lemmareductionregret} and \Cref{thm_no_p_regret_bound} yields the following regret bound.

\begin{restatable}{theorem}{thmregretbound}\label{thm_regret_bound} Under Assumptions \ref{assumption_1} and \ref{assumption_2}, it holds that $\bayesregret_{\phi}(\vaporalg, T) \leq
      \widetilde{O}(L \sqrt{S A T})$. 
\end{restatable}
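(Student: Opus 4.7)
The plan is to combine the reduction in \Cref{lemmareductionregret} with the known-$P$ regret bound of \Cref{thm_no_p_regret_bound}, paying careful attention to how the inflated sub-Gaussian parameter $\modif\sigma$ propagates through the latter's guarantee. Concretely, by \Cref{lemmareductionregret} it suffices to bound $\bayesregret_{\posteriormap,\phi}(\vaporalg,T)$, i.e. the Bayesian regret of the algorithm that, at every episode $t$, runs the \emph{known}-$P$ \vaporalg procedure on the transformed posterior $\posteriormap(\phi^t)$, whose transitions are deterministic (equal to $\Expect^t P$) and whose rewards are Gaussian with variance proxy $(\modif\sigma_l^t)^2(s,a)$. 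This is exactly the setting covered by \Cref{thm_no_p_regret_bound}, with the only difference that the sub-Gaussian parameter is no longer controlled by the constant $\nu^2+1$ but by the step-dependent quantity derived from \Cref{lemma_reduction_V}.

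First I would verify that the proof of \Cref{thm_no_p_regret_bound} applies verbatim to the transformed problem: the only place where \Cref{assumption_1} enters is through the per-episode bound $\sigma_l^t(s,a)^2 \leq (\nu^2+1)/(n_l^t(s,a)\vee 1)$. Under Assumptions \ref{assumption_1} and \ref{assumption_2}, the transformed posterior satisfies
\begin{align*}
(\modif\sigma_l^t)^2(s,a) \;\leq\; \frac{3.6^2(\nu^2+1) + (L-l)^2}{n_l^t(s,a)\vee 1} \;\leq\; \frac{C\,L^2}{n_l^t(s,a)\vee 1}
\end{align*}
for an absolute constant $C$ (since $\nu$ is fixed and $(L-l)^2 \leq L^2$). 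Thus the hypothesis of \Cref{thm_no_p_regret_bound} holds for the transformed problem with the substitution $\nu^2+1 \mapsto C\,L^2$.

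Next I would trace this substitution through the bound of \Cref{thm_no_p_regret_bound}, which is of the form $\sqrt{2(\nu^2+1)\,T\,S\,A\,\log(SA)(1+\log T/L)}$. Replacing $\nu^2+1$ by $O(L^2)$ and invoking \Cref{lemmareductionregret} yields
\begin{align*}
\bayesregret_\phi(\vaporalg,T) \;\leq\; \bayesregret_{\posteriormap,\phi}(\vaporalg,T) \;\leq\; \widetilde{O}\bigl(L\sqrt{S A T}\bigr),
\end{align*}
as claimed, where $\widetilde O$ absorbs the $\sqrt{\log(SA)(1+\log T/L)}$ and the constant from $3.6^2$.

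The main obstacle, and the only step that requires any care, is step one: confirming that the known-$P$ analysis of \Cref{thm_no_p_regret_bound} genuinely tolerates a step-and-state-action-dependent sub-Gaussian parameter of the form $(\modif\sigma_l^t)^2(s,a)$ rather than the uniform parameter assumed in \Cref{assumption_1}. Inspection of the upper bound $\V_\phi(\lambda)$ in \eqref{eq_defV_min_tau} shows that its regret argument proceeds state-action-wise and only uses the bound on $\sigma_l^t(s,a)^2$ in summations of the form $\sum_{t,l,s,a}\lambda^t_l(s,a)\sigma_l^t(s,a)\sqrt{-\log\lambda^t_l(s,a)}$, so replacing $\sigma$ with $\modif\sigma$ and using $\modif\sigma_l^t(s,a)\leq \sqrt{C}\,L/\sqrt{n_l^t(s,a)\vee 1}$ produces an extra factor of $L$ after a Cauchy–Schwarz / pigeonhole argument identical to the one in the proof of \Cref{thm_no_p_regret_bound}. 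Once this is checked, the conclusion is immediate.
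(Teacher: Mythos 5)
Your proposal is correct and follows essentially the same route as the paper: reduce to the transformed posterior via \Cref{lemmareductionregret}, bound $(\modif\sigma_l^t)^2(s,a) \leq (3.6^2(\nu^2+1)+(L-l)^2)/(n_l^t(s,a)\vee 1) = O(L^2)/(n_l^t(s,a)\vee 1)$, and rerun the known-$P$ analysis with the inflated constant. The paper packages your ``verify the proof tolerates a state-action-dependent sub-Gaussian parameter'' step as the explicitly general Lemma~\ref{thm_generic_regret_bound} (condition \eqref{eq_condition_sigma} with $c_0=\sqrt{3.6^2(\nu^2+1)+L^2}$), which is exactly the observation you make by inspection.
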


The Bayes regret bound in \Cref{thm_regret_bound} is 
within a factor of $\sqrt{L}$ of the known information theoretic lower bound \cite{jin2018q,domingues2021episodic}, up to constant and log terms. It matches the best known bound for K-learning \cite{klearning} and Thompson sampling \cite{osband2017posterior}, under the same set of standard assumptions (see \Cref{app_assumptions}). There exists a complementary line of work deriving minimax-optimal regret bounds in the \emph{frequentist} setting, both in the model-based \cite{azar2017minimax,menard2021ucb} and model-free \cite{zhang2020almost,li2021breaking} cases. We refer to \eg, \cite{ghavamzadeh2015bayesian} for discussion on the advantages and disadvantages of the Bayesian and frequentist approaches. Without claiming superiority of either, it is worth highlighting that compared to frequentist algorithms that tend to be \emph{deterministic} and \emph{non-stationary} (\ie, explicitly dependent on the episode number), \vaporalg is naturally \emph{stochastic} and \emph{stationary} (\ie, independent of the episode number).


\section{Connections} 
\label{section_connections}

\begin{wrapfigure}{r}{0.5\textwidth}
\centering
\renewcommand*{\arraystretch}{1.2}
\begin{small}
\vspace{-0.45in}
    \begin{tabular}{  c | c | c }
        & TS\,/\,PSRL & \vaporalg \\
      \hline
      \makecell{$\pop$ approx.}& \makecell{Sample MDP $\Mc$ \\ and compute $\pi^\star_{\Mc}$ \\ 
      $\Leftrightarrow  \lambda^{\TS} \sim \pop$ \\ $\Rightarrow \Expect[ \lambda^{\TS}] = \pop$}  & \makecell{Variationally \\ approximate \\ $\lambda^\ast \approx \pop$ \\ and compute $\pi^{{\scaleto{\lambda^\ast}{4pt}}}$} \\
      \hline
      \makecell{Exploration \\ mechanism} &  \makecell{Stochastic \\ optimism} & \makecell{Explicit \\ optimism \\ + Entropy \\ regularization} \\
    \end{tabular}
\end{small}
\label{fig_comp_ts_algo}
\end{wrapfigure}

\paragraph{Thompson Sampling.} Thompson sampling (TS) or posterior sampling for RL (PSRL) \cite{thompson1933likelihood,strens2000bayesian,osband2013more} first \emph{samples} an environment (\ie, rewards and transition dynamics) from the posterior and then computes the optimal policy for this sample to be executed during the episode. Our principled Bayesian inference approach to RL uncovers an alternative view of TS which deepens our understanding of this popular algorithm. The next lemma shows that TS \emph{implicitly} approximates $\pop$ by sampling, thus tightly connecting it to \vaporalg's \emph{explicit} approximation of $\pop$. 

\vspace{0.05in}

\begin{restatable}{lemma}{lemmaTSlambda}\label{lemma_TS_lambda}
    Let $\lambda^{\TS}$ be the occupancy measure of the \TS policy, it holds that $\Expect[\lambda^{\TS}] = \pop$.
\end{restatable}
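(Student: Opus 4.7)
The plan is to express both $\Expect[\lambda^{\TS}]$ and $\pop$ as marginals of a single joint probability measure and then observe that they coincide. Introduce the measure $\mathbb{Q}$ on pairs (MDP, trajectory) under which $\Mc \sim \phi$ and, conditional on $\Mc$, the trajectory is generated by $s_1 \sim \rho$, $a_k = \pi^\star_{\Mc,k}(s_k)$ (with the fixed tie-breaking convention from Definition 1), and $s_{k+1} \sim P_k(\cdot \mid s_k, a_k)$. This is exactly the law of a Thompson sampling episode: sample an MDP and act according to its optimal policy.

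First I would unfold $\pop$. Under $\mathbb{Q}$, every action $a_k$ is optimal for the sampled $\Mc$ by construction, so the recursive conditions in Definition 1 stipulating that all earlier actions be optimal are automatic. Consequently the event $\op_l(s,a)$ collapses to $\{s_l = s,\, a_l = a\}$ as an event on this enlarged space, and $\Prob_\phi(\op_l(s,a)) = \mathbb{Q}(s_l = s,\, a_l = a)$. Next I would unfold $\Expect[\lambda^{\TS}]$: by definition of the occupancy measure of the random policy $\pi^\star_\Mc$, one has $\lambda^{\TS}_l(s,a) = \mathbb{Q}(s_l = s,\, a_l = a \mid \Mc)$, and the tower property yields $\Expect_\phi[\lambda^{\TS}_l(s,a)] = \mathbb{Q}(s_l = s,\, a_l = a)$. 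Combining the two displays completes the proof.

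The only step worth checking carefully (though not really an obstacle) is that the forward recursion in Definition 1 --- a disjoint union over all ``optimal-predecessor'' state-action pairs --- indeed collapses to the simple marginal above. A short induction on $l$ handles this: the base $l=1$ gives $\rho(s)\,\Prob_\phi(\pi^\star_1(s) = a)$ on both sides, and the inductive step follows from $\op_{l+1}(s',a') = \bigcup_{s,a} \op_l(s,a) \cap \{s_{l+1} = s'\} \cap \{\pi^\star_{l+1}(s') = a'\}$ (disjoint since optimal actions are unique) together with the Markov structure of $\mathbb{Q}$. The same argument extends to unknown $P$ by treating $P$ as part of the sampled $\Mc$ and invoking the Bayesian symmetry between the TS sample and the ground-truth MDP (both i.i.d.\ draws from $\phi$).
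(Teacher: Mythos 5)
Your proposal is correct and is essentially the paper's argument: the paper's proof states that conditional on the sampled MDP $\Mc$, the occupancy measure of $\pi^\star_{\Mc}$ equals $\Prob_{\Mc}(\op_l(s,a))$, and then marginalizes over $\Mc$ — which is exactly your tower-property step on the joint measure $\mathbb{Q}$. Your explicit induction verifying that the forward recursion of Definition~\ref{definition_stateaction_optimality} collapses to the marginal $\mathbb{Q}(s_l = s, a_l = a)$ is a welcome elaboration of a step the paper leaves implicit, but it is not a different route.
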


Plugging this observation into \Cref{lemma_variational_knownP} and retracing the proof of \Cref{thm_regret_bound} immediately yields regret bounds for TS matching that of \vaporalg. The explicit variational approximation of $\pop$ by \vaporalg has several advantages over the implicit sampling-based approximation of $\pop$ by TS:
\begin{itemize}[leftmargin=.2in,topsep=-2.5pt,itemsep=1pt,partopsep=0pt, parsep=0pt]
\item Having direct (approximate) access to this probability can be desirable in some practical applications, for example to ensure safety constraints or to allocate budgets.
\item  TS suffers linear regret in the multi-agent and constrained cases \cite{o2021matrix}, while we expect suitable modifications of \vaporalg to be able to handle these cases, as suggested in the multi-armed bandit setting \cite{vbos}.
\item The \vaporopt optimization problem can naturally extend to parameterizing the occupancy measure to be in a certain family (\eg, linear in some basis). 
\item The objective we use in \vaporopt is \emph{differentiable}, opening the door to differentiable computing architectures such as deep neural network. In contrast, performing TS requires to maintain an explicit model over MDP parameters \cite{osband2013more}, which becomes prohibitively expensive as the MDP becomes large. To alleviate this computational challenge of TS, some works have focused on cheaply generating approximate posterior samples \cite{osband2016deep, osband2016thesis, osband2019deep, o2018uncertainty}, but it is not yet clear if this approach is better than the variational approximation we propose.
\end{itemize}

\paragraph{K-learning.} K-learning \cite{klearning} endows the agent with a risk-seeking exponential utility function and enjoys the same regret bound as \Cref{thm_regret_bound}. Its update rule is similar to soft Q-learning 
\cite{fox2015taming, haarnoja2017reinforcement,nachum2017bridging,grau2019soft}, which emerges from inference under \Cref{assumption_inference_exponentiated_rewards} \cite{levine2018reinforcement}. Although soft Q-learning ignores epistemic uncertainty and thus suffers linear regret \cite{o2020making}, K-learning \emph{learns} a \emph{scalar} temperature parameter which balances both an optimistic reward and a policy entropy regularization term. In contrast, \vaporalg naturally produces a \emph{separate} risk-seeking parameter for \emph{each state-action}. This enables a fine-grained control of the exploration-exploitation trade-off depending on the region of the state-action space, which can make a large difference in practice. In the multi-armed bandit case it was shown that K-learning and \vaporalg coincide when \vaporalg is constrained to have all temperature parameters (denoted $\tau$) equal \cite{vbos}. It can be shown (see \Cref{app_klearning}) that the same observation holds in the more general MDP case, up to additional entropic terms added to the K-learning `soft' value functions, which only contribute a small constant to the regret analysis. This sheds a new light on K-learning as a variational approximation of our probabilistic inference framework with the additional constraint that all `temperature' parameters are equal.

\paragraph{Maximum Entropy Exploration.} 
A recent line of work called \emph{maximum entropy exploration} suggests that in the absence of reward the agent should cover the state-action space as uniformly as possible by solving a maximum-entropy problem over occupancy measures $\max_{\lambda \in \Lambda(P)} \mathcal{H}(\lambda)$ \cite{hazan2019provably, cheung2019exploration, zahavy2021reward, tiapkin2023fast}, where $\Hc$ denotes the entropy summed over steps $l$. \vaporopt can thus be interpreted as regularizing a reward-driven objective with a maximum entropy exploration term, that is weighted with vanishing temperatures. In other words, the principle of (weighted) maximum-entropy exploration can be derived by considering a variational approximation to Bayesian state-action optimality.



\section{A Policy-Gradient Approximation} \label{sec_vaporlite}

Up to this point, we instantiated our new Bayesian `RL as inference' approach with a tabular, model-based algorithm that exactly solves the variational optimization problem and has a guaranteed regret bound (\Cref{algo}). In this section, we derive a principled, albeit looser, upper-bound approximation of the variational optimization problem, which can be solved using policy-gradient techniques by representing the policy using a deep neural network. This will yield a scalable, model-free algorithm that we call \vaporlite.

Denoting by $\mu^\pi$ the stationary state distribution of a policy $\pi$, note the relation $\lambda_l^\pi(s,a) = \pi_l(s,a) \mu_l^\pi(s)$. The regularization term of \vaporopt can thus be decomposed in a (weighted) policy entropy regularization term and a (weighted) stationary state distribution entropy term, the latter of which is challenging to optimize in high-dimensional spaces \cite{hazan2019provably,lee2019efficient,mutti2021task}. In light of this, we introduce the following \vaporlite alternative optimization problem
\begin{align}\label{eq_vaporlite_pi}
\max_{\pi \in \Pi} \quad \sum_{l, s} \mu^\pi_l(s) \Big( \sum_{a} \pi_l(s,a) \left( r_l(s,a) +  \wh\sigma_l(s,a) + \mathcal{H}_{\wh\sigma_l(s,\cdot)}( \pi_l(s,\cdot)) \right) \Big),
\end{align}
where we define the weighted policy entropy $\mathcal{H}_{\wh\sigma_l(s,\cdot)}( \pi_l(s,\cdot)) \coloneqq- \sum_{a} \wh\sigma_l(s,a) \pi_l(s,a) \log \pi_l(s,a)$. 
Akin to policy-gradient methods, \vaporlite now optimizes over the policies $\pi$ which can be parametrized by a neural network. It depends on an uncertainty signal $\wh\sigma$ which can also be parametrized by a neural network (\eg, an ensemble of reward predictors, as we consider in our experiments in \Cref{section_experiments}). Compared to \vaporopt, \vaporlite accounts for a \emph{weaker} notion of weighted entropy regularization (\ie, in the policy space instead of the occupancy-measure space), which allows to solve the objective using policy-gradient techniques. For simplicity it also sets the temperatures $\tau$ to the uncertainties $\wh\sigma$ instead of minimizing for the optimal ones. Importantly, \vaporlite remains a principled approach. We indeed prove the following relevant properties of \vaporlite in \Cref{app_vaporlite}: \textbf{(i)} although the problem is no longer concave in $\pi$, solving it in the space of occupancy measures $\lambda$ remains a computationally tractable, concave optimization problem, \textbf{(ii)} \vaporlite upper bounds the \vaporopt objective (up to a multiplicative factor in the uncertainty measure and a negligible additive bias), and \textbf{(iii)} it yields the same $\wt{O}(L \sqrt{S A T})$ regret bound as \vaporalg for a careful schedule of uncertainty measures $\wh\sigma$ over episodes.

Essentially, \vaporlite endows a policy-gradient agent with \textbf{(i)} an uncertainty reward bonus and \textbf{(ii)} an uncertainty-weighted policy entropy regularization. The latter exploration mechanism is novel and has an appealing interpretation: Unlike standard policy entropy regularization, it adaptively accounts for epistemic uncertainty, by eliminating actions from the entropy term where the agent has low uncertainty for each given state. In the limit of $\wh\sigma \rightarrow 0$ (\ie, no uncertainty), the regularization of \vaporlite vanishes and we recover the original policy-gradient objective.

\section{Numerical Experiments}\label{section_experiments}

\paragraph{GridWorld.} We first study empirically how well \vaporopt approximates $\pop$. Since $\Expect[ \lambda^{\TS}] = \pop$, we estimate the latter by averaging over $1000$ samples of the (random) TS occupancy measure (we denote it by $\overline{\lambda}\vphantom{\lambda}^{\TS}_{\scaleto{(1000)}{4pt}}$). We design simple $10 \times 10$ GridWorld MDPs with four cardinal actions, known dynamics and randomly generated reward. \Cref{fig_visu_gw_3seeds} suggests that \vaporopt and the TS average output similar approximations of $\pop$, thus showing the accuracy of our variational approximation in this domain. Unlike TS which requires estimating it with many samples, \vaporopt approximates $\pop$ by only solving a single convex optimization problem.

\begin{figure}[t]
    \centering
    \includegraphics[width=0.9\textwidth]{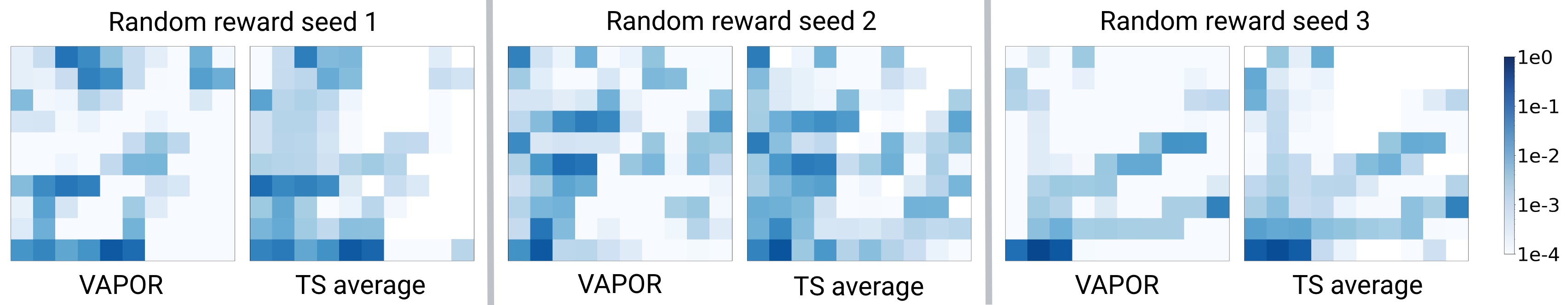}
    \vspace{-0.05in}
    \caption{For $3$ seeds of randomly generated reward means and noises in a GridWorld, visualization of the timestep-averaged stationary state distribution (\ie, $L^{-1} \sum_{l, a} \lambda_l(s,a)$) for $\lambda^{\vaporopt}$ \emph{(left)} and $\overline{\lambda}\vphantom{\lambda}^{\TS}_{\scaleto{(1000)}{4pt}}$ \emph{(right)}.}
    \label{fig_visu_gw_3seeds}
\end{figure}

\begin{figure}[t]
\begin{minipage}{0.48\linewidth}
\centering
    \includegraphics[width=\textwidth,trim={0 0 0 0.8in},clip]{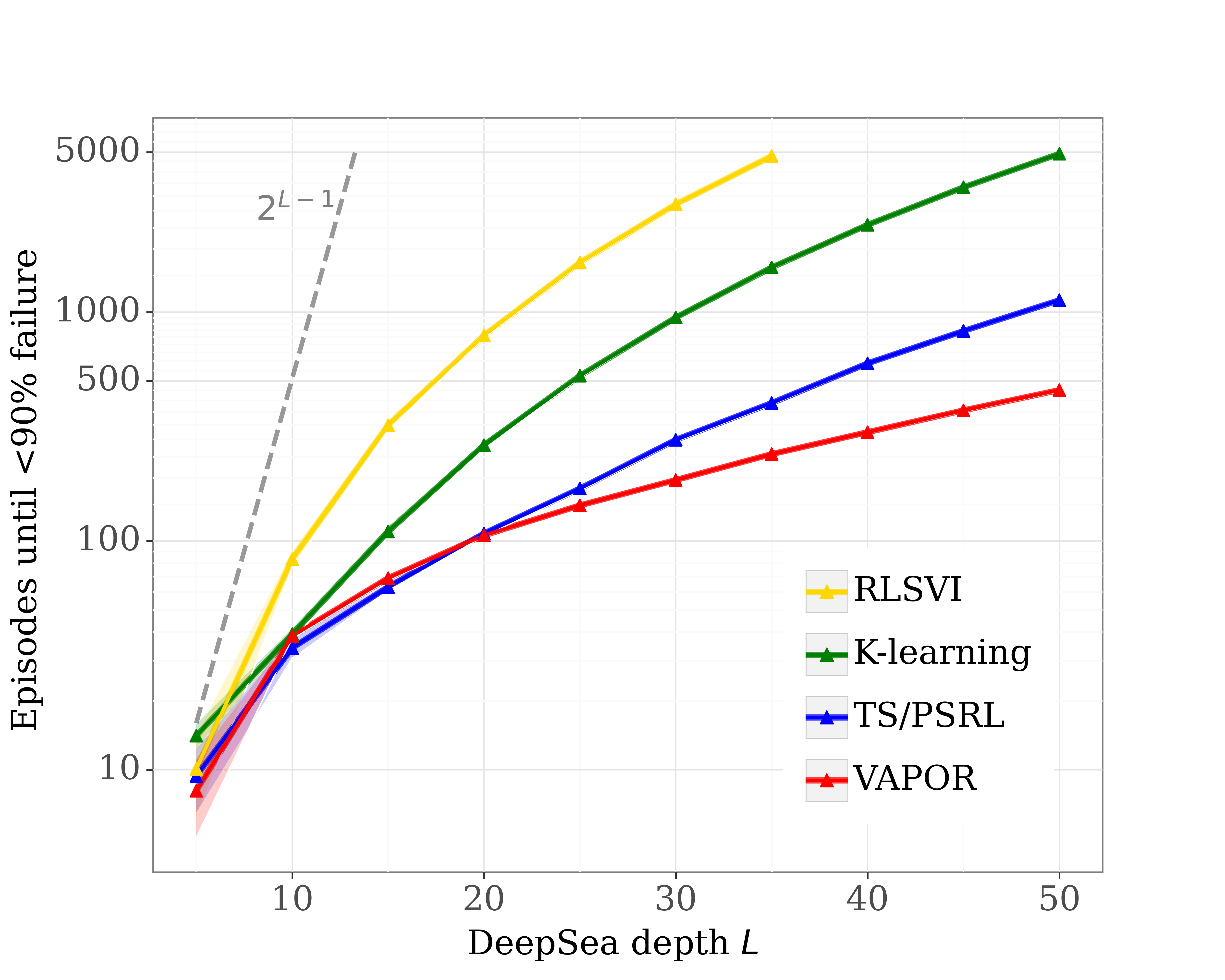}
    \captionof{figure}{Learning time on DeepSea}
    \label{fig_deep_sea}
\end{minipage}%
\hfill%
\begin{minipage}{0.48\linewidth}
    \centering
    \includegraphics[width=0.85\textwidth]{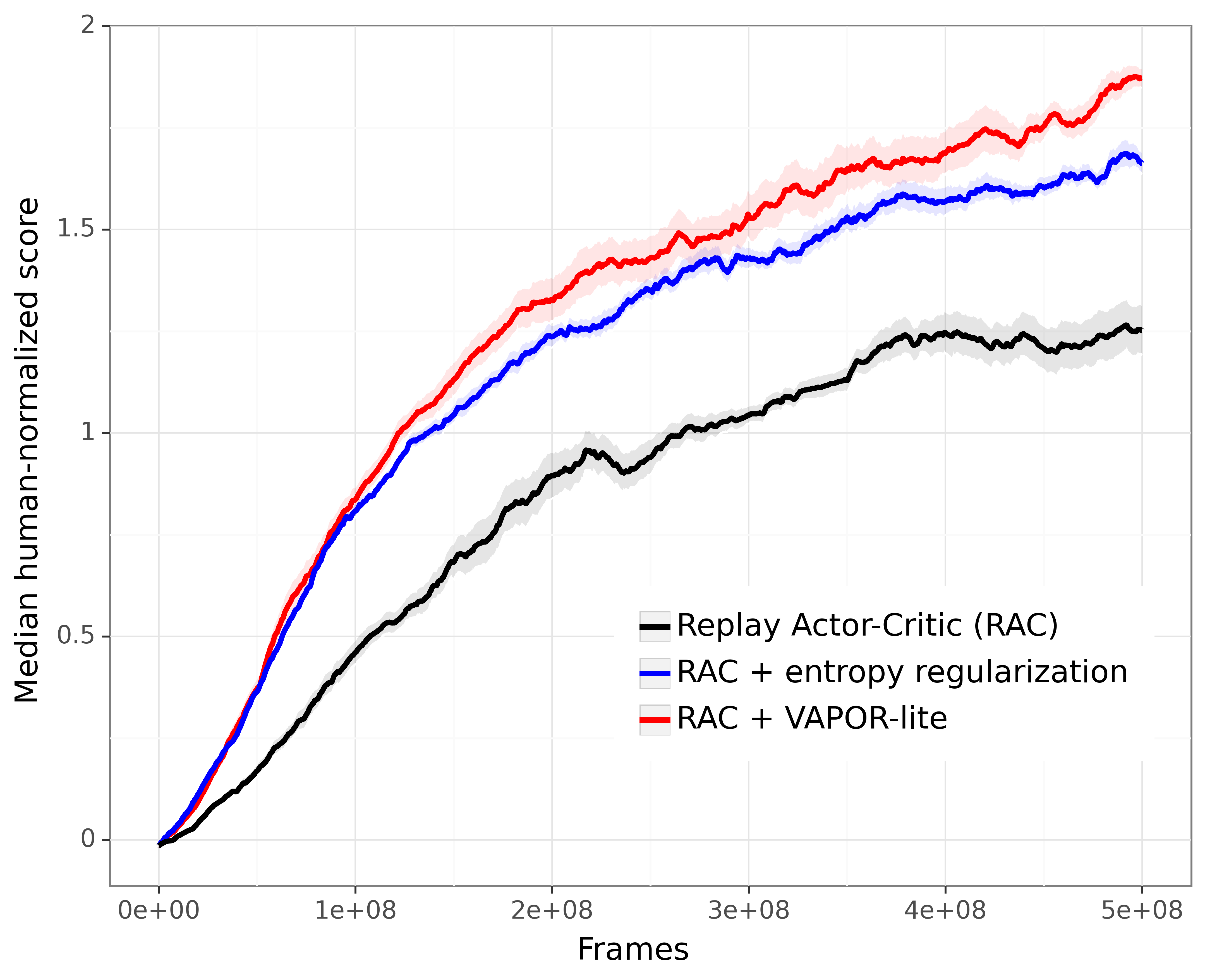}
    \caption{Median human normalized score across $57$ Atari games (with standard errors across $5$ seeds).}
    \label{fig_atari}
\end{minipage}
\end{figure}

\paragraph{DeepSea.} We now study the learning performance of the \vaporalg \Cref{algo} in the hard-exploration environment of \emph{DeepSea} \cite{osband2019deep}. In this $L \times L$ grid MDP, the agent starts at the top-left cell and must reach the lower-right cell. It can move left or right, always descending to the row below. Going left yields no reward, while going right incurs a small cost of $0.01/L$. The bottom-right cell yields a reward of $1$, so that the optimal policy is to always go right. Local dithering (\eg, Soft Q-learning, $\epsilon$-greedy) takes time exponential in $L$, so the agent must perform deep exploration to reach the goal. \Cref{fig_deep_sea} shows the time required to `solve' the problem as a function of the depth $L$, averaged over $10$ seeds. The `time to solve' is defined as the first episode where the rewarding state has been found at least in 10\% of the episodes so far \cite{klearning}. We compare \vaporalg to TS\,/\,PSRL \cite{osband2013more}, K-learning \cite{klearning} and a variant of RLSVI \cite{osband2019deep} that runs PSRL under the transformed posteriors of \Cref{lemma_reduction_V}. (Several optimistic methods like UCBVI \cite{azar2017minimax} or Optimistic Q-learning \cite{jin2018q} were additionally compared by \cite{klearning} but they performed worse.) We see that \vaporalg achieves the lowest learning time as $L$ increases, thus displaying its ability to perform deep exploration.

\paragraph{Atari.} Finally, we investigate the performance of \vaporlite on the Atari benchmark \cite{bellemare2013arcade}. We consider a replay-based actor-critic agent with V-trace off-policy corrections \cite{espeholt2018impala} and an actor-learner decomposition \cite{hessel2021podracer} (see \Cref{app_exp_details} for the full experimental details). We compare this agent without and with added \emph{fixed} policy entropy regularization as commonly used \cite{williams1991function,mnih2016asynchronous}, to the same agent with the \vaporlite objective. We stress that unlike simply adding entropy regularization, which leads to local dithering and suffers linear regret, \vaporlite relies on a principled approach of casting RL as Bayesian inference. \Cref{fig_atari} shows the performance advantage of augmenting a policy-gradient agent with \vaporlite on Atari. It reaches the peak performance of the replay actor-critic agent (resp.\,with entropy regularization) in about $2\times$ (resp.\,$1.5\times$) fewer environment frames, for essentially the same computational cost. This advantage comes from the fact that \vaporlite leads to deep exploration, which results in finding higher rewarding states and in better cumulative performance. In our ablation in \Cref{app_exp_details}, we illustrate the isolated benefits of tuning entropy regularization on a per state-action basis with \vaporlite compared to tuning a single scalar for entropy regularization \cite{o2023efficient}. Our promising empirical results suggest that further gains could come from deriving a more accurate practical approximation of \vaporopt.

\section{Conclusion}
RL is a statistical inference problem wrapped in a control problem. In this paper, we demonstrated that a single quantity can be used to handle both aspects of the problem --- the posterior probability of each state-action pair being visited under the optimal policy, $\pop$. We can perform Bayesian inference to compute this, and then use it to generate a control policy that performs efficient exploration. This is a coherent and principled approach to `RL as inference', rather than the heuristic approaches in prior work which could not compute valid posteriors. Unfortunately, computing $\pop$ is intractable in practice so we derived a variational approximation that we showed also explores efficiently. We concluded with some numerical experiments showing improved performance in the challenging `DeepSea' unit test, as well as on the Atari suite. We discuss some limitations of our work and directions for future investigation in \Cref{app_limitations}.

\newpage

\bibliographystyle{abbrv} 
\bibliography{main_vapor}

\begin{thebibliography}{10}

\bibitem{abdolmaleki2018maximum}
A.~Abdolmaleki, J.~T. Springenberg, Y.~Tassa, R.~Munos, N.~Heess, and
  M.~Riedmiller.
\newblock Maximum a posteriori policy optimisation.
\newblock {\em arXiv preprint arXiv:1806.06920}, 2018.

\bibitem{auer2002using}
P.~Auer.
\newblock Using confidence bounds for exploitation-exploration trade-offs.
\newblock {\em Journal of Machine Learning Research}, 3(Nov):397--422, 2002.

\bibitem{azar2017minimax}
M.~G. Azar, I.~Osband, and R.~Munos.
\newblock Minimax regret bounds for reinforcement learning.
\newblock In {\em International Conference on Machine Learning}, pages
  263--272. PMLR, 2017.

\bibitem{bellemare2016unifying}
M.~Bellemare, S.~Srinivasan, G.~Ostrovski, T.~Schaul, D.~Saxton, and R.~Munos.
\newblock Unifying count-based exploration and intrinsic motivation.
\newblock {\em Advances in neural information processing systems}, 29, 2016.

\bibitem{bellemare2013arcade}
M.~G. Bellemare, Y.~Naddaf, J.~Veness, and M.~Bowling.
\newblock The arcade learning environment: An evaluation platform for general
  agents.
\newblock {\em Journal of Artificial Intelligence Research}, 47:253--279, 2013.

\bibitem{boyd2004convex}
S.~P. Boyd and L.~Vandenberghe.
\newblock {\em Convex optimization}.
\newblock Cambridge university press, 2004.

\bibitem{burda2018exploration}
Y.~Burda, H.~Edwards, A.~Storkey, and O.~Klimov.
\newblock Exploration by random network distillation.
\newblock {\em arXiv preprint arXiv:1810.12894}, 2018.

\bibitem{balcau2006weighted}
C.~Bălcău.
\newblock Weighted entropic perturbation for linear programming.
\newblock {\em Analele Universităţii Bucureşti. Matematică-Informatică},
  55, 01 2006.

\bibitem{cheung2019exploration}
W.~C. Cheung.
\newblock Exploration-exploitation trade-off in reinforcement learning on
  online markov decision processes with global concave rewards.
\newblock {\em arXiv preprint arXiv:1905.06466}, 2019.

\bibitem{cover1991information}
T.~M. Cover and J.~A. Thomas.
\newblock Information theory and statistics.
\newblock {\em Elements of information theory}, 1(1):279--335, 1991.

\bibitem{diamond2016cvxpy}
S.~Diamond and S.~Boyd.
\newblock {CVXPY}: {A} {P}ython-embedded modeling language for convex
  optimization.
\newblock {\em Journal of Machine Learning Research}, 17(83):1--5, 2016.

\bibitem{ecos}
A.~Domahidi, E.~Chu, and S.~Boyd.
\newblock {ECOS}: {A}n {SOCP} solver for embedded systems.
\newblock In {\em European Control Conference {(ECC)}}, pages 3071--3076, 2013.

\bibitem{domingues2021episodic}
O.~D. Domingues, P.~M{\'e}nard, E.~Kaufmann, and M.~Valko.
\newblock Episodic reinforcement learning in finite mdps: Minimax lower bounds
  revisited.
\newblock In {\em Algorithmic Learning Theory}, pages 578--598. PMLR, 2021.

\bibitem{dwaracherla2022ensembles}
V.~Dwaracherla, Z.~Wen, I.~Osband, X.~Lu, S.~M. Asghari, and B.~Van~Roy.
\newblock Ensembles for uncertainty estimation: Benefits of prior functions and
  bootstrapping.
\newblock {\em arXiv preprint arXiv:2206.03633}, 2022.

\bibitem{espeholt2018impala}
L.~Espeholt, H.~Soyer, R.~Munos, K.~Simonyan, V.~Mnih, T.~Ward, Y.~Doron,
  V.~Firoiu, T.~Harley, I.~Dunning, et~al.
\newblock Impala: Scalable distributed deep-rl with importance weighted
  actor-learner architectures.
\newblock In {\em International conference on machine learning}, pages
  1407--1416. PMLR, 2018.

\bibitem{fang1993linear}
S.-C. Fang and H.-S.~J. Tsao.
\newblock Linear programming with entropic perturbation.
\newblock {\em Zeitschrift f{\"u}r Operations Research}, 37(2):171--186, 1993.

\bibitem{fellows2019virel}
M.~Fellows, A.~Mahajan, T.~G. Rudner, and S.~Whiteson.
\newblock Virel: A variational inference framework for reinforcement learning.
\newblock {\em Advances in neural information processing systems}, 32, 2019.

\bibitem{fox2012tutorial}
C.~W. Fox and S.~J. Roberts.
\newblock A tutorial on variational bayesian inference.
\newblock {\em Artificial intelligence review}, 38:85--95, 2012.

\bibitem{fox2015taming}
R.~Fox, A.~Pakman, and N.~Tishby.
\newblock Taming the noise in reinforcement learning via soft updates.
\newblock {\em arXiv preprint arXiv:1512.08562}, 2015.

\bibitem{geist2019theory}
M.~Geist, B.~Scherrer, and O.~Pietquin.
\newblock A theory of regularized markov decision processes.
\newblock In {\em International Conference on Machine Learning}, pages
  2160--2169. PMLR, 2019.

\bibitem{ghavamzadeh2015bayesian}
M.~Ghavamzadeh, S.~Mannor, J.~Pineau, A.~Tamar, et~al.
\newblock Bayesian reinforcement learning: A survey.
\newblock {\em Foundations and Trends{\textregistered} in Machine Learning},
  8(5-6):359--483, 2015.

\bibitem{grau2019soft}
J.~Grau-Moya, F.~Leibfried, and P.~Vrancx.
\newblock Soft {Q}-learning with mutual-information regularization.
\newblock In {\em International conference on learning representations}, 2019.

\bibitem{gray2011entropy}
R.~M. Gray.
\newblock {\em Entropy and information theory}.
\newblock Springer Science \& Business Media, 2011.

\bibitem{grytskyy2023general}
D.~Grytskyy, J.~Ram{\'\i}rez-Ruiz, and R.~Moreno-Bote.
\newblock A general markov decision process formalism for action-state
  entropy-regularized reward maximization.
\newblock {\em arXiv preprint arXiv:2302.01098}, 2023.

\bibitem{haarnoja2017reinforcement}
T.~Haarnoja, H.~Tang, P.~Abbeel, and S.~Levine.
\newblock Reinforcement learning with deep energy-based policies.
\newblock In {\em International conference on machine learning}, pages
  1352--1361. PMLR, 2017.

\bibitem{haarnoja2018soft}
T.~Haarnoja, A.~Zhou, P.~Abbeel, and S.~Levine.
\newblock Soft actor-critic: Off-policy maximum entropy deep reinforcement
  learning with a stochastic actor.
\newblock In {\em International conference on machine learning}, pages
  1861--1870. PMLR, 2018.

\bibitem{han2021diversity}
S.~Han and Y.~Sung.
\newblock Diversity actor-critic: Sample-aware entropy regularization for
  sample-efficient exploration.
\newblock In {\em International Conference on Machine Learning}, pages
  4018--4029. PMLR, 2021.

\bibitem{hao2022regret}
B.~Hao and T.~Lattimore.
\newblock Regret bounds for information-directed reinforcement learning.
\newblock {\em arXiv preprint arXiv:2206.04640}, 2022.

\bibitem{hazan2019provably}
E.~Hazan, S.~Kakade, K.~Singh, and A.~Van~Soest.
\newblock Provably efficient maximum entropy exploration.
\newblock In {\em International Conference on Machine Learning}, pages
  2681--2691. PMLR, 2019.

\bibitem{hessel2021podracer}
M.~Hessel, M.~Kroiss, A.~Clark, I.~Kemaev, J.~Quan, T.~Keck, F.~Viola, and
  H.~van Hasselt.
\newblock Podracer architectures for scalable reinforcement learning.
\newblock {\em arXiv preprint arXiv:2104.06272}, 2021.

\bibitem{jaggi2013revisiting}
M.~Jaggi.
\newblock Revisiting frank-wolfe: Projection-free sparse convex optimization.
\newblock In {\em International conference on machine learning}, pages
  427--435. PMLR, 2013.

\bibitem{jin2018q}
C.~Jin, Z.~Allen-Zhu, S.~Bubeck, and M.~I. Jordan.
\newblock Is {Q}-learning provably efficient?
\newblock {\em Advances in neural information processing systems}, 31, 2018.

\bibitem{juergensen2010entropy}
H.~Juergensen and D.~E. Matthews.
\newblock Entropy and higher moments of information.
\newblock {\em Journal of Universal Computer Science}, 16(5):749--794, 2010.

\bibitem{kappen2012optimal}
H.~J. Kappen, V.~G{\'o}mez, and M.~Opper.
\newblock Optimal control as a graphical model inference problem.
\newblock {\em Machine learning}, 87:159--182, 2012.

\bibitem{kearns2002near}
M.~Kearns and S.~Singh.
\newblock Near-optimal reinforcement learning in polynomial time.
\newblock {\em Machine learning}, 49:209--232, 2002.

\bibitem{lee2019efficient}
L.~Lee, B.~Eysenbach, E.~Parisotto, E.~Xing, S.~Levine, and R.~Salakhutdinov.
\newblock Efficient exploration via state marginal matching.
\newblock {\em arXiv preprint arXiv:1906.05274}, 2019.

\bibitem{levine2018reinforcement}
S.~Levine.
\newblock Reinforcement learning and control as probabilistic inference:
  Tutorial and review.
\newblock {\em arXiv preprint arXiv:1805.00909}, 2018.

\bibitem{li2021breaking}
G.~Li, L.~Shi, Y.~Chen, Y.~Gu, and Y.~Chi.
\newblock Breaking the sample complexity barrier to regret-optimal model-free
  reinforcement learning.
\newblock {\em Advances in Neural Information Processing Systems},
  34:17762--17776, 2021.

\bibitem{lu2023reinforcement}
X.~Lu, B.~Van~Roy, V.~Dwaracherla, M.~Ibrahimi, I.~Osband, Z.~Wen, et~al.
\newblock Reinforcement learning, bit by bit.
\newblock {\em Foundations and Trends{\textregistered} in Machine Learning},
  16(6):733--865, 2023.

\bibitem{luis2023model}
C.~E. Luis, A.~G. Bottero, J.~Vinogradska, F.~Berkenkamp, and J.~Peters.
\newblock Model-based uncertainty in value functions.
\newblock {\em arXiv preprint arXiv:2302.12526}, 2023.

\bibitem{menard2021ucb}
P.~M{\'e}nard, O.~D. Domingues, X.~Shang, and M.~Valko.
\newblock {UCB} momentum {Q}-learning: Correcting the bias without forgetting.
\newblock In {\em International Conference on Machine Learning}, pages
  7609--7618. PMLR, 2021.

\bibitem{mnih2016asynchronous}
V.~Mnih, A.~P. Badia, M.~Mirza, A.~Graves, T.~Lillicrap, T.~Harley, D.~Silver,
  and K.~Kavukcuoglu.
\newblock Asynchronous methods for deep reinforcement learning.
\newblock In {\em International conference on machine learning}, pages
  1928--1937. PMLR, 2016.

\bibitem{mutti2021task}
M.~Mutti, L.~Pratissoli, and M.~Restelli.
\newblock Task-agnostic exploration via policy gradient of a non-parametric
  state entropy estimate.
\newblock In {\em Proceedings of the AAAI Conference on Artificial
  Intelligence}, volume~35, pages 9028--9036, 2021.

\bibitem{nachum2017bridging}
O.~Nachum, M.~Norouzi, K.~Xu, and D.~Schuurmans.
\newblock Bridging the gap between value and policy based reinforcement
  learning.
\newblock In {\em Advances in Neural Information Processing Systems}, pages
  2772--2782, 2017.

\bibitem{neu2017unified}
G.~Neu, A.~Jonsson, and V.~G{\'o}mez.
\newblock A unified view of entropy-regularized markov decision processes.
\newblock {\em arXiv preprint arXiv:1705.07798}, 2017.

\bibitem{o2021operator}
B.~O'Donoghue.
\newblock Operator splitting for a homogeneous embedding of the linear
  complementarity problem.
\newblock {\em SIAM Journal on Optimization}, 31(3):1999--2023, 2021.

\bibitem{klearning}
B.~O'Donoghue.
\newblock Variational {B}ayesian reinforcement learning with regret bounds.
\newblock {\em Advances in Neural Information Processing Systems},
  34:28208--28221, 2021.

\bibitem{o2022connection}
B.~O'Donoghue.
\newblock On the connection between {B}regman divergence and value in
  regularized markov decision processes.
\newblock {\em arXiv preprint arXiv:2210.12160}, 2022.

\bibitem{o2023efficient}
B.~O'Donoghue.
\newblock Efficient exploration via epistemic-risk-seeking policy optimization.
\newblock {\em arXiv preprint arXiv:2302.09339}, 2023.

\bibitem{ocpb:16}
B.~O'Donoghue, E.~Chu, N.~Parikh, and S.~Boyd.
\newblock Conic optimization via operator splitting and homogeneous self-dual
  embedding.
\newblock {\em Journal of Optimization Theory and Applications},
  169(3):1042--1068, June 2016.

\bibitem{vbos}
B.~O'Donoghue and T.~Lattimore.
\newblock Variational {B}ayesian optimistic sampling.
\newblock {\em Advances in Neural Information Processing Systems},
  34:12507--12519, 2021.

\bibitem{o2021matrix}
B.~O'Donoghue, T.~Lattimore, and I.~Osband.
\newblock Matrix games with bandit feedback.
\newblock In {\em Uncertainty in Artificial Intelligence}, pages 279--289.
  PMLR, 2021.

\bibitem{o2020making}
B.~O'Donoghue, I.~Osband, and C.~Ionescu.
\newblock Making sense of reinforcement learning and probabilistic inference.
\newblock In {\em International Conference on Learning Representations}, 2020.

\bibitem{o2018uncertainty}
B.~O'Donoghue, I.~Osband, R.~Munos, and V.~Mnih.
\newblock The uncertainty {B}ellman equation and exploration.
\newblock In {\em International Conference on Machine Learning}, pages
  3836--3845, 2018.

\bibitem{osband2016thesis}
I.~Osband.
\newblock {\em Deep Exploration via Randomized Value Functions}.
\newblock PhD thesis, Stanford University, 2016.

\bibitem{osband2018randomized}
I.~Osband, J.~Aslanides, and A.~Cassirer.
\newblock Randomized prior functions for deep reinforcement learning.
\newblock {\em Advances in Neural Information Processing Systems}, 31, 2018.

\bibitem{osband2016deep}
I.~Osband, C.~Blundell, A.~Pritzel, and B.~Van~Roy.
\newblock Deep exploration via bootstrapped {DQN}.
\newblock In {\em Advances In Neural Information Processing Systems}, pages
  4026--4034, 2016.

\bibitem{osband2019behaviour}
I.~Osband, Y.~Doron, M.~Hessel, J.~Aslanides, E.~Sezener, A.~Saraiva,
  K.~McKinney, T.~Lattimore, C.~Szepesvari, S.~Singh, et~al.
\newblock Behaviour suite for reinforcement learning.
\newblock {\em arXiv preprint arXiv:1908.03568}, 2019.

\bibitem{osband2013more}
I.~Osband, D.~Russo, and B.~Van~Roy.
\newblock (more) efficient reinforcement learning via posterior sampling.
\newblock {\em Advances in Neural Information Processing Systems}, 26, 2013.

\bibitem{osband2017gaussian}
I.~Osband and B.~Van~Roy.
\newblock Gaussian-dirichlet posterior dominance in sequential learning.
\newblock {\em arXiv preprint arXiv:1702.04126}, 2017.

\bibitem{osband2017posterior}
I.~Osband and B.~Van~Roy.
\newblock Why is posterior sampling better than optimism for reinforcement
  learning?
\newblock In {\em International conference on machine learning}, pages
  2701--2710. PMLR, 2017.

\bibitem{osband2019deep}
I.~Osband, B.~Van~Roy, D.~J. Russo, and Z.~Wen.
\newblock Deep exploration via randomized value functions.
\newblock {\em Journal of Machine Learning Research}, 20:1--62, 2019.

\bibitem{ostrovski2017count}
G.~Ostrovski, M.~G. Bellemare, A.~Oord, and R.~Munos.
\newblock Count-based exploration with neural density models.
\newblock In {\em International conference on machine learning}, pages
  2721--2730. PMLR, 2017.

\bibitem{pathak2017curiosity}
D.~Pathak, P.~Agrawal, A.~A. Efros, and T.~Darrell.
\newblock Curiosity-driven exploration by self-supervised prediction.
\newblock In {\em International conference on machine learning}, pages
  2778--2787. PMLR, 2017.

\bibitem{peters2010relative}
J.~Peters, K.~Mulling, and Y.~Altun.
\newblock Relative entropy policy search.
\newblock In {\em Proceedings of the AAAI Conference on Artificial
  Intelligence}, volume~24, pages 1607--1612, 2010.

\bibitem{puterman2014markov}
M.~L. Puterman.
\newblock {\em Markov decision processes: discrete stochastic dynamic
  programming}.
\newblock John Wiley \& Sons, 2014.

\bibitem{qian2020concentration}
J.~Qian, R.~Fruit, M.~Pirotta, and A.~Lazaric.
\newblock Concentration inequalities for multinoulli random variables.
\newblock {\em arXiv preprint arXiv:2001.11595}, 2020.

\bibitem{rawlik2012stochastic}
K.~Rawlik, M.~Toussaint, and S.~Vijayakumar.
\newblock On stochastic optimal control and reinforcement learning by
  approximate inference.
\newblock {\em Proceedings of Robotics: Science and Systems VIII}, 2012.

\bibitem{russo2014learning}
D.~Russo and B.~Van~Roy.
\newblock Learning to optimize via information-directed sampling.
\newblock {\em Advances in Neural Information Processing Systems}, 27, 2014.

\bibitem{russo2016information}
D.~Russo and B.~Van~Roy.
\newblock An information-theoretic analysis of thompson sampling.
\newblock {\em The Journal of Machine Learning Research}, 17(1):2442--2471,
  2016.

\bibitem{russo2018tutorial}
D.~J. Russo, B.~Van~Roy, A.~Kazerouni, I.~Osband, Z.~Wen, et~al.
\newblock A tutorial on thompson sampling.
\newblock {\em Foundations and Trends{\textregistered} in Machine Learning},
  11(1):1--96, 2018.

\bibitem{schaul2015prioritized}
T.~Schaul, J.~Quan, I.~Antonoglou, and D.~Silver.
\newblock Prioritized experience replay.
\newblock {\em arXiv preprint arXiv:1511.05952}, 2015.

\bibitem{schervish2012theory}
M.~J. Schervish.
\newblock {\em Theory of statistics}.
\newblock Springer Science \& Business Media, 2012.

\bibitem{serrano2015algorithms}
S.~A. Serrano.
\newblock {\em Algorithms for unsymmetric cone optimization and an
  implementation for problems with the exponential cone}.
\newblock PhD thesis, Stanford University, 2015.

\bibitem{strens2000bayesian}
M.~Strens.
\newblock A bayesian framework for reinforcement learning.
\newblock In {\em ICML}, volume 2000, pages 943--950, 2000.

\bibitem{sutton2018reinforcement}
R.~S. Sutton and A.~G. Barto.
\newblock {\em Reinforcement learning: An introduction}.
\newblock MIT press, 2018.

\bibitem{tarbouriech2019active}
J.~Tarbouriech and A.~Lazaric.
\newblock Active exploration in markov decision processes.
\newblock In {\em The 22nd International Conference on Artificial Intelligence
  and Statistics}, pages 974--982. PMLR, 2019.

\bibitem{thompson1933likelihood}
W.~R. Thompson.
\newblock On the likelihood that one unknown probability exceeds another in
  view of the evidence of two samples.
\newblock {\em Biometrika}, 25(3-4):285--294, 1933.

\bibitem{tiapkin2023fast}
D.~Tiapkin, D.~Belomestny, D.~Calandriello, E.~Moulines, R.~Munos, A.~Naumov,
  P.~Perrault, Y.~Tang, M.~Valko, and P.~Menard.
\newblock Fast rates for maximum entropy exploration.
\newblock {\em arXiv preprint arXiv:2303.08059}, 2023.

\bibitem{todorov2006linearly}
E.~Todorov.
\newblock Linearly-solvable markov decision problems.
\newblock {\em Advances in neural information processing systems}, 19, 2006.

\bibitem{todorov2009efficient}
E.~Todorov.
\newblock Efficient computation of optimal actions.
\newblock {\em Proceedings of the national academy of sciences},
  106(28):11478--11483, 2009.

\bibitem{wainwright2008graphical}
M.~J. Wainwright, M.~I. Jordan, et~al.
\newblock Graphical models, exponential families, and variational inference.
\newblock {\em Foundations and Trends{\textregistered} in Machine Learning},
  1(1--2):1--305, 2008.

\bibitem{williams1991function}
R.~J. Williams and J.~Peng.
\newblock Function optimization using connectionist reinforcement learning
  algorithms.
\newblock {\em Connection Science}, 3(3):241--268, 1991.

\bibitem{yu2022you}
H.~Yu, H.~Zhang, and W.~Xu.
\newblock Do you need the entropy reward (in practice)?
\newblock {\em arXiv preprint arXiv:2201.12434}, 2022.

\bibitem{zahavy2021reward}
T.~Zahavy, B.~O'Donoghue, G.~Desjardins, and S.~Singh.
\newblock Reward is enough for convex mdps.
\newblock {\em Advances in Neural Information Processing Systems},
  34:25746--25759, 2021.

\bibitem{zhang2020almost}
Z.~Zhang, Y.~Zhou, and X.~Ji.
\newblock Almost optimal model-free reinforcement learning via
  reference-advantage decomposition.
\newblock {\em Advances in Neural Information Processing Systems},
  33:15198--15207, 2020.

\bibitem{ziebart2010modeling}
B.~D. Ziebart.
\newblock {\em Modeling purposeful adaptive behavior with the principle of
  maximum causal entropy}.
\newblock Carnegie Mellon University, 2010.

\end{thebibliography}
\newpage

\appendix

{\color{white}\part{\color{black}Appendix}}

\parttoc

\newcommand{\done}{\textcolor{green}{\ding{51}}}
\newcommand{\notdone}{\textcolor{red}{\ding{55}}}

\newpage

\section{Graphical Models} \label{app_graphical_models}

We illustrate in \Cref{fig-pgm} the graphical models representing the standard\footnote{There exist other approaches to the `RL as inference' perspective, such as the VIREL framework \cite{fellows2019virel}, that provide alternative benefits yet do not offer any principled guarantees either.} `RL as inference' framework \cite{levine2018reinforcement} and our principled, Bayesian `RL as inference' framework. For ease of presentation, we consider here a simplified representation where we isolate a single trajectory $(s_1, a_1, \ldots, s_4, a_4$) and consider that $\op$ does not encompass the randomness in the transition dynamics (which is \eg, the case for deterministic dynamics).

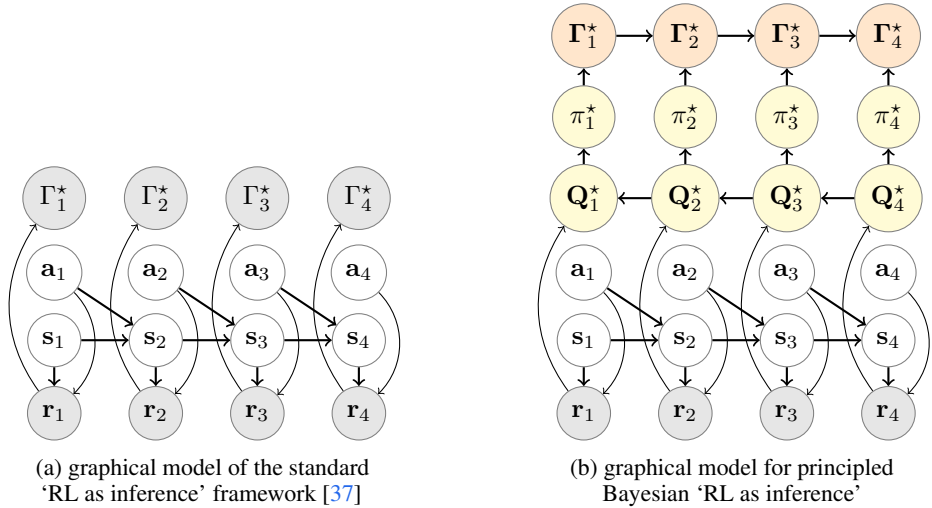
\begin{figure}[t!]
    \centering
    \vspace{-0.15in}

\usetikzlibrary{shapes, arrows, calc, positioning,matrix}
\tikzset{
data/.style={circle, draw, text centered, minimum height=3em ,minimum width = .5em, inner sep = 2pt},
empty/.style={circle, text centered, minimum height=3em ,minimum width = .5em, inner sep = 2pt},
}
\tikzstyle{new}=[shape=circle,draw=black!50,fill=orange!20]
\tikzstyle{unobs}=[shape=circle,draw=black!50,fill=yellow!20]
\tikzstyle{obs}=[shape=circle,draw=black!50,fill=gray!20]
\tikzstyle{state}=[shape=circle,draw=black!50,fill=white!20]
\tikzstyle{action}=[shape=circle,draw=black!50,fill=white!20]
\tikzstyle{reward}=[shape=circle,draw=black!50,fill=gray!20]
\tikzstyle{lightedge}=[<-,thick]
\tikzstyle{bendedgeleft}=[<-,bend left=45]
\tikzstyle{bendedgeright}=[<-,bend right=45]
\tikzstyle{bendedgerightsmall}=[<-,bend right=35]

\begin{subfigure}{.49\textwidth}
\vspace{0.84in}
  \centering
  \captionsetup{justification=centering}

{\scalebox{1.}{
\begin{tikzpicture}[scale=0.9]

\node[action] (a1) at (0,2) {$\ba_1$};
\node[action] (a2) at (1.5,2) {$\ba_2$};
\node[action] (a3) at (3,2) {$\ba_3$};
\node[action] (a4) at (4.5,2) {$\ba_4$};

\node[state] (s1) at (0,1) {$\bs_1$};
\node[action] (s2) at (1.5,1) {$\bs_2$}
    edge [lightedge] (s1)
    edge [lightedge] (a1);
\node[state] (s3) at (3,1) {$\bs_3$}
    edge [lightedge] (s2)
    edge [lightedge] (a2);
\node[state] (s4) at (4.5,1) {$\bs_4$}
    edge [lightedge] (s3)
    edge [lightedge] (a3);

\node[reward] (r1) at (0,-0.08) {$\br_1$}
    edge [lightedge] (s1)
    edge [bendedgeright] (a1);
\node[reward] (r2) at (1.5,-0.08) {$\br_2$}
    edge [lightedge] (s2)
    edge [bendedgeright] (a2);
\node[reward] (r3) at (3,-0.08) {$\br_3$}
    edge [lightedge] (s3)
    edge [bendedgeright] (a3);
\node[reward] (r4) at (4.5,-0.08) {$\br_4$}
    edge [lightedge] (s4)
    edge [bendedgeright] (a4);

\node[obs] (o1) at (0,3.1) {$\op_1$}
    edge [bendedgerightsmall] (r1);
\node[obs] (o2) at (1.5,3.1) {$\op_2$}
    edge [bendedgerightsmall] (r2);
\node[obs] (o3) at (3,3.1) {$\op_3$}
    edge [bendedgerightsmall] (r3);
\node[obs] (o4) at (4.5,3.1) {$\op_4$}
    edge [bendedgerightsmall] (r4);
\end{tikzpicture}
}}
  \caption{graphical model of the standard \\ `RL as inference' framework \cite{levine2018reinforcement}}
  \label{fig-pgm-a}
\end{subfigure}%
\hfill%
\begin{subfigure}{.49\textwidth}
  \centering
  \captionsetup{justification=centering}

{\scalebox{1.}{
\begin{tikzpicture}[scale=0.9]

\node[action] (a1) at (0,2) {$\ba_1$};
\node[action] (a2) at (1.5,2) {$\ba_2$};
\node[action] (a3) at (3,2) {$\ba_3$};
\node[action] (a4) at (4.5,2) {$\ba_4$};

\node[state] (s1) at (0,1) {$\bs_1$};
\node[state] (s2) at (1.5,1) {$\bs_2$}
    edge [lightedge] (s1)
    edge [lightedge] (a1);
\node[state] (s3) at (3,1) {$\bs_3$}
    edge [lightedge] (s2)
    edge [lightedge] (a2);
\node[state] (s4) at (4.5,1) {$\bs_4$}
    edge [lightedge] (s3)
    edge [lightedge] (a3);

\node[reward] (r1) at (0,-0.08) {$\br_1$}
    edge [lightedge] (s1)
    edge [bendedgeright] (a1);
\node[reward] (r2) at (1.5,-0.08) {$\br_2$}
    edge [lightedge] (s2)
    edge [bendedgeright] (a2);
\node[reward] (r3) at (3,-0.08) {$\br_3$}
    edge [lightedge] (s3)
    edge [bendedgeright] (a3);
\node[reward] (r4) at (4.5,-0.08) {$\br_4$}
    edge [lightedge] (s4)
    edge [bendedgeright] (a4);
    
\node[unobs] (o1) at (0,3.1) {$\bQ^\star_1$}
    edge [bendedgerightsmall] (r1)
    edge [lightedge] (o2);
\node[unobs] (o2) at (1.5,3.1) {$\bQ^\star_2$}
    edge [bendedgerightsmall] (r2)
    edge [lightedge] (o3);
\node[unobs] (o3) at (3,3.1) {$\bQ^\star_3$}
    edge [bendedgerightsmall] (r3)
    edge [lightedge] (o4);
\node[unobs] (o4) at (4.5,3.1) {$\bQ^\star_4$}
    edge [bendedgerightsmall] (r4);

\node[unobs] (t1) at (0,4.3) {$\bpi^\star_1$}
    edge [lightedge] (o1);
\node[unobs] (t2) at (1.5,4.3) {$\bpi^\star_2$}
    edge [lightedge] (o2);
\node[unobs] (t3) at (3,4.3) {$\bpi^\star_3$}
    edge [lightedge] (o3);
\node[unobs] (t4) at (4.5,4.3) {$\bpi^\star_4$}
    edge [lightedge] (o4);
    
\node[new] (theta1) at (0,5.5) {$\bGamma^\star_1$}
    edge [lightedge] (t1);
\node[new] (theta2) at (1.5,5.5) {$\bGamma^\star_2$}
    edge [lightedge] (theta1)
    edge [lightedge] (t2);
\node[new] (theta3) at (3,5.5) {$\bGamma^\star_3$}
    edge [lightedge] (theta2)
    edge [lightedge] (t3);
\node[new] (theta4) at (4.5,5.5) {$\bGamma^\star_4$}
    edge [lightedge] (theta3)
    edge [lightedge] (t4);

\end{tikzpicture}
}}
  \caption{graphical model for principled \\ Bayesian `RL as inference'}
  \label{fig-pgm-b}
\end{subfigure}
\vspace{0.1in}
\caption{\textbf{(a)} In the standard `RL as inference' framework, the binary random variables\protect\footnotemark~of state-action optimality~$\bGamma^\star$ are \emph{independent} and \emph{observed}, equal one with probability proportional to the exponentiated reward, i.e., $\Prob(\bGamma^\star_l=1) \propto \exp(\br_l)$. This assumption is arbitrary and ignores the role of uncertainty and exploration. \textbf{(b)} In our model, only rewards $\br$ are observed, with \emph{unobserved} binary optimality variables $\bGamma^\star$ and unobserved optimal values $\bQ^\star$. $\bQ^\star_l$ depends on current reward $\br_l$ and future $\bQ^\star_{l+1}$, and determines action optimality $\pi^\star_l$. $\op_l$ depends on prior state-action optimality $\bGamma^\star_{l-1}$ and current action optimality $\bpi^\star_l$. `Optimality' thus propagates both in a \emph{backward} way (via $\bQ^\star_{l}$) and a \emph{forward} way (via $\bGamma^\star_l$). Refraining from any modeling assumption on $\bGamma^\star$, we derive a variational approximation of the posterior probability of $\bGamma^\star$ which yields an algorithm that explores efficiently.
}
\label{fig-pgm}
\end{figure}

\footnotetext{In the main text, we use the terminology of \emph{events} but note that events can be seen as \emph{binary random variables} by taking the indicator function of the event.}

\newpage

\begin{figure}[t]
\vspace{-0.3in}
\centering
\includegraphics[width=6.5cm]{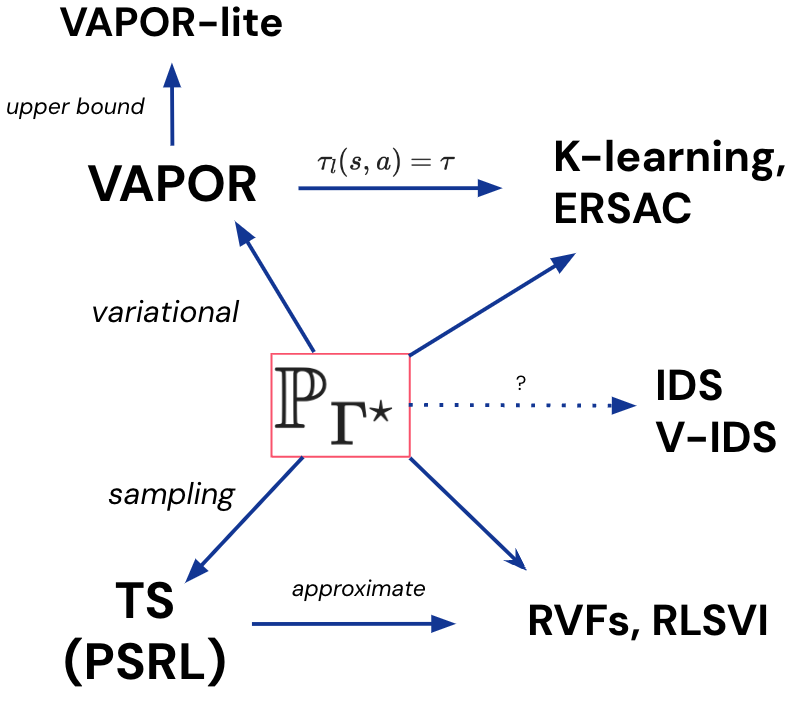}
\caption{\vaporalg is an explicit, variational approximation of $\pop$, while Thompson sampling (TS) implicitly approximates $\pop$ by sampling, similarly to the TS approximations Randomized value functions (RVFs), RLSVI \cite{osband2019deep}. A special case of \vaporalg with equal temperatures approximately recovers K-learning and ERSAC \cite{o2023efficient}. We conjecture that there also exist connections between $\op$ and information-theoretic approaches such as information-directed sampling (IDS), variance-IDS (V-IDS) \cite{russo2014learning, hao2022regret,lu2023reinforcement}.}
\label{fig_connections}
\end{figure}

\section{Algorithmic Connections to $\op$}

\Cref{fig_connections} illustrates how numerous exploration algorithms can be linked to $\op$. This sheds a new light on existing algorithms, tightly connects them to our variational approach, and unifies these approaches within our principled Bayesian framework.


\section{Assumptions} \label{app_assumptions}

In this section, we review and discuss the assumptions made throughout the paper. We stress that these assumptions are \emph{not} required \emph{computationally} for our variational Bayesian approach, but only for the \emph{analysis} to obtain \vaporalg's regret bound (\Cref{thm_regret_bound}). These are:
\begin{itemize}[leftmargin=.3in,topsep=-1.5pt,itemsep=1pt,partopsep=0pt, parsep=0pt]
    \item[(A)] Layered, time-inhomogoneous MDP,
    \item[(B)] The reward noise is additive sub-Gaussian and the mean rewards are bounded almost surely with independent priors,
    \item[(C)] The prior over transition functions is independent Dirichlet.
\end{itemize}

\vspace{0.05in}
(A) This assumption is non-restrictive in the sense that any finite-horizon MDP with cycles can be transformed into an equivalent MDP without cycles by adding a timestep variable $l \in [L]$ to the state space (picking up an additional factor of $\sqrt{L}$ in the regret bound). It implies the transition function and the value function at the next state are conditionally independent, which is a technical property that simplifies the regret analysis.

(B) The sub-Gaussian assumption is standard in the literature and arises commonly in practice. For instance, it holds in any environment with bounded rewards (e.g., DeepSea, Atari). In \Cref{appendix_proof_DV} (see \Cref{lemma_DVdescription-generic}) we extend \Cref{lemma_DVdescription} to the general case, and we refer to \Cref{app_vapor_with_sg_asm} for the resulting \vaporopt optimization problem (this would give a generic regret bound for the \vaporalg learning algorithm which would depend on how fast the posteriors concentrate). 

(C) The Dirichlet prior over the transition functions is the canonical prior for transitions in Bayesian RL, see \cite{ghavamzadeh2015bayesian}. This is because visiting a state-action pair simply increments by $1$ the appropriate entry of the Dirichlet vector --- note the ease of the update because the Dirichlet is the conjugate prior of the categorical distribution (which models transition probabilities of discrete-state-action-MDPs). 

Note that we make the same assumptions as existing state-of-the-art Bayesian regret analyses, including PSRL \cite{osband2017posterior}\footnote{We point out that \cite{osband2017posterior} originally considers time-homogeneous dynamics but there is a known mistake in the regret analysis, specifically in Lemma 3. We refer to e.g., \href{https://rlgammazero.github.io/docs/2020_AAAI_tut_part2.pdf}{these tutorial slides} (see footnote of slide 65/91) and \cite{qian2020concentration} for details. This issue can be fixed by assuming time-inhomogeneous dynamics.}, K-learning \cite{klearning}, RLSVI \cite{osband2019deep}. To the best of our knowledge, it is an open question how to obtain an algorithm with a $\wt{O}(L \sqrt{S A T})$ Bayes regret bound without these assumptions.

\section{Limitations} \label{app_limitations}

We now discuss some limitations of our work, which constitute relevant directions for future investigation.
\begin{itemize}[leftmargin=.1in,topsep=-2.5pt,itemsep=1pt,partopsep=0pt, parsep=0pt]
\item The \vaporopt convex optimization problem grows with $L, S, A$. Although large-scale exponential cone solvers exist that can handle this type of problem, it does not immediately lend itself to an online approximation.
\item The regret guarantee of \vaporalg relies on Assumptions \ref{assumption_1} and \ref{assumption_2} which, although standard (\Cref{app_assumptions}), can be strong and it would be relevant to (partly) relax them.
\item \vaporlite makes some upper-bound approximations of our variational optimization problem to use policy-gradient techniques, and there may be a tighter way to approximate \vaporopt in the case where the policy is parameterized using a deep neural network that is updated using online stochastic gradients.
\end{itemize}

\section{Properties of the \vaporopt Optimization Problem} \label{app_properties_optim_problem}

The \vaporopt optimization problem is an exponential cone program that can be solved efficiently using modern optimization methods \cite{ocpb:16, ecos, serrano2015algorithms, o2021operator}. In \Cref{app_dual}, we show that its dual is a convex optimization problem that is \emph{unconstrained} in a `value' variable and a `temperature' variable.
In \Cref{app_message_passing}, we relate \vaporopt to \emph{message passing}, a popular technique to perform inference in graphical models \cite{wainwright2008graphical}, by showing that \vaporopt can be cast as passing some forward and backward messages back and forth.

\subsection{Dual Problem of the \vaporopt Optimization Problem}\label{app_dual}

We derive the dual problem of the \vaporopt optimization problem \eqref{eq_varapprox_knownP} and show that it is a convex optimization problem in a `value' variable $V \in \realslssmall \coloneqq \{ \mathbb{R}^{S_l}\}_{l=1}^L $ and a `temperature' variable $\tau \in \realsplsasmall$. Even though the `primal' admits flow constraints in $\lambda$, this dual is \emph{unconstrained} in $V$ (the only constraint is the non-negativity of $\tau$). In the case of unknown $P$ (\Cref{section_unknownP}), the dual is the same as in \Cref{lemma_dual} except that $P$ is replaced by $\Expect_\phi P$ and the uncertainty $\sigma$ is replaced by $\modif\sigma$.

\begin{lemma}\label{lemma_dual}
    The $\lambda^\ast \in \Lambda(P)$ solving the \vaporopt optimization problem \eqref{eq_varapprox_knownP} is unique and satisfies 
    \begin{align*}
        \lambda^\ast_l(s,a) = \exp\left( \frac{\deltalast(s,a)}{\tau_l(s,a)} - 1 \right), 
    \end{align*}
    where we define the `advantage' function
    \begin{align*}
        \deltalast(s,a) = \Expect_\phi r_l(s,a)  + \frac{\sigma^2_l(s,a)}{2 \tau^\ast_l(s,a)} + \sum_{s'} P_l(s' \mid s,a)V_{l+1}^\ast(s') - V^\ast_l(s), 
    \end{align*}
    and where $V^\ast, \tau^\ast$ minimize the (convex) dual function
    \begin{align}\label{eq_unconstrained_dual}
        \min_{V \in \realslssmall} \min_{\tau \in \realsplsasmall} \left\{ \sum_s \rho(s) V_1(s) + \sum_{l,s,a} \tau_l(s,a) \exp\left( \frac{\delta_{V,\tau,l}(s,a)}{\tau_l(s,a)} - 1 \right) \right\}.
    \end{align}
   The induced policy $\pi^\ast$ is uniquely defined as 
    \begin{align*}
        \pi^\ast_l(s,a) = \frac{\exp\left( \frac{\deltalast(s,a)}{\tau^\ast_l(s,a)} - 1 \right)}{\sum_{a'} \exp\left( \frac{\deltalast(s,a')}{\tau^\ast_l(s,a')} - 1 \right)}
    \end{align*}
    if the denominator is positive, otherwise $\pi^\ast_l(s,\cdot)$ can be any distribution.
\end{lemma}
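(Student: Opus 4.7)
The claim has three parts: (i) derive the closed-form expression for $\lambda^\ast$ and identify the dual, (ii) establish uniqueness, (iii) obtain the induced policy. The approach is standard Lagrangian duality applied to the inner concave maximization, combined with a minimax swap to handle the joint problem.

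\textbf{Step 1: Minimax swap and Lagrangian for the inner problem.} From \Cref{lemma_variational_knownP} the \vaporopt value equals $\max_{\lambda \in \Lambda(P)} \min_{\tau \in \realsplsasmall} \V_{\phi}(\lambda, \tau)$ with $\V_\phi$ from \eqref{eq_defV_tau}, which is concave in $\lambda$ (the entropy term is concave and the linear term is linear) and convex in $\tau$ (each $\sigma^2/(2\tau)$ is convex and each $-\tau \lambda \log \lambda$ is linear in $\tau$). Sion's minimax theorem applies after restricting $\tau$ to any compact box $[\epsilon, 1/\epsilon]^{SA}$, and a limiting argument (using that $\sigma^2/(2\tau) \to \infty$ as $\tau \to 0^+$ on components with $\sigma>0$, and that $-\tau \lambda \log \lambda$ blows up as $\tau \to \infty$ for $\lambda < 1$) extends the swap to $(0, \infty)^{SA}$. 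Thus the problem is equivalent to $\min_\tau \max_{\lambda \in \Lambda(P)} \V_\phi(\lambda, \tau)$, and for fixed $\tau$ the inner maximization is a strictly concave program over the polyhedron $\Lambda(P)$.

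\textbf{Step 2: KKT stationarity.} I introduce Lagrange multipliers $V_1(s)$ for the initial-distribution constraints $\sum_a \lambda_1(s,a) = \rho(s)$ and $V_{l+1}(s')$ for the flow-conservation constraints $\sum_{a'} \lambda_{l+1}(s',a') = \sum_{s,a} P_l(s' \mid s,a) \lambda_l(s,a)$. The weighted-entropy term guarantees $\lambda^\ast > 0$ componentwise (since $\partial(-\tau x \log x)/\partial x \to +\infty$ as $x \to 0^+$), so non-negativity multipliers vanish. Differentiating the Lagrangian in $\lambda_l(s,a)$ and setting the result to zero yields
\[
\Expect_\phi r_l(s,a) + \tfrac{\sigma_l^2(s,a)}{2\tau_l(s,a)} - \tau_l(s,a)\log \lambda_l(s,a) - \tau_l(s,a) + \sum_{s'} P_l(s' \mid s,a) V_{l+1}(s') - V_l(s) = 0,
\]
which rearranges exactly to $\lambda_l(s,a) = \exp\!\left(\delta_{V,\tau,l}(s,a)/\tau_l(s,a) - 1\right)$ with $\delta_{V,\tau,l}$ as in the statement.

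\textbf{Step 3: Substitution into the Lagrangian.} Plugging this $\lambda^\ast$ back in, the bracket multiplying each $\lambda^\ast_l(s,a)$ simplifies as $\delta_{V,\tau,l}(s,a) - \tau_l(s,a)\log \lambda^\ast_l(s,a) = \tau_l(s,a)$, so the only surviving terms are $\sum_s \rho(s) V_1(s)$ (from the $\rho$-constraint) and $\sum_{l,s,a} \tau_l(s,a)\, \lambda^\ast_l(s,a)$, producing the dual objective \eqref{eq_unconstrained_dual}. Joint convexity of this dual in $(V,\tau)$ is inherited from the perspective of the exponential $\tau \exp(\delta/\tau - 1)$ and linearity of $V \mapsto \delta_{V,\tau,l}$.

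\textbf{Step 4: Uniqueness and policy recovery.} Uniqueness of $\lambda^\ast$ follows from strict concavity of $\V_\phi(\cdot)$ on $\Lambda(P) \cap \{\lambda > 0\}$: the separable map $x \mapsto x\sqrt{-2\log x}$ is strictly concave on $(0,1]$ (verified by a direct second-derivative computation), and this transfers to strict concavity of $\lambda \mapsto \V_\phi(\lambda)$ from \eqref{eq_defV_min_tau}. The policy induced by an occupancy measure is $\pi^\ast_l(s,a) = \lambda^\ast_l(s,a) / \sum_{a'} \lambda^\ast_l(s,a')$ whenever the denominator is positive, yielding the displayed softmax form after substituting the closed expression for $\lambda^\ast$; if the denominator vanishes, the policy at that $(l,s)$ is irrelevant and can be chosen arbitrarily, as in the statement.

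\textbf{Main obstacle.} The only delicate part is the minimax swap, since $\tau$ is only constrained to be positive; I would handle this by a truncation-and-limit argument as sketched above, using coercivity of the objective in $\tau$ on both ends to ensure the minimizer lies in a compact sub-box. All other steps are routine Lagrangian calculus with the slight convenience that the entropic barrier removes the non-negativity multipliers automatically.
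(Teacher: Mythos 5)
Your proposal is correct and follows essentially the same route as the paper, which proves this lemma only by a brief sketch invoking the method of Lagrange multipliers and the fact that, for fixed $\tau$, \vaporopt is a linear program with weighted entropic perturbation admitting an unconstrained convex dual; your Steps 2--4 simply carry out that computation explicitly, and your minimax-swap and interiority arguments fill in details the paper leaves implicit.
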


\begin{proof}
    The proof follows from applying the method of Lagrange multipliers to the \vaporopt optimization problem using the definition of $\V$ in \eqref{eq_defV_min_tau}. Derivations of dual problems of related optimization problems in the space of occupancy measures can be found in \eg, \cite{peters2010relative,grytskyy2023general}. Note that for a fixed temperature function $\tau \in \realsplsasmall$, \vaporopt is an instance of linear programming with weighted entropic perturbation, which is a class of optimization problems shown to admit an unconstrained convex dual \cite{balcau2006weighted,fang1993linear}.
\end{proof}

\subsection{Message Passing View of the \vaporopt Optimization Problem} \label{app_message_passing}

Message passing is a popular technique to perform inference in graphical models \cite{wainwright2008graphical}. In fact, as shown in \cite[Section~2.3]{levine2018reinforcement}, the standard `RL as inference' framework (\Cref{section_previous_inference}) can recover the optimal policy using standard sum-product message passing, relying on a single backward message pass. While \vaporopt does not have such a simplified derivation, the next lemma shows that it can be cast as passing some forward and backward messages back and forth. In this view, a backward message amounts to computing the optimal policy for a given reward (via backward induction), and a forward message amounts to computing the associated stationary state distribution (via the flow equations). This view stems from iteratively solving \vaporopt using the Frank-Wolfe (a.k.a.\,conditional gradient) algorithm, where an update is equivalent to solving an MDP \cite{hazan2019provably,tarbouriech2019active,zahavy2021reward}.

\begin{algorithm}[t]
\caption{Frank-Wolfe algorithm for the \vaporopt optimization problem (\Cref{lemma_mp})}
\begin{algorithmic}
\setstretch{1.1}
\REQUIRE Accuracy level $\varepsilon > 0$
\STATE Set the smoothing parameter $\delta = \varepsilon / (\sigma_{\max} L S A)$
\STATE Define $\V_{\delta}(\lambda) \coloneqq \sum_{l,s,a} \lambda_{l}(s,a)  \left[ \Expect r_l(s, a)  + \sigma_l(s,a) \sqrt{- 2 ( \log( \lambda_{l}(s,a) + \delta) + \delta)} \right]$
\STATE Set the number of iterations $K = (\sigma_{\max} L \varepsilon^{-1})^5 (S A)^4$
\STATE Initialize any $\lambda^{(0)} \in \Lambda$
\FOR{iterations $k=1,2,\ldots,K$}
\STATE \emph{Backward message pass:} Compute the optimal policy $\pi^{(k-1)}$ for the reward $\nabla \V_{\delta}(\lambda^{(k-1)})$ 
\STATE \emph{Forward message pass:} Compute $d^{(k-1)}$ the stationary state distribution induced by $\pi^{(k-1)}$, \ie,
\begin{align*}
    d^{(k-1)}_1(s) = \rho(s), \quad d^{(k-1)}_{l+1}(s') = \sum_{s,a} P_l(s'\mid s,a) \pi^{(k-1)}_l(s,a) d_l(s)
\end{align*}
\vspace{-0.1in}
\STATE \emph{Combine:} Compute for step size $\gamma_k = \frac{2}{k+1}$ the occupancy measure $$\lambda_l^{(k)}(s,a) = (1 - \gamma_k) \lambda_l^{(k-1)}(s,a) + \gamma_k d_l^{(k-1)}(s) \pi_l^{(k-1)}(s,a)$$ 
\ENDFOR
\RETURN $\lambda^{(K)}$
\end{algorithmic}
\label{fw-message-passing}
\end{algorithm}

\begin{lemma}\label{lemma_mp}
    For any accuracy $\varepsilon > 0$, there exist forward-backward messages such that after $\textup{poly}(S,A,L,\sigma_{\max}, \varepsilon^{-1})$ passes they output a $\lambda^{\text{MP}} \in \Lambda(P)$ with $\max_{\lambda \in \Lambda(P)} \V_{\phi}(\lambda) - \V_{\phi}(\lambda^{\text{MP}}) \leq \varepsilon$. 
\end{lemma}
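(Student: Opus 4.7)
The plan is to recognize Algorithm \ref{fw-message-passing} as the classical Frank--Wolfe (conditional gradient) scheme applied to a smoothed surrogate of $\V_\phi$, and then to invoke the standard $O(1/k)$ Frank--Wolfe convergence rate for concave maximization of a gradient-Lipschitz function over a compact convex set. Naively applying Frank--Wolfe directly to $\V_\phi(\lambda) = \lambda^\top(\Expect_\phi r + \sigma \circ \sqrt{-2\log\lambda})$ fails because its gradient contains a term proportional to $\sigma_l(s,a) / \sqrt{-\log \lambda_l(s,a)}$ and its Hessian has a $1/(\lambda(-\log\lambda)^{3/2})$-type singularity as $\lambda_l(s,a) \to 0$. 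The $\delta$-smoothing introduced in the algorithm replaces $\log\lambda$ by $\log(\lambda+\delta)+\delta$, producing $\V_\delta$ with (i) a gradient-Lipschitz constant $C_\delta$ polynomial in $\delta^{-1}$ and $\sigma_{\max}$, and (ii) a uniform approximation bound $|\V_\phi(\lambda) - \V_\delta(\lambda)| = O(\sigma_{\max} L S A \cdot \mathrm{poly}(\delta))$ on $\Lambda(P)$, by a direct calculation using $\lambda_l(s,a) \leq 1$ and a Taylor expansion.

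I then verify that each Frank--Wolfe linear subproblem $\arg\max_{\lambda \in \Lambda(P)} \langle \nabla \V_\delta(\lambda^{(k-1)}), \lambda\rangle$ is solved exactly by one pair of message passes. Since $\Lambda(P)$ is the occupancy-measure polytope and its vertices correspond to deterministic policies, this linear program is equivalent to solving a \emph{standard} MDP whose reward vector equals the gradient $\nabla\V_\delta(\lambda^{(k-1)})$: the backward message pass performs backward induction on this shaped reward to return an optimal deterministic policy $\pi^{(k-1)}$, and the forward message pass propagates $\rho$ through $P$ under $\pi^{(k-1)}$ to produce the corresponding vertex occupancy $d^{(k-1)} \circ \pi^{(k-1)}$. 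The update $\lambda^{(k)} = (1-\gamma_k)\lambda^{(k-1)} + \gamma_k d^{(k-1)} \circ \pi^{(k-1)}$ with $\gamma_k = 2/(k+1)$ is then precisely the Frank--Wolfe step.

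Combining these two ingredients, the textbook Frank--Wolfe bound gives $\max_{\lambda\in\Lambda(P)} \V_\delta(\lambda) - \V_\delta(\lambda^{(k)}) \leq 2 C_\delta D^2 / (k+2)$, where $D = O(L)$ bounds the $\ell_1$-diameter of $\Lambda(P)$ (each occupancy has total mass $L$). Choosing $\delta = \varepsilon / (\sigma_{\max} L S A)$ as in the algorithm controls the smoothing gap $|\max \V_\phi - \max \V_\delta|$ by $\varepsilon / 2$, and then picking $K$ large enough to drive the Frank--Wolfe error below $\varepsilon / 2$ yields $K = \mathrm{poly}(S, A, L, \sigma_{\max}, \varepsilon^{-1})$; the explicit exponent $(\sigma_{\max} L / \varepsilon)^5 (SA)^4$ stated is a comfortably loose upper bound that absorbs the $\delta^{-a}$ blow-up of $C_\delta$ for the small power $a$ arising from differentiating the smoothed square-root-log term twice. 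A triangle inequality between $\V_\phi$ and $\V_\delta$ then transfers the accuracy back to the original objective.

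The main obstacle is the smoothing trade-off: $\delta$ must be small enough for $\V_\delta$ to faithfully approximate $\V_\phi$ near the boundary (where Frank--Wolfe iterates may concentrate along edges of $\Lambda(P)$ selected by the gradient), yet large enough that $C_\delta$, and hence $K$, remains polynomial. Once this balance is struck, the remainder is a routine combination of Frank--Wolfe convergence theory on concave objectives with backward/forward dynamic programming on an MDP; no novel technique is needed beyond the observation that the Frank--Wolfe linear oracle over the occupancy polytope $\Lambda(P)$ is precisely an MDP solve, which is exactly the message-passing view we aim to expose.
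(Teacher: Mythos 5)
Your proposal is correct and follows essentially the same route as the paper's proof: Frank--Wolfe on the $\delta$-smoothed surrogate $\V_\delta$, with the linear oracle over $\Lambda(P)$ realized as a backward (MDP solve on the gradient reward) and forward (flow propagation) message pass, the standard $O(1/k)$ rate with a curvature constant polynomial in $\delta^{-1}$ and $\sigma_{\max}$, and a triangle inequality transferring the guarantee back to $\V_\phi$ via the uniform bound $\vert\V_\delta - \V_\phi\vert \lesssim \sigma_{\max} L S A \delta$. The only cosmetic difference is your use of the $\ell_1$-diameter $O(L)$ versus the paper's $\mathrm{diam}^2(\Lambda) \leq 2L$, which does not affect the polynomial iteration count.
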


\begin{proof}

We consider in \Cref{fw-message-passing} the Frank-Wolfe algorithm applied to maximizing the smoothed proxy $V_\delta$ over $\lambda \in \Lambda$ (throughout the proof we write $\Lambda = \Lambda(P)$ and omit the dependence on the beliefs $\phi$ for ease of notation). Each iteration represents one forward-backward message pass: the backward message computes the optimal policy for the reward given by the gradient of the objective $V_\delta$ evaluated at the current iterate (via backward induction), while the forward message computes the associated stationary state distribution (via the flow equations). Note that the chosen number of iterations $K$ is polynomial in the inverse of the accuracy level $\varepsilon^{-1}$, the maximum uncertainty $\sigma_{\max} \coloneqq \max_{l,s,a} \sigma_l(s,a)$ and the MDP parameters $L, S, A$. To prove the $\varepsilon$-convergence in the original objective $\V$, we decompose
\begin{align*}
    \V(\lambda^\ast) - \V(\lambda^{(K)}) &=  \V(\lambda^\ast) - \max_{\lambda \in \Lambda} V_\delta(\lambda) + \max_{\lambda \in \Lambda} V_\delta(\lambda) - \V_\delta(\lambda^{(K)}) + \V_\delta(\lambda^{(K)}) - \V(\lambda^{(K)}) \\ 
    &\leq  \underbrace{\V(\lambda^\ast) - V_\delta(\lambda^\ast)}_{\textrm{\ding{172}}} + \underbrace{\max_{\lambda \in \Lambda} V_\delta(\lambda) - \V_\delta(\lambda^{(K)})}_{\textrm{\ding{173}}} + \underbrace{\V_\delta(\lambda^{(K)}) - \V(\lambda^{(K)})}_{\textrm{\ding{174}}}.
\end{align*}
\ding{172} and \ding{174} can be bounded using the fact that for any $\lambda \in \Lambda$, $\mid\V_\delta(\lambda) - \V(\lambda)\mid \leq \sigma_{\max} L S A \delta$. \ding{173} can be bounded from the Frank-Wolfe convergence rate for smooth objective functions \cite{jaggi2013revisiting}, since the proxy $\V_\delta$ is $C_\delta$-smooth by construction, with $C_\delta = O(\sigma_{\max} \delta^{-4})$. As a result,  $\textrm{\ding{173}} \leq M_\delta / (K + 2)$, where $M_\delta$ is the curvature constant of $\V_\delta$ over $\Lambda$, with $M_\delta \leq C_\delta \, \textrm{diam}^2(\Lambda)$, where $\textrm{diam}^2(\Lambda) \leq 2 L$. Therefore, choosing $\delta = \varepsilon / (\sigma_{\max} L S A)$ and $K = (\sigma_{\max} L \varepsilon^{-1})^5 (S A)^4$ gives
\begin{align*}
    \V(\lambda^\ast) - \V(\lambda^{(K)}) \leq O\Big( \sigma_{\max} L S A \delta + \frac{\sigma_{\max} L}{\delta^4 K}\Big) = O(\varepsilon).
\end{align*}
\end{proof}

\section{Proofs} \label{app_proofs}

\subsection{Proof of \Cref{lemma_probopt_occupancymeasure} }

\lemmaproboptoccupancymeasure*

\begin{proof} We omit the dependence on the beliefs $\phi$ for ease of notation. At the initial step,
\begin{align*}
    \Prob\left( \op_{1}(s,a) \right) &= \Prob\left( \{ s_1 = s \} \cap \{ \pi_1^\star(s) = a \} \right) =  \rho(s) \Prob\left( \pi_1^\star(s) = a \right),
\end{align*}
and from Bayes' rule and the fact that $P$ is assumed known, we have the recursive equation
\begin{align}
    \Prob\left( \op_{l+1}(s',a') \right) &=  \Prob\left( \op_{l+1}(s')  \cap \{ \pi^{\star}_{l+1}(s') = a' \}\right) \nonumber\\
    &=  \Prob\left( \pi^{\star}_{l+1}(s') = a' \mid \op_{l+1}(s') \right) \Prob\left( \op_{l+1}(s') \right)  \nonumber\\
    &= \Prob\left( \pi^{\star}_{l+1}(s') = a' \mid \op_{l+1}(s') \right)  \sum_{s,a} \Prob\left( \op_{l}(s,a) \cap \{s_{l+1} = s'\} \right) \nonumber\\
    &= \Prob\left( \pi^{\star}_{l+1}(s') = a' \mid \op_{l+1}(s') \right)  \sum_{s,a} \Prob\left(\{s_{l+1} = s' \} \mid  \op_{l}(s,a) \right)  \Prob\left(\op_{l}(s,a)\right)\nonumber\\
    &= \Prob\left( \pi^{\star}_{l+1}(s') = a' \mid \op_{l+1}(s') \right)  \sum_{s,a}  P_l(s'\mid  s,a)  \Prob\left( \op_{l}(s,a) \right). \label{eq_lemma1}
\end{align}
Summing \eqref{eq_lemma1} over $a' \in \Ac$ implies the non-negativity and flow conservation properties of $\pop$. Moreover, when $\sum_{a' \in \Ac} \Prob\left( \op_{l}(s,a') \right) > 0$, it holds that
\begin{align*}
    \Prob\left(  \pi^\star_l(s) = a  \mid \op_{l}(s) \right) = \frac{\Prob\left( \op_{l}(s,a) \right) }{\sum_{a' \in \Ac} \Prob\left( \op_{l}(s,a') \right)},
\end{align*}
which corresponds to \eqref{eq_policy_prob_optimality}.
\end{proof}

\subsection{Proof of \Cref{lemma_dual_view_RL_opt}}

\lemmadualviewRLopt*

\begin{proof}
Denote by $Z^\star$ the random variable of the total reward accumulated along a single rollout of the optimal policy $\pi^\star$, and let $Z^\star_l(s)$ be the total reward accumulated by $\pi^\star$ from state $s$ at layer $l$ and $Z^\star_l(s,a)$ be the total reward accumulated by $\pi^\star$ from state $s$ at layer $l$ after taking action $a$.
We can write the total reward accumulated as
\begin{align*}
Z^\star = \sum_s Z^\star_1(s)\ind{s_1 = s} = \sum_s Z^\star_1(s)\ind{\op_1(s)}.
\end{align*}
Now for arbitrary $l \in \{1, \ldots, L-1\}$,
\begin{align*}
\sum_s Z^\star_l(s)\ind{\op_l(s)}
&= \sum_{s, a} Z^\star_l(s,a)\ind{\pi_l^\star(s) = a}\ind{\op_l(s)}
\\
&= \sum_{s, a} Z^\star_l(s,a)\ind{\op_l(s,a)}
\\
&= \sum_{s, a} R_l(s,a)\ind{\op_1(s,a)} + \sum_{s, a} \ind{\op_l(s,a)} \sum_{s^\prime} Z^\star_{l+1}(s^\prime) \ind{s_{l+1} = s^\prime}
\\
&= \sum_{s, a} R_l(s,a)\ind{\op_l(s,a)} + \sum_{s^\prime} Z^\star_{l+1}(s^\prime) \ind{\op_{l+1}(s^\prime)},
\end{align*}
and unrolling we obtain
\begin{align*}
Z^\star &= \sum_{l,s,a} R_l(s, a) \ind{\op_l(s,a)}.
\end{align*}
Denoting by $\Fc_r$ the sigma-algebra generated by $r$, it holds from the tower property of conditional expectation that 
\begin{align*}
    \Expect_{\phi}[ R_l(s, a) \ind{\op_l(s,a)}] &= \Expect_{\phi} [ \Expect_{\phi} [ R_l(s,a)\ind{\op_l(s,a)} \mid \Fc_r]] \\
    &= \Expect_{\phi} [ \Expect_{\phi} [ R_l(s,a) \mid \Fc_r] \Expect_{\phi} [ \ind{\op_l(s,a)} \mid \Fc_r] ] \\
    &= \Expect_{\phi} [ r_l(s,a) \ind{\op_l(s,a)} ].
\end{align*}
Therefore it holds that 
\begin{align*}
    \Expect_{s \sim \rho} \Expect_\phi V^\star(s) = \sum_{l,s,a} \Expect_\phi \left[ r_l(s, a) \ind{\op_l(s,a)} \right].
\end{align*}
Now note that
\begin{align*}
\Expect_\phi \left[ r_l(s, a) \ind{\op_l(s,a)} \right] &= \int_\reals r \Prob_\phi(r_l(s,a) = r, \op_l(s,a))\\
&= \int_\reals r \Prob_\phi(r_l(s,a) = r \mid \op_l(s,a)) \Prob_\phi(\op_l(s,a))\\
&= \Expect_{\phi} \left[ r_l(s, a) \mid \op_l(s,a)  \right] \Prob_\phi(\op_l(s,a)).
\end{align*}
Putting it all together,
\begin{align*}
\Expect_{s \sim \rho} \Expect_\phi V^\star(s) = \sum_{l,s,a} \Prob_{\phi}( \op_l(s,a) )  \Expect_{\phi} \left[ r_l(s, a) \mid \op_l(s,a)  \right].
\end{align*}
\end{proof}

\subsection{Proof of \Cref{lemma_DVdescription}}\label{appendix_proof_DV}

We prove in \Cref{lemma_DVdescription-generic} a generalization of \Cref{lemma_DVdescription} that does not rely on the assumption of $\sigma$-sub-Gaussian mean rewards. It builds on the Donsker-Varadhan representation \cite{gray2011entropy,russo2016information, russo2014learning, hao2022regret} (\Cref{prop_DV}), see also \cite[Theorem~1]{vbos}. We denote by $\KL{\cdot}{\cdot}$ the Kullback–Leibler divergence, by $\cgf_X:\reals \rightarrow \reals$ the cumulant generating function of random variable $X - \Expect X$ and by $f^\star:\reals \rightarrow \reals$ the convex conjugate of function $f:\reals \rightarrow \reals$, \ie,
\begin{align*}
     \cgf_{X}(\beta) \coloneqq \log \Expect \exp(\beta (X - \Expect X)), \quad\quad f^\star(y) \coloneqq \sup_{x \in \reals} \{ x y - f(x) \}.
\end{align*}

\begin{lemma}\label{lemma_DVdescription-generic}
    Let $X:\Omega \rightarrow \reals$ be a random variable on $(\Omega, \Fc, \Prob)$ satisfying $X \in L_1$ such that the interior of the domain of $\cgf_X$ is non-empty, and let $B \in \Fc$ be an event with $\Prob(B) > 0$. Then,\footnote{If $\cgf_X = 0$, then we take $\irate{X}{} = 0$.}
     \begin{align*}
    \Expect \left[ X \mid B  \right] &\leq  \Expect X + \irate{X}{}\left( \KL{\Prob(X \in \cdot \mid B ) }{ \Prob( X \in \cdot ) } \right) \\
     &\leq  \Expect X +  \irate{X}{}\left(-\log \Prob(B) \right). 
\end{align*}
\end{lemma}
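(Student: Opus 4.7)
The plan is to deploy the Donsker--Varadhan (DV) variational representation of KL divergence, together with a data-processing argument to bound the conditional KL by $-\log\Prob(B)$.

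\textbf{Step 1 (DV bound).} For any $\beta$ in the domain of $\cgf_X$, the DV representation gives
\begin{equation*}
\log \Expect e^{\beta (X - \Expect X)} \ \geq\ \Expect_{\Prob(X \in \cdot \mid B)}[\beta (X - \Expect X)] \ - \ \KL{\Prob(X \in \cdot \mid B)}{\Prob(X \in \cdot)} .
\end{equation*}
Restricting to $\beta > 0$ in the interior of the domain and rearranging yields
\begin{equation*}
\Expect[X \mid B] - \Expect X \ \leq\ \frac{\cgf_X(\beta) \ + \ \KL{\Prob(X \in \cdot \mid B)}{\Prob(X \in \cdot)}}{\beta}.
\end{equation*}

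\textbf{Step 2 (infimum = inverse rate).} Taking the infimum over $\beta > 0$ and noting the standard identity
\begin{equation*}
\inf_{\beta > 0} \frac{\cgf_X(\beta) + y}{\beta} \ =\ \irate{X}{}(y),
\end{equation*}
(which reproduces $\sigma\sqrt{2y}$ in the $\sigma$-sub-Gaussian case, recovering the statement of \Cref{lemma_DVdescription}) delivers the first inequality. The degenerate case $\cgf_X \equiv 0$ corresponds to $X$ being a.s.\ constant, in which case $\Expect[X \mid B] = \Expect X$ and $\irate{X}{} \equiv 0$ by convention, so the bound is trivial.

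\textbf{Step 3 (KL $\leq -\log \Prob(B)$).} The Radon--Nikodym derivative of the conditional law $\Prob(\cdot \mid B)$ with respect to $\Prob$ on the full $\sigma$-algebra is $\ind{B}/\Prob(B)$, so
\begin{equation*}
\KL{\Prob(\cdot \mid B)}{\Prob(\cdot)} \ =\ \Expect_{\Prob(\cdot \mid B)}\!\left[\log \tfrac{\ind{B}}{\Prob(B)}\right] \ =\ -\log \Prob(B).
\end{equation*}
Pushing forward through the measurable map $X$, the data-processing inequality for KL yields
\begin{equation*}
\KL{\Prob(X \in \cdot \mid B)}{\Prob(X \in \cdot)} \ \leq\ \KL{\Prob(\cdot \mid B)}{\Prob(\cdot)} \ =\ -\log \Prob(B).
\end{equation*}
Since $\irate{X}{}(\cdot)$ is the infimum over $\beta > 0$ of functions $y \mapsto (\cgf_X(\beta)+y)/\beta$ that are affine non-decreasing in $y$, it is itself non-decreasing, and composing with this KL bound yields the second inequality.

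\textbf{Main obstacle.} The only subtle point is the measure-theoretic setup in Step~3: ensuring the RN derivative identity and the data-processing inequality apply with $\Prob(B) > 0$, and handling the degenerate $\cgf_X \equiv 0$ case in Step~2. Everything else is a routine application of the DV variational principle and a change of variables to the inverse-Cram\'er function $\irate{X}{}$.
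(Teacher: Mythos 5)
Your proof is correct and follows essentially the same route as the paper: the Donsker--Varadhan representation gives the first inequality (your variational form $\inf_{\beta>0}(\cgf_X(\beta)+y)/\beta$ for $\irate{X}{}$ is exactly the characterization the paper itself uses), and the second inequality comes from bounding the KL divergence by $-\log\Prob(B)$. The only cosmetic difference is in that last step, where you invoke the data-processing inequality for the pushforward under $X$ while the paper reaches the same bound by rewriting the Radon--Nikodym derivative via Bayes' rule and dropping the nonpositive term $\int \log\Prob(B\mid X\in\cdot)\,d\Prob(X\in\cdot\mid B)$; both are valid.
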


\begin{proof}
Applying \Cref{prop_DV} to $\lambda (X - \Expect X)$ for arbitrary $\lambda \in \reals_+$ with the choices $Q \coloneqq \Prob(X \in \cdot)$, $P \coloneqq \Prob(X \in \cdot \mid B)$ gives
\begin{align*}
    \KL{\Prob(X \in \cdot \mid B) }{ \Prob(X \in \cdot)} &\geq \sup_{\lambda \in \reals_+} \left\{ \lambda \Expect[(X - \Expect X) \mid B] - \log \Expect[\exp  \lambda ( X - \Expect X) ] \right\} \\
    &= \sup_{\lambda \in \reals_+} \left\{ \lambda(\Expect[X \mid B] - \Expect X) - \cgf_{X}(\lambda) \right\} \\
    &= \cgf_{X}^{\star}(\Expect[X \mid B] - \Expect X),
\end{align*}
by definition of $\cgf_{X}$ the cumulant generating function of $X - \Expect X$ and by definition of the convex conjugate of $\cgf_{X}$ which satisfies $\cgf_{X}^{\star}(y) = \sup_{\lambda \geq 0} \{ \lambda y - \cgf_{X}(\lambda) \}$. Since $\cgf_{X}^{\star}$ is strictly increasing, its inverse $\irate{X}{}$ is also strictly increasing, so
\begin{align*}
    \Expect[X \mid B] - \Expect X \leq \irate{X}{}\left(  \KL{\Prob(X \in \cdot \mid B) }{ \Prob(X \in \cdot)} \right),
\end{align*}
which proves the first inequality. We now derive the second inequality. Note the definition of the KL divergence 
\begin{align*}
    \KL{P }{ Q} = \int \log\left( \frac{dP}{dQ}\right) dP, 
\end{align*}
where $\frac{dP}{dQ}$ is the Radon-Nikodym derivative of $P$ with respect to $Q$. We can then write
\begin{align*}
    \KL{\Prob(X \in \cdot \mid B) }{ \Prob(X \in \cdot)} &= \int  \log \frac{d\Prob(X \in \cdot \mid B) }{d\Prob(X \in \cdot) } d\Prob(X \in \cdot \mid B) \\
    &\myeqi \int \log  \frac{\Prob( X \in \cdot \mid B) }{\Prob(X \in \cdot) }  d\Prob(X \in \cdot \mid B)  \\
    &\myeqii \int \log  \frac{\Prob(B \mid X \in \cdot) }{\Prob(B) }  d\Prob(X \in \cdot \mid B)  \\
    &= \int \underbrace{\log \Prob( B \mid X \in \cdot) }_{\leq 0} d\Prob(X \in \cdot \mid B)  - \log \Prob(B) \\
    &\leq - \log \Prob(B),
\end{align*}
where (i) is from the formulation of Bayes' theorem stated in \cite[Theorem~1.31]{schervish2012theory} and (ii) applies Bayes' rule.
\end{proof}

\begin{proposition}[Variational form of the KL-Divergence given in Theorem~5.2.1 of \cite{gray2011entropy}] Fix two probability distributions $P$ and $Q$ such that $P$ is absolutely continuous with respect to $Q$. Then, 
\begin{align*}
    \KL{ P }{ Q } = \sup_X \left\{ \Expect_P[X] - \log \Expect_Q[\exp X] \right\},
\end{align*}
where $\Expect_P$ and  $\Expect_Q$ denote the expectation operator under $P$ and $Q$ respectively, and the supremum is taken over all real valued random variables $X$ such that $\Expect_P[X]$ is well defined and $\Expect_Q[\exp X] < \infty$. 
\label{prop_DV}
\end{proposition}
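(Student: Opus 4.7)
This is the Donsker--Varadhan variational representation of the KL divergence. The plan is to establish the two inequalities separately, using an explicit change-of-measure construction for the upper bound and an explicit choice of $X$ for the lower bound.

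\emph{Upper bound.} Fix any admissible $X$, i.e., one for which $\Expect_P[X]$ is well defined and $Z \coloneqq \Expect_Q[\exp X] \in (0, \infty)$. The key trick is to introduce the \emph{tilted} probability measure $Q_X$ via $dQ_X/dQ \coloneqq e^X / Z$, so that $\Expect_P[X] - \log Z = \Expect_P[\log(dQ_X/dQ)]$. Since $dQ_X/dQ > 0$, absolute continuity $P \ll Q$ implies $P \ll Q_X$, and I can write
\begin{align*}
\KL{P}{Q} - \bigl( \Expect_P[X] - \log Z \bigr) = \Expect_P\!\left[ \log \frac{dP/dQ}{dQ_X/dQ} \right] = \Expect_P\!\left[ \log \frac{dP}{dQ_X} \right] = \KL{P}{Q_X} \geq 0,
\end{align*}
by non-negativity of the KL divergence. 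Taking the supremum over all admissible $X$ yields $\KL{P}{Q} \geq \sup_X \{ \Expect_P[X] - \log \Expect_Q[\exp X] \}$.

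\emph{Lower bound.} The natural candidate for a maximizer is $X^\star \coloneqq \log(dP/dQ)$, which is well defined $P$-a.s.\ by absolute continuity. If $\KL{P}{Q} < \infty$, a direct computation gives $\Expect_Q[\exp X^\star] = \Expect_Q[dP/dQ] = 1$, so $\log \Expect_Q[\exp X^\star] = 0$, while $\Expect_P[X^\star] = \KL{P}{Q}$; thus $X^\star$ attains equality. For $\KL{P}{Q} = \infty$, I truncate: $X^\star_n \coloneqq \log(dP/dQ) \wedge n$ satisfies $\exp X^\star_n \leq (dP/dQ) + e^n$ so that $\Expect_Q[\exp X^\star_n] \leq 1 + e^n < \infty$, and the positive part of $X^\star_n$ converges monotonically to that of $X^\star$ while the negative part is dominated by the integrable $(X^\star)_-$ (whose $P$-integrability follows from $\log x \leq x - 1$, giving $\Expect_P[(X^\star)_-] \leq 1$). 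Hence $\Expect_P[X^\star_n] - \log \Expect_Q[\exp X^\star_n] \to +\infty$, so the supremum diverges as required.

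\emph{Main obstacle.} The substantive content is entirely contained in the tilted-measure identity, which rewrites the gap between $\KL{P}{Q}$ and $\Expect_P[X] - \log Z$ as $\KL{P}{Q_X}$ and thereby reduces the inequality to the nonnegativity of KL. The only genuine subtlety is the integrability bookkeeping for $X^\star = \log(dP/dQ)$: one must verify that its negative part is $P$-integrable so that $\Expect_P[X^\star]$ is unambiguously defined, and that the truncations $X^\star_n$ saturate the supremum in both the finite and infinite $\KL{P}{Q}$ regimes. This handled, the two inequalities combine to give the claim.
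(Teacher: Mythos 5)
The paper itself does not prove this proposition; it is quoted verbatim from Gray (Theorem~5.2.1), so your argument must stand on its own. Its architecture is the standard Donsker--Varadhan/Gibbs one: the tilted measure $dQ_X/dQ = e^X/\Expect_Q[\exp X]$ reduces the upper bound to nonnegativity of $\KL{P}{Q_X}$, and $X^\star = \log(dP/dQ)$ saturates it. Your upper bound and your finite-$\KL{P}{Q}$ case are correct (up to the routine case split when $\Expect_P[X]$ is infinite, which is harmless, and the point below about real-valuedness).

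However, the infinite-$\KL{P}{Q}$ truncation step has a genuine error as written. You bound $\Expect_Q[\exp X^\star_n] \leq 1 + e^n$ and then assert $\Expect_P[X^\star_n] - \log \Expect_Q[\exp X^\star_n] \to +\infty$. This does not follow from your stated estimates: since $X^\star_n \leq n$ pointwise, $\Expect_P[X^\star_n] \leq n$, whereas your bound only gives $\log \Expect_Q[\exp X^\star_n] \leq \log(1+e^n) > n$, so the quantity you exhibit is, on your own bounds, at most $0$ for every $n$ --- compatible with the supremum never diverging. The fix is one line: $\exp X^\star_n = \min\left(dP/dQ,\, e^n\right) \leq dP/dQ$, hence $\Expect_Q[\exp X^\star_n] \leq \Expect_Q[dP/dQ] = 1$ and $\log \Expect_Q[\exp X^\star_n] \leq 0$, so that $\Expect_P[X^\star_n] - \log \Expect_Q[\exp X^\star_n] \geq \Expect_P[X^\star_n] \to \KL{P}{Q} = \infty$ by your (correct) monotone-plus-dominated convergence argument; note this sharper bound also certifies admissibility, making the $1+e^n$ estimate unnecessary. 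A secondary, more minor point: $X^\star$ and $X^\star_n$ equal $-\infty$ on $\{dP/dQ = 0\}$, a set that can have positive $Q$-measure, so strictly they are not real-valued random variables in the admissible class (this affects your finite-$\KL{P}{Q}$ case too). Truncating from below as well, $X^\star_{n,m} \coloneqq \max(X^\star_n, -m)$, keeps the variables real-valued at the cost of $\Expect_Q[\exp X^\star_{n,m}] \leq 1 + e^{-m}$, i.e., an additive slack $\log(1+e^{-m}) \to 0$ as $m \to \infty$, after which both regimes go through and the claimed identity follows.
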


\begin{proof}[Proof of \Cref{lemma_DVdescription}]
Under the $\sigma_l(s,a)$-sub-Gaussian assumption of $r_l(s,a)$, we can bound the cumulant generating function of $r_l(s, a)$ as 
\begin{align*}
    \cgf_{r_l(s, a)}(\beta) \leq \frac{\beta^2 \sigma_l(s,a)^2}{2}.
\end{align*}
We therefore have that for any $y \geq 0$,
\begin{align}\label{eq_inf_over_tau}
   \irate{r_l(s, a)}{}(y) = \inf_{\tau_l(s,a) > 0} \left\{ \tau_l(s,a) \cgf_{r_l(s, a)}(1/\tau_l(s,a)) + \tau_l(s,a) y \right\} \leq \sigma_l(s,a) \sqrt{ 2 y}.
\end{align}
Plugging this bound into \Cref{lemma_DVdescription-generic} yields \Cref{lemma_DVdescription}.
\end{proof}

\subsection{Proof of \Cref{lemma_variational_knownP}}

\lemmavariationalknownP*

\begin{proof}
The first inequality in \eqref{eq_varapprox_knownP} combines \Cref{lemma_dual_view_RL_opt,lemma_DVdescription}. The second inequality comes from the fact that $\pop \in \Lambda(P)$ from \Cref{lemma_probopt_occupancymeasure}. The \vaporopt optimization problem is concave since (i) the objective function $\V$ is concave in $\lambda$ by concavity of the function $x \in (0,1) \rightarrow x \sqrt{-\log x}$, and (ii) the constraints are a convex set due to the fact that the state-action polytope $\Lambda(P)$ is closed, bounded and convex \cite[Theorem~8.9.4]{puterman2014markov}. Finally, note that the maximum exists and is attained, since $\Lambda(P)$ is closed, bounded and nonempty due to the fact that $\pop \in \Lambda(P)$.
\end{proof}

\subsection{\vaporopt without the Sub-Gaussian Assumption} \label{app_vapor_with_sg_asm}

From \eqref{eq_inf_over_tau}, for \emph{any} set of reward beliefs, the \vaporopt optimization problem (\Cref{lemma_variational_knownP}) becomes 
\begin{align*}
    \max_{\lambda \in \Lambda(P)} ~ \lambda ^\top \left( \Expect_\phi r + \irate{r}{}\left( - \log \lambda\right) \right) = \sum_{l,s,a} \lambda_{l}(s,a)  \left( \Expect_\phi r_l(s, a)  + \irate{r_l(s, a)}{}\left( - \log \lambda_l(s,a)\right) \right). 
\end{align*}
This objective remains a concave optimization problem since the function $\irate{r_l(s, a)}{}$ is always concave (as the inverse of the strictly increasing convex function $\cgf_{r_l(s, a)}$).

\subsection{Proof of \Cref{lemmaBoundWeightedKl}} \label{app_bound_var_approx}

\lemmaBoundWeightedKl*

\begin{proof}
    The proof extends the relationship between Bregman divergence and value in policy-regularized MDPs \cite{o2022connection} to the setting of occupancy-measure-regularized MDPs. The Bregman divergence generated by $\Omega$ between two points $\lambda, \lambda' \in \Lambda(P)$ is defined as 
\begin{align*}
    D_\Omega(\lambda, \lambda') \coloneqq \Omega(\lambda) - \Omega(\lambda') - \nabla \Omega(\lambda') ^\top (\lambda - \lambda').
\end{align*}
For ease of notation, we omit the dependence on the beliefs $\phi$ throughout the proof. We write
\begin{align*}
    \V(\lambda) &\coloneqq \sum_{l=1}^L \sum_{s,a} \lambda_l(s,a) \Expect_\phi r_l(s, a) - \Omega(\lambda), \qquad \V^\ast \coloneqq \max_{\lambda \in \Lambda(P)} \V(\lambda), \qquad \lambda^\ast \in \argmax_{\lambda \in \Lambda(P)} \V(\lambda),
\end{align*}
where $\Omega:\Lambda(P) \rightarrow \reals$ is a continuously differentiable strictly convex regularizer defined as
\begin{align*}
    \Omega(\lambda) \coloneqq - \sum_{l=1}^L \sum_{s,a} \sigma_l(s,a)  \lambda_{l}(s,a) \sqrt{- 2  \log \lambda_{l}(s,a)}.
\end{align*}
    By definition, $\lambda^\ast$ must satisfy the first-order optimality conditions for the maximum of $\V$, and since $\lambda^\ast \in \textrm{relint}(\Lambda(P))$ the following holds 
\begin{align*}
    (\Expect_\phi r - \nabla \Omega(\lambda^\ast)) ^\top (\lambda - \lambda^\ast) = 0, \quad \forall \lambda \in \Lambda(P).
\end{align*}
Then, 
\begin{align}
    D_\Omega(\pop, \lambda^\ast) &= \Omega(\pop) - \Omega(\lambda^\ast) - \nabla \Omega(\lambda^\ast) ^\top (\pop - \lambda^\ast) \nonumber \\
    &= \Omega(\pop) - \Omega(\lambda^\ast) - \Expect_\phi r ^\top (\pop - \lambda^\ast) \nonumber \\
    &= \V^\ast - \Expect_\phi r^\top \pop  + \Omega(\pop) \nonumber \\
    &= \V^\ast - \V(\pop). \label{eq_relation_bregman_divergence_value}
\end{align}
We now notice that 
\begin{align*}
    \Omega(\lambda) = - \sum_{l=1}^L \sum_{s,a} \min_{\tau_l(s,a) > 0} \left\{ \frac{\sigma^2_l(s,a)}{2 \tau_l(s,a)} - \tau_l(s,a) \lambda_{l}(s,a) \log \lambda_{l}(s,a) \right\}.
\end{align*}
For any positive $\tau \in \realsplsa$, we define 
\begin{align*}
    \Omega_\tau(\lambda) &\coloneqq \sum_{l=1}^L \sum_{s,a} - \frac{\sigma^2_l(s,a)}{2 \tau_l(s,a)} + \tau_l(s,a) \lambda_{l}(s,a) \log \lambda_{l}(s,a)  \\
    \V_\tau(\lambda) &\coloneqq \sum_{l=1}^L \sum_{s,a} \lambda_l(s,a) \Expect_\phi r_l(s, a) - \Omega_\tau(\lambda), \\
    \V^\ast_\tau &\coloneqq \max_{\lambda \in \Lambda(P)} \V_\tau(\lambda), \quad \lambda^\ast_\tau \in \argmax_{\lambda \in \Lambda(P)} \V_\tau(\lambda).
\end{align*}
Using that the KL-divergence is the Bregman divergence generated by the negative-entropy function, \eqref{eq_relation_bregman_divergence_value} implies that
\begin{align*}
    \KLweighted{\tau}{\pop}{ \lambda^\ast_\tau} = \V^\ast_\tau - \V_\tau(\pop).
\end{align*}
Moreover, from strong duality (which holds because $\Lambda(P)$ is convex),
\begin{align*}
    \min_{\tau \in \realsplsa} \max_{\lambda \in \Lambda(P)} \V_\tau(\lambda) = \max_{\lambda \in \Lambda(P)} \min_{\tau \in \realsplsa} \V_\tau(\lambda).
\end{align*}
Denote $\tau^\ast \in \argmin_{\tau} \V^\ast_\tau$, then it holds that
\begin{align*}
    \KLweighted{\tau^\ast}{\pop}{ \lambda^\ast} = \V^\ast - \V_{\tau^\ast}(\pop) \leq \V^\ast - \V(\pop),
\end{align*}
which concludes the proof.
\end{proof}

\subsection{Proof of \Cref{thm_no_p_regret_bound} and \Cref{corpopregretbound}}

Below we prove a more general statement that will imply \Cref{thm_no_p_regret_bound} and \Cref{corpopregretbound}.

\begin{lemma}\label{thm_generic_regret_bound} Let \alg produce any sequence of $\Fc_t$-measurable policies $\pi^t, t = 1, \ldots, N$, whose induced occupancy measures $\lambda^{\pi^t}$ satisfy
\begin{align} \label{condition_policyt_alg}
    \Expect_{s \sim \rho} \Expect_{\phi} V_1^\star(s) \leq \V_{\phi^t}(\Expect^t \lambda^{\pi^t}),
\end{align}
where we assume that the uncertainty measure $\sigma^t$ of the $\V_{\phi^t}$ function \eqref{eq_defV_min_tau} satisfies 
\begin{align} \label{eq_condition_sigma}
    \sigma^t_l(s,a) \leq \frac{c_0}{\sqrt{(n_l^t(s,a) \vee 1)}},
\end{align}
for some $c_0 >0$ and any $t \in [N]$, $l \in [L]$, $(s,a) \in \Sc_l \times \Ac$, where we recall that $n_l^t(s,a)$ denotes the visitation count to $(s,a)$ at step $l$ before episode $t$ and $(\cdot \vee 1) \coloneqq \max(\cdot,1)$. Then it holds that
    \begin{align*}
      \bayesregret_{\phi}(\alg, T) &\leq \sqrt{2 c_0^2 T S A \log(SA) (1 + \log T/L)} \\ 
      &= \widetilde{O}(c_0 \sqrt{S A T}).  
    \end{align*}
\end{lemma}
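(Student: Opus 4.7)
The plan is to bound the Bayes regret episode by episode: combine the hypothesis \eqref{condition_policyt_alg} with a known-$P$ value decomposition to get a per-episode upper bound, then aggregate via Cauchy--Schwarz together with an entropy inequality and a pigeonhole argument on the visitation counts.

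Writing $\bar\lambda^t := \Expect^t \lambda^{\pi^t}$, the tower property gives
\begin{align*}
\bayesregret_\phi(\alg, T) = \sum_{t=1}^N \Expect\left[\Expect_{s\sim\rho}\Expect_{\phi^t}V_1^\star(s) - \Expect^t \Expect_{s\sim\rho}V_1^{\pi^t}(s)\right].
\end{align*}
Since $P$ is known, $\lambda^{\pi^t}$ depends only on $\pi^t$ and is independent of the random rewards conditional on $\Fc_t$, so the second inner expectation equals $\sum_{l,s,a} \bar\lambda^t_l(s,a)\,\Expect^t r_l(s,a)$. Applying \eqref{condition_policyt_alg} to upper bound the first term by $\V_{\phi^t}(\bar\lambda^t)$ and then subtracting, the definition \eqref{eq_defV_min_tau} of $\V_{\phi^t}$ yields the per-episode bound
\begin{align*}
\V_{\phi^t}(\bar\lambda^t) - \sum_{l,s,a} \bar\lambda^t_l(s,a)\,\Expect^t r_l(s,a) = \sum_{l,s,a} \bar\lambda^t_l(s,a)\,\sigma^t_l(s,a)\sqrt{-2\log \bar\lambda^t_l(s,a)}.
\end{align*}

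Summing over $t$ and applying Cauchy--Schwarz in $(t,l,s,a)$ to the factorization $\sqrt{\bar\lambda^t (\sigma^t)^2}\cdot\sqrt{-2\bar\lambda^t \log \bar\lambda^t}$, followed by Jensen to pull the expectation inside the square root, gives
\begin{align*}
\bayesregret_\phi(\alg,T) \leq \sqrt{\Expect\sum_{t,l,s,a} \bar\lambda^t_l(s,a)(\sigma^t_l(s,a))^2}\;\cdot\;\sqrt{-2\sum_{t,l,s,a} \bar\lambda^t_l(s,a)\log \bar\lambda^t_l(s,a)}.
\end{align*}
For the second factor, for each $t$ and $l$ the quantity $\bar\lambda^t_l(\cdot,\cdot)$ is a probability distribution on $\Sc_l\times\Ac$, so its entropy is at most $\log(S_lA)\leq\log(SA)$; summing over $l$ and $t$ yields the deterministic bound $NL\log(SA)=T\log(SA)$. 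For the first factor, \eqref{eq_condition_sigma} reduces the task to bounding $c_0^2\,\Expect\sum_{t,l,s,a}\bar\lambda^t_l(s,a)/(n^t_l(s,a)\vee 1)$.

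The main obstacle is this last sum. Using that $n^t_l(s,a)$ is $\Fc_t$-measurable and $\Expect^t[n^{t+1}_l(s,a)-n^t_l(s,a)] = \bar\lambda^t_l(s,a)$, the tower property lets me replace each summand by $(n^{t+1}_l(s,a)-n^t_l(s,a))/(n^t_l(s,a)\vee 1)$. The standard telescoping pigeonhole $\sum_t (n^{t+1}_l(s,a)-n^t_l(s,a))/(n^t_l(s,a)\vee 1) \leq 1 + \log n^{N+1}_l(s,a)$, combined with Jensen's inequality applied to $\log$ using the layerwise constraint $\sum_{s,a} n^{N+1}_l(s,a) = N = T/L$, yields
\begin{align*}
\Expect\sum_{t,l,s,a}\frac{\bar\lambda^t_l(s,a)}{n^t_l(s,a)\vee 1} \leq SA\bigl(1+\log(T/L)\bigr).
\end{align*}
Putting the pieces together gives $\bayesregret_\phi(\alg,T)\leq\sqrt{2c_0^2\,T\,SA\,\log(SA)(1+\log(T/L))}$ as claimed. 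The delicate step is the exchange of expectation with visitation-count increments just described; the rest is a clean chain of Cauchy--Schwarz, Jensen, and entropy bounds. A pleasant feature of this argument is that it is oblivious to how $\pi^t$ is produced --- it only uses \eqref{condition_policyt_alg} and \eqref{eq_condition_sigma} --- which is exactly what makes the statement applicable both to \vaporalg (yielding Theorem \ref{thm_no_p_regret_bound}) and to $\mathrm{alg}_\op$ (yielding Corollary \ref{corpopregretbound}) once one verifies that their respective occupancy measures satisfy the hypothesis.
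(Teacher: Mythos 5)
Your proposal is correct and follows essentially the same route as the paper's proof: tower property plus the occupancy-measure (dual) formulation of $V^{\pi^t}$, the hypothesis \eqref{condition_policyt_alg} to reduce the per-episode regret to the $\sigma\sqrt{-2\log\lambda}$ bonus term, Cauchy--Schwarz to split off an entropy factor bounded by $\log(SA)$, and a pigeonhole bound on $\sum_{t,l,s,a}\Expect^t\lambda_l^{\pi^t}(s,a)/(n_l^t(s,a)\vee 1)$. The only cosmetic difference is that you prove the pigeonhole step directly via telescoping count increments where the paper cites it as a known lemma.
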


\begin{proof}
Note that $\Expect^t \lambda^{\pi^t}$ corresponds to the occupancy measure of policy $\pi^t$ on the expected transition dynamics $\Expect^t P$, \ie, $\Expect^t \lambda^{\pi^t} \in \Lambda(\Expect^t P)$. We bound the Bayes regret of \alg as
\begin{align*}
\bayesregret_{\phi}(\alg, T)
  &\myeqi\Expect \sum_{t=1}^{\episodes}
\Expect_{s \sim \rho} \Expect^t \left[V_1^\star(s) - V_1^{\pi^{t}}(s)\right] \\
&\myeqii \Expect \sum_{t=1}^{\episodes} \left[
\Expect_{s \sim \rho} \Expect^t V_1^\star(s) -    \sum_{l=1}^L  \sum_{s,a} 
  \Expect^t \lambda_l^{\pi^t}(s, a) \Expect^t r_l(s, a)  \right] \\
  &\myineqiii  \Expect \sum_{t=1}^{\episodes} \left[ \V_{\phi^t}(\Expect^t \lambda^{\pi^t}) -  \sum_{l=1}^L  \sum_{s,a} 
  \Expect^t \lambda_l^{\pi^t}(s, a) \Expect^t r_l(s, a) \right] \\
  &\myeqiv \Expect \sum_{t=1}^{\episodes} \sum_{l=1}^L \sum_{s,a} \Expect^t \lambda_l^{\pi^t}(s,a) \sigma_l^t(s,a) \sqrt{- 2  \log \Expect^t \lambda_{l}^{\pi^t}(s,a)}  \\
  &\myineqv \Expect \sqrt{ \sum_{t=1}^{\episodes} \sum_{l=1}^L \Hc(\Expect^t \lambda_l^{\pi^t}) } \sqrt{\sum_{t=1}^{\episodes} \sum_{l=1}^L \sum_{s,a} 2 \Expect^t \lambda_l^{\pi^t}(s,a) \sigma_l^t(s,a)^2 }  \\
  &\myineqvi \sqrt{ \Expect \sum_{t=1}^{\episodes} \sum_{l=1}^L \Hc(\Expect^t \lambda_l^{\pi^t}) } \sqrt{\Expect \sum_{t=1}^{\episodes} \sum_{l=1}^L \sum_{s,a}  2 \Expect^t \lambda_l^{\pi^t}(s,a) \sigma_l^t(s,a)^2 },
\end{align*}
where (i) follows from the tower property of conditional expectation where the outer expectation is with respect to $\Fc_1, \Fc_2, \ldots$, (ii) is from the dual formulation of the value function averaged with respect to the initial state distribution \cite{puterman2014markov} and the fact that $\lambda^{\pi^t}_l(s,a)$ and $r_l(s,a)$ are conditionally independent given $\Fc_t$, (iii) is due to condition \eqref{condition_policyt_alg}, (iv) is from the definition of $\V_{\phi^t}$, (v) and (vi) are by applying the Cauchy-Schwarz inequality, where $\Hc$ denotes the entropy function \cite{cover1991information}.

We bound the left term by using the fact that $\Hc(\Expect^t \lambda_l^{\pi^t}) \leq \log S A$. We bound the right term using condition \eqref{eq_condition_sigma} and by applying the pigeonhole principle (\eg, \cite[Lemma~6]{klearning}) which gives
\begin{align*}
    \Expect \sum_{t=1}^{\episodes} \sum_{l=1}^L \sum_{s,a} \frac{\Expect^t \lambda_l^{\pi^t}(s,a)}{n_l^t(s,a) \vee 1} &= \Expect \sum_{t=1}^{\episodes} \sum_{l=1}^L \Expect^t \left( \sum_{s,a} \frac{ \Expect^t \lambda_l^{\pi^t}(s,a)}{n_l^t(s,a) \vee 1} \right) \\
    &=  \sum_{l=1}^L  \Expect \left( \sum_{t=1}^{\episodes}  \sum_{s,a} \frac{ \lambda_l^{\pi^t}(s,a)}{n_l^t(s,a) \vee 1} \right) \\
    &\leq \sum_{l=1}^L  S_l A (1 + \log N) \\
    &= S A (1 + \log N),
\end{align*}
which follows from the tower property of conditional expectation and since the counts at time $t$ are $\Fc_t$-measurable. Taking $T=N L$ finally yields
\begin{align*}
\bayesregret_{\phi}(\alg, T) &\leq \sqrt{2 c_0^2 N L S A \log(SA) (1 + \log N)} \\ 
      &= \widetilde{O}(c_0 \sqrt{S A T}).  
\end{align*}
\end{proof}

\thmnopregretbound*

\begin{proof}
    \Cref{assumption_1} implies that the mean rewards are sub-Gaussian under the posterior (see \cite[App.\,D.2]{russo2016information}), where we can upper bound the sub-Gaussian parameter $\sigma_l^t(s,a) \leq \sqrt{(\nu^2 + 1) /(n_l^t(s,a) \vee 1)}$ at the beginning of each episode $t$. Thus the condition \eqref{eq_condition_sigma} holds for $c_0 = \sqrt{\nu^2 + 1}$. Moreover, \Cref{lemma_variational_knownP} implies that the occupancy measures that solve the \vaporopt optimization problem satisfy condition \eqref{condition_policyt_alg}. The result thus follows from applying \Cref{thm_generic_regret_bound}. 
\end{proof}

\corpopregretbound*

\begin{proof}
    The result comes from combining \Cref{lemma_variational_knownP,thm_generic_regret_bound}, as done in the proof of \Cref{thm_no_p_regret_bound}.
\end{proof}

\subsection{Proof of \Cref{lemma_reduction_V}}


\begin{definition}[Transformed beliefs]\label{def_modified}
    Consider any beliefs $\phi$ such that the mean rewards are $\sigma$-sub-Gaussian and the transition dynamics are $\alpha$-Dirichlet. We define the transformed beliefs $\modif\phi$ according to which the transition dynamics are known and equal to $\Expect_\phi P$ and the rewards are distributed as $\Expect_\phi r + \omega$, where $\omega_l(s,a)$ follows an independent zero-mean Gaussian distribution with variance $\modif\sigma^2$ defined for each step $l \in [L]$ and state-action pair $(s,a) \in \Sc_l \times \Ac$ as
\begin{align*}
    \modif\sigma_l(s,a)^2 \coloneqq 3.6^2 \sigma^2_l(s,a) + \frac{(L-l)^2}{\sum_{s' \in \Sc_{l+1}} \alpha_l(s,a,s')}.
\end{align*}
\end{definition}

We prove in \Cref{app_proof_lemma_bound_Vstar_modified} the following stronger result which implies \Cref{lemma_reduction_V}.

\begin{lemma}\label{lemma_bound_Vstar_modified}
    For any step $l \in [L]$ and state $s \in \Sc_l$, it holds that
    \begin{align*}
        \Expect_\phi  V_l^{\star}(s)  \leq \Expect_{\modif\phi}  V_l^{\star}(s).
    \end{align*}
\end{lemma}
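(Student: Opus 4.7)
I would proceed by backward induction on $l$ from $L+1$ down to $1$, strengthening the target to the stochastic-optimism statement that for every $l$, every $s \in \Sc_l$, and every nondecreasing convex $f : \reals \to \reals$,
\begin{align*}
\Expect_{\modif\phi} f(V_l^\star(s)) \geq \Expect_\phi f(V_l^\star(s)),
\end{align*}
so that taking $f(x) = x$ recovers the lemma. The base case $l = L+1$ is immediate because $V_{L+1}^\star \equiv 0$ under both beliefs.

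For the inductive step, fix $l$ and assume the hypothesis at $l+1$. By the layered, time-inhomogeneous structure the step-$l$ parameters $\{r_l(s,a), P_l(\cdot \mid s,a)\}_{a \in \Ac}$ are independent of $V_{l+1}^\star$ under both $\phi$ and $\modif\phi$. First condition on $V_{l+1}^\star = v$ for an arbitrary $v \in [0, L-l]^{S_{l+1}}$, a box that contains the almost-sure support of $V_{l+1}^\star$ under $\phi$ by the bounded-reward assumption. For each action $a$ I compare the conditional law of $Q_l^\star(s,a) = r_l(s,a) + P_l(\cdot \mid s,a)^\top v$ under the two posteriors using two ingredients: the new Gaussian--sub-Gaussian optimism result of \Cref{app_gaussian_subgaussian_so}, which shows that $\Nc(0, 3.6^2 \sigma_l^2(s,a))$ convex-order-dominates the centered sub-Gaussian reward noise; and the Gaussian--Dirichlet optimism of \cite{osband2017gaussian}, which, using the bound on $v$, shows that $\Nc(0, (L-l)^2 / \sum_{s'} \alpha_l(s,a,s'))$ convex-order-dominates the centered Dirichlet-weighted term $(P_l(\cdot \mid s,a) - \Expect_\phi P_l(\cdot \mid s,a))^\top v$. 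By the standard closure of the convex-increasing order under independent summation, these combine into a single centered Gaussian of variance $\modif\sigma_l^2(s,a)$, precisely the reward noise added under $\modif\phi$. Since the step-$l$ parameters are independent across $a$ under both beliefs and $(q_a)_a \mapsto \max_a q_a$ is convex and coordinate-wise nondecreasing, the per-action domination lifts to $V_l^\star(s) = \max_a Q_l^\star(s,a)$ conditional on $V_{l+1}^\star = v$.

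To uncondition from $v$, let $g(v) \coloneqq \Expect_\phi[f(V_l^\star(s)) \mid V_{l+1}^\star = v]$ and $\tilde g(v) \coloneqq \Expect_{\modif\phi}[f(V_l^\star(s)) \mid V_{l+1}^\star = v]$. The conditional step yields $g(v) \leq \tilde g(v)$ for every $v$ in the support of $V_{l+1}^\star$ under $\phi$, and $\tilde g$ is convex and coordinate-wise nondecreasing in $v$ because $V_l^\star(s)$ is a maximum of affine functions of $v$ with nonnegative transition coefficients and because $f$ and expectation preserve convexity and monotonicity. Invoking the inductive hypothesis for the joint distribution of $V_{l+1}^\star(\cdot)$ (which is legitimate because the layered structure makes all of its randomness originate from layers $\geq l+1$, transformed coherently by $\phi \mapsto \modif\phi$) then gives
\begin{align*}
\Expect_\phi f(V_l^\star(s)) = \Expect_\phi g(V_{l+1}^\star) \leq \Expect_\phi \tilde g(V_{l+1}^\star) \leq \Expect_{\modif\phi} \tilde g(V_{l+1}^\star) = \Expect_{\modif\phi} f(V_l^\star(s)),
\end{align*}
closing the induction. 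The main obstacle is establishing the Gaussian--sub-Gaussian optimism lemma with a tight enough constant: the value $3.6$ is exactly what makes the inflated Gaussian convex-order-dominate every centered $\sigma$-sub-Gaussian, and is the ingredient that removes the $\{0,1\}$-reward restriction of \cite[Asm.\,3]{osband2019deep}. A secondary subtlety is carrying the induction in terms of the \emph{joint} convex-increasing order of $V_{l+1}^\star(\cdot)$ across $\Sc_{l+1}$ rather than just coordinate-wise; this is handled cleanly by the time-inhomogeneous layered assumption.
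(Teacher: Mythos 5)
Your overall strategy coincides with the paper's: both proofs run a backward induction in the stochastic-optimism order $\geq_{SO}$, combine Gaussian--Dirichlet optimism \cite{osband2017gaussian} for the transition term with the Gaussian--sub-Gaussian optimism lemma (constant $3.6$) for the reward term, add the two dominating Gaussians using conditional independence, and lift the per-action comparison through the convex increasing map $\max_a$. The main structural difference is that you condition on $V_{l+1}^\star = v$ and work with value functions, whereas the paper composes two randomized Bellman operators $\Bc_l$ and $\modif{\Bc}_l$ acting on $Q$-functions and splits the argument into a one-step optimism property (\Cref{lemma_property2_SO}, your fixed-$v$ comparison) and a monotonicity property (\Cref{lemma_property1_monotonicity}, imported from \cite{osband2019deep}, playing the role of your unconditioning step).

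That unconditioning step is where your proposal has a genuine gap. You need $\Expect_\phi \tilde g(V_{l+1}^\star) \leq \Expect_{\modif\phi}\tilde g(V_{l+1}^\star)$ for a \emph{multivariate} convex, coordinate-wise nondecreasing $\tilde g : \reals^{S_{l+1}} \to \reals$, i.e., stochastic optimism of the joint law of the vector $(V_{l+1}^\star(s'))_{s' \in \Sc_{l+1}}$. Your strengthened inductive hypothesis only delivers the univariate statement for each coordinate separately, and coordinate-wise optimism of the marginals does not imply optimism of the joint law when the coordinates are dependent --- which they are here: two states at layer $l+1$ can share successor states at layer $l+2$, so their optimal values share the randomness of $V_{l+2}^\star$ under $\phi$ (and share the deeper noise variables $\omega_{l'}$ under $\modif\phi$). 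The layered, time-inhomogeneous assumption gives independence \emph{between} the layer-$l$ parameters and $V_{l+1}^\star$, not independence of $V_{l+1}^\star$ \emph{across} states, so it does not ``handle this cleanly'' as you assert. To close the induction you must carry a joint optimism statement as the hypothesis, which forces you to track which quantities are conditionally independent at each stage; this is precisely the bookkeeping the paper performs by phrasing \Cref{lemma_property1_monotonicity} for $Q$-functions whose entries are independent across state-action pairs conditioned on the beliefs, and by invoking the closure of $\geq_{SO}$ under convex increasing functions of independent arguments \cite{osband2019deep}. Once you restructure the induction along these lines, you essentially recover the paper's proof.
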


\subsubsection{Proof of \Cref{lemma_bound_Vstar_modified}} \label{app_proof_lemma_bound_Vstar_modified}

\textbf{Overview.} The proof relies on studying properties of two stochastic Bellman operators, drawing inspiration from \cite[Section\,6.5]{osband2019deep}. In particular we will make use of the notion of stochastic optimism (\Cref{definition_SO}). To deal with the unknown transitions, we will use the property of Gaussian-Dirichlet optimism \cite{osband2017gaussian}. To deal with unknown rewards, we will derive a property of Gaussian-sub-Gaussian optimism (\Cref{lemma_SO_subgaussian}) which holds when the variance of the Gaussian distribution is multiplied by a small universal constant.

\begin{definition}\label{definition_SO}
    A random variable $X$ is stochastically optimistic with respect to another random variable $Y$, written $X \geq_{SO} Y$, if $\Expect u(X) \geq \Expect u(Y)$ for all convex increasing functions $u:\reals \rightarrow \reals$.
\end{definition}

Likewise, we will say that $X$ is stochastically optimistic with respect to $Y$ under $\phi$ and write that $X \mid \phi \geq_{SO} Y \mid \phi$ if $\Expect_{\phi} u(X) \geq \Expect_{\phi} u(Y)$ for all convex increasing functions $u:\reals \rightarrow \reals$. 

For any step $l \in [L]$, we introduce the Bellman operator $\Bc_l : \reals^{S_{l+1} \times A} \rightarrow \reals^{S_{l+1} \times A}$ which is defined for any $Q_{l+1} \in \reals^{S_{l+1} \times A}$ and state-action pair $(s,a) \in \Sc_l \times \Ac$ as
\begin{align*}
    \Bc_l Q_{l+1}(s,a) &\coloneqq r_l(s,a) + P_l(\cdot \mid s,a) ^\top \max_{a'} Q_{l+1}(\cdot, a') \\
    &\coloneqq r_l(s,a) + \sum_{s' \in \Sc_{l+1}} P_l(s' \mid s,a) \max_{a'} Q_{l+1}(s', a').
\end{align*}
We also define the `transformed' Bellman operator $\modif{\Bc}_l : \reals^{S_{l+1} \times A} \rightarrow \reals^{S_{l+1} \times A}$ as
\begin{align*}
    \modif{\Bc}_l Q_{l+1}(s,a) &\coloneqq \Expect_\phi r_l(s,a) +  \omega_l(s,a) + \Expect_\phi P_l(\cdot \mid s,a) ^\top \max_{a'} Q_{l+1}(\cdot, a'),
\end{align*}
where we recall that $\omega_l(s,a) ~ \sim \Nc\left(0, \modif\sigma_l(s,a)^2\right)$ is distributed independently across the state-action pairs $(s,a) \in \Sc_l \times \Ac$ under $\phi$. Note that $\Bc_l$ and $\modif{\Bc}$ can be viewed as randomized Bellman operators due to the randomness in the MDP $\Mc$ and in the distribution $\omega$, respectively. We rely on two key properties. 

\begin{lemma}[Property 1: monotonicity]\label{lemma_property1_monotonicity}
For any step $l \in [L]$, consider two random $Q$ functions $Q_{l+1}, Q_{l+1}' \in \reals^{\Sc_{l+1} \times \Ac}$ such that conditioned on $\phi$ the entries of $Q_{l+1}$ (respectively $Q_{l+1}'$) are drawn independently across state-action pairs $(s,a) \in \Sc_{l+1} \times \Ac$ and drawn independently of the noise terms $\omega_l(s,a)$. Then
\begin{align*}
    Q_{l+1}(s,a) \mid \phi ~ \geq_{SO} ~ Q_{l+1}'(s,a) \mid \phi  \quad \forall(s,a) \in \Sc_{l+1} \times \Ac
\end{align*}
implies 
\begin{align*}
    \modif{\Bc}_l Q_{l+1}(s,a) \mid \modif\phi ~ \geq_{SO} ~ \modif{\Bc}_l Q_{l+1}'(s,a) \mid \modif\phi  \quad \forall(s,a) \in \Sc_l \times \Ac, ~ l \in [L].
\end{align*}
\end{lemma}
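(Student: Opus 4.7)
The plan is to prove the desired stochastic optimism by chaining three standard closure properties of the increasing convex order (which coincides with $\geq_{SO}$ here): preservation under max, under non-negative linear combinations, and under addition of independent families. Fix $(s,a) \in \Sc_l \times \Ac$ and a convex increasing $u:\reals \to \reals$; the target inequality is $\Expect_{\modif\phi}\,u(\modif{\Bc}_l Q_{l+1}(s,a)) \geq \Expect_{\modif\phi}\,u(\modif{\Bc}_l Q'_{l+1}(s,a))$.

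First I would handle the inner $\max_{a'}$. For each $s' \in \Sc_{l+1}$, let $Z(s') \coloneqq \max_{a'} Q_{l+1}(s', a')$ and analogously $Z'(s')$. By assumption the entries $\{Q_{l+1}(s', a')\}_{a'}$ are independent under $\phi$ and coordinate-wise stochastically optimistic with respect to those of $Q'_{l+1}$. Since for any convex increasing $v$ the map $(q_1,\ldots,q_A) \mapsto v(\max_i q_i)$ is convex and increasing in each coordinate, I can iteratively replace one $Q'_{l+1}(s',a')$ by $Q_{l+1}(s',a')$ at a time: conditioning on the remaining coordinates (which is legitimate by independence) reduces the step to the one-variable definition of $\geq_{SO}$, and Fubini recombines the expectations. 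This yields $Z(s') \geq_{SO} Z'(s')$ under $\phi$.

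Next I would push this through the full update. The $\{Z(s')\}_{s'}$ remain independent across $s'$ (inherited from the product structure of $Q_{l+1}$'s entries, which is unaffected by the mapping $\phi\mapsto\modif\phi$), with non-negative weights $\Expect_\phi P_l(s'\mid s,a)$. Applying the same coordinate-swap argument to the coordinate-wise convex increasing map
\[
(z_{s'})_{s'} \;\mapsto\; u\!\left(\Expect_\phi r_l(s,a) + \omega_l(s,a) + \sum_{s'} \Expect_\phi P_l(s'\mid s,a)\, z_{s'}\right),
\]
and then taking expectation over $\omega_l(s,a)$ (which is independent of $Q_{l+1}$ and $Q'_{l+1}$ by hypothesis), delivers the conclusion.

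The main obstacle is really just careful bookkeeping of which variables are independent under which specification. Three points need clean handling: \textbf{(i)} the coordinate-replacement argument requires closure of $\geq_{SO}$ under maxima of \emph{independent} random variables, which is strictly stronger than the analogous first-order statement and must appeal to the convexity of $u\circ\max$ in each argument; \textbf{(ii)} $Q_{l+1}$'s joint law, specified under $\phi$, is unchanged by the transform $\posteriormap$ (which only modifies the step-$l$ reward and transition distributions), so the independence assumptions transfer intact to expectations under $\modif\phi$; and \textbf{(iii)} the Gaussian $\omega_l(s,a)$, the sole new randomness introduced by $\modif\phi$ at step $l$, must be independent of the downstream $Q_{l+1}$ for the final conditioning step to be valid --- precisely the extra hypothesis stated in the lemma.
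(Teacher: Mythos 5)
Your proposal is correct and follows essentially the same route as the paper, which simply cites \cite[Lemmas~2--3]{osband2019deep} for the facts that $\modif{\Bc}_l Q_{l+1}(s,a)$ is a coordinate-wise convex increasing function of the independent entries of $Q_{l+1}$ convolved with the independent noise $\omega_l(s,a)$, and that $\geq_{SO}$ is preserved under such operations. Your coordinate-swap (hybrid) argument is precisely the standard proof of that closure property, so you are supplying the details the paper outsources to the citation rather than taking a genuinely different path.
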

\begin{proof}
The proof follows exactly the steps of \cite[Lemma~3]{osband2019deep}. It relies on the fact that conditioned on $\modif\phi$, $\modif{\Bc}_l Q_{l+1}(s,a)$ is a convex increasing function of $(Q_{l+1}(s,a))_{s,a}$ convolved with the independent noise term $\omega_l(s,a)$. The result therefore follows from the fact that stochastic optimism is preserved under convex increasing operations \cite[Lemma~2]{osband2019deep}.
\end{proof}
 
\begin{lemma}[Property 2: stochastic optimism]\label{lemma_property2_SO}
    For any $l \in [L]$ and $(s,a) \in \Sc_{l+1} \times \Ac$, it holds that
    \begin{align*}
        \modif{\Bc}_l Q_{l+1}(s,a) \mid \modif\phi ~ \geq_{SO} ~ \Bc_l Q_{l+1}(s,a) \mid \phi 
    \end{align*}
for any fixed $Q_{l+1} \in \reals^{\Sc_{l+1} \times \Ac}$ such that $\mathrm{Span}(Q_{l+1}) \leq L - l$.
\end{lemma}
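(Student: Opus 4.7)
The plan is to decompose both $\Bc_l Q_{l+1}(s,a)$ and $\modif{\Bc}_l Q_{l+1}(s,a)$ into a reward contribution and a transition contribution, verify stochastic optimism on each piece separately with the constants appearing in $\modif\sigma_l^2$, and then combine using the fact that $\geq_{SO}$ is preserved under convolution with an independent summand.

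First I would write $V \coloneqq \max_{a'} Q_{l+1}(\cdot, a') \in \reals^{\Sc_{l+1}}$, which by hypothesis has span $\leq L - l$, and split the Gaussian noise $\omega_l(s,a) \sim \Nc(0, \modif\sigma_l^2(s,a))$ as the sum of two independent zero-mean Gaussians $\omega_l^r(s,a) \sim \Nc(0, 3.6^2 \sigma_l^2(s,a))$ and $\omega_l^P(s,a) \sim \Nc(0, (L-l)^2 / \sum_{s'} \alpha_l(s,a,s'))$. Then
\begin{align*}
\modif{\Bc}_l Q_{l+1}(s,a) &= \bigl(\Expect_\phi r_l(s,a) + \omega_l^r(s,a)\bigr) + \bigl(\Expect_\phi P_l(\cdot \mid s,a)^\top V + \omega_l^P(s,a)\bigr),\\
\Bc_l Q_{l+1}(s,a) &= r_l(s,a) + P_l(\cdot \mid s,a)^\top V,
\end{align*}
and by Assumptions~\ref{assumption_1} and~\ref{assumption_2} the reward and transition factors are independent both under $\phi$ and under $\modif\phi$.

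The second step is to establish the two one-piece dominations. For the reward piece I would invoke the Gaussian-sub-Gaussian stochastic optimism result (the new property stated in \Cref{app_gaussian_subgaussian_so}): because $r_l(s,a)$ has mean $\Expect_\phi r_l(s,a)$ and is $\sigma_l(s,a)$-sub-Gaussian, inflating its variance by the universal factor $3.6^2$ and replacing it with a Gaussian of that variance gives a stochastically optimistic random variable, i.e.\ $\Expect_\phi r_l(s,a) + \omega_l^r(s,a) \geq_{SO} r_l(s,a)$ under the respective beliefs. For the transition piece I would invoke Gaussian-Dirichlet optimism~\cite{osband2017gaussian}: since $P_l(\cdot \mid s,a) \sim \mathrm{Dirichlet}(\alpha_l(s,a))$ and the deterministic vector $V$ has span at most $L - l$, one has $\Expect_\phi P_l(\cdot \mid s,a)^\top V + \omega_l^P(s,a) \geq_{SO} P_l(\cdot \mid s,a)^\top V$ for the Gaussian with variance $(L-l)^2/\sum_{s'} \alpha_l(s,a,s')$.

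Finally I would combine the two pieces. The key observation is that $X \geq_{SO} Y$ is preserved under convolution with any independent random variable $Z$: for any convex increasing $u$, the map $y \mapsto \Expect u(y + Z)$ is again convex increasing (by interchanging expectation with the convex increasing operation), so $\Expect u(X + Z) \geq \Expect u(Y + Z)$. Applying this twice, first adding the (independent) reward piece of $\modif{\Bc}_l Q_{l+1}$ to both sides of the transition domination, then adding the (independent) transition piece of $\Bc_l Q_{l+1}$ to both sides of the reward domination, yields $\modif{\Bc}_l Q_{l+1}(s,a) \mid \modif\phi \geq_{SO} \Bc_l Q_{l+1}(s,a) \mid \phi$.

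The main obstacle I anticipate is the reward step: verifying the precise universal constant $3.6^2$ required so that a Gaussian with variance $3.6^2 \sigma^2$ stochastically dominates an arbitrary sub-Gaussian random variable with parameter $\sigma$ and the same mean. This is the content of the Gaussian-sub-Gaussian optimism property newly derived in \Cref{app_gaussian_subgaussian_so}, and the rest of the argument reduces to carefully invoking it together with the existing Gaussian-Dirichlet optimism result and the convolution-preservation property of $\geq_{SO}$.
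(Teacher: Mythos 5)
Your proposal is correct and follows essentially the same route as the paper: split the inflated Gaussian noise into a reward part and a transition part, apply the Gaussian--sub-Gaussian optimism lemma (\Cref{lemma_SO_subgaussian}) to the reward term and the Gaussian--Dirichlet optimism lemma of \cite{osband2017gaussian} to the $P_l(\cdot\mid s,a)^\top V$ term with $\mathrm{Span}(V)\leq L-l$, then combine using mutual (conditional) independence. The only difference is cosmetic: you make explicit the convolution-preservation argument for summing the two dominations, which the paper's proof invokes implicitly via mutual independence of the four random variables.
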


\begin{proof}
    For any fixed $Q_{l+1}$, it holds that
    \begin{align*}
         \modif{\Bc}_l Q_{l+1}(s,a) \mid \modif\phi ~\sim~ \Nc\left( b_l(s,a), \modif\sigma_l(s,a)^2 \right)  \mid \phi,
    \end{align*}
where we define
\begin{align*}
    b_l(s,a) \coloneqq \Expect_\phi r_l(s,a) + \Expect_\phi P_l(\cdot \mid s,a) ^\top \max_{a'} Q_{l+1}(\cdot, a').
\end{align*}

\begin{lemma}[\cite{osband2017gaussian}]
\label{lemma_dirichletgaussian}
Let $Y$ = $\sum_{i=1}^n A_i b_i$ for fixed $b \in \reals^n$ and random variable $A$, where $A$ is Dirichlet with parameter $\alpha \in \reals^n$, and let $X \sim \mathcal{N}(\mu_X, \sigma_X^2)$ with $\mu_X \geq \frac{\sum_i \alpha_i b_i}{\sum_i \alpha_i}$ and $\sigma_X^2 \geq (\sum_i \alpha_i)^{-1}\mathrm{Span}(b)^2$, where $\mathrm{Span}(b) = \max_i b_i - \min_j b_j$, then $X \geq_{SO} Y$.
\end{lemma}
In our case, in the notation of \Cref{lemma_dirichletgaussian}, $A$ will represent the transition function probabilities, and $b$ will represent $V \coloneqq \max_a Q(\cdot, a)$, \ie, for a given $(s, a) \in \Sc_l \times \Ac$ let $X$ be a random variable distributed as $\mathcal{N}(\mu_{X}, \sigma_{X}^2)$ where
\[
\mu_{X} = \sum_{s' \in \Sc_{l+1}} \left(\alpha_l(s, a, s')
V_{l+1}(s') / \sum_{x} \alpha_l(s, a, x) \right)
=
\sum_{s' \in \Sc_{l+1}} \Expect_{\phi} [P_l(s' \mid s, a)] V_{l+1}(s')
\]
due to \Cref{assumption_2}. Moreover, the lemma assumes that $\mathrm{Span}(V_{l+1}) \leq L - l$, so we choose $\sigma_{X}^2 = (L-l)^2 / (\sum_{s'} \alpha_l(s, a, s')) $. As a result, it holds that 
\begin{align*}
    \Nc\left( \Expect_\phi P_l(\cdot \mid s,a) ^\top \max_{a'} Q_{l+1}(\cdot, a') ,  \frac{(L-l)^2}{\sum_{s'} \alpha_l(s, a, s')}  \right) \mid \phi ~ \geq_{SO}  ~ P_l(\cdot \mid s,a) ^\top \max_{a'} Q_{l+1}(\cdot, a') \mid \phi.
\end{align*}
Moreover, from the assumption of $\sigma_l(s,a)$-sub-Gaussian $r_l(s,a)$ and from the Gaussian Sub-Gaussian stochastic optimism property of \Cref{lemma_SO_subgaussian},
\begin{align*}
   \Nc\left(0 ,  3.6^2 \sigma_l^2(s,a)  \right)  ~ &\geq_{SO}  ~ r_l(s,a) - \Expect_\phi  r_l(s,a)  \mid \phi,
\end{align*}
which means that
\begin{align*}
   \Nc\left( \Expect_\phi  r_l(s,a)  ,  3.6^2 \sigma_l^2(s,a) \right) \mid \phi ~ &\geq_{SO}  ~ r_l(s,a)  \mid \phi. 
\end{align*}
As a result, since the random variables $r_l(s,a)$, $P_l(\cdot \mid s,a) ^\top \max_{a'} Q_{l+1}(\cdot, a')$, $\Nc\left( \Expect_\phi  r_l(s,a)  ,  3.6^2 \sigma_l^2(s,a)  \right)$ and $\Nc\left( \Expect_\phi P_l(\cdot \mid s,a) ^\top \max_{a'} Q_{l+1}(\cdot, a') ,  \frac{(L-l)^2}{\sum_{s'} \alpha_l(s, a, s')}  \right)$ are mutually conditionally independent, we have 
\begin{align*}
    \Nc\left( b_l(s,a), \modif\sigma_l(s,a)^2 \right) \mid \phi ~  &\sim ~ \Nc\left( \Expect_\phi  r_l(s,a)  ,  3.6^2 \sigma_l^2(s,a)  \right) \\
    &\quad + \Nc\left( \Expect_\phi P_l(\cdot \mid s,a) ^\top \max_{a'} Q_{l+1}(\cdot, a') ,  \frac{(L-l)^2}{\sum_{s'} \alpha_l(s, a, s')}  \right) \mid \phi \\ 
    &\geq_{SO}  ~ r_l(s,a) + P_l(\cdot \mid s,a) ^\top \max_{a'} Q_{l+1}(\cdot, a')  \mid \phi  \\
    &\sim  ~ \Bc_l Q_{l+1}(s,a)  \mid \phi,
\end{align*}
which concludes the proof.
\end{proof}

\begin{lemma}\label{lemma_Qstars_SO_induction}
    For any $l \in [L]$ and $(s,a) \in \Sc_l \times \Ac$, 
      \begin{align*}
        Q_l^{\star}(s,a) \mid \modif\phi ~ \geq_{SO} ~ Q_l^{\star}(s,a)  \mid \phi.
    \end{align*}
\end{lemma}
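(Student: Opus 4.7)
The plan is to run a backward induction on $l\in\{L,L-1,\ldots,1\}$ with a strengthened inductive hypothesis that is closed under the $\max$ over actions appearing in the Bellman recursion:
\begin{align*}
(I_l):\quad \Expect_{\modif\phi}\, u\!\left(Q^{\star}_l\right) \;\geq\; \Expect_{\phi}\, u\!\left(Q^{\star}_l\right) \qquad \forall\,u:\reals^{\Sc_l\times \Ac}\to\reals\ \text{coordinatewise non-decreasing and convex,}
\end{align*}
viewing $Q^{\star}_l$ as a random element of $\reals^{\Sc_l\times\Ac}$. Taking $u$ to depend on a single $(s,a)$ coordinate recovers the lemma. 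The base case $l=L$ is immediate from \Cref{lemma_property2_SO} applied with $Q_{L+1}\equiv 0$ (span zero): the coordinates of $Q^{\star}_L$ are independent across $(s,a)$ on both sides (Assumption~\ref{assumption_1} on one side, independent Gaussians by construction on the other), so the pointwise SO relation lifts to the joint claim $(I_L)$ by the standard one-coordinate-at-a-time hybrid argument for independent components.

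For the inductive step, assume $(I_{l+1})$ and fix a coordinatewise non-decreasing convex $u$. Conditioning on $V^\star_{l+1}$ on each side, and using that under $\phi$ the pairs $(r_l(s,a),P_l(\cdot\mid s,a))$ are independent across $(s,a)$ and jointly independent of $V^\star_{l+1}$ (by the layered structure and Assumptions~\ref{assumption_1}--\ref{assumption_2}), I define for $V:\Sc_{l+1}\to\reals$,
\begin{align*}
f(V) &\coloneqq \Expect_{\omega_l}\, u\!\Big(\big(\Expect_\phi r_l(s,a) + \omega_l(s,a) + \Expect_\phi P_l(\cdot\mid s,a)^\top V\big)_{(s,a)}\Big),\\
g(V) &\coloneqq \Expect_{r_l,P_l}\, u\!\Big(\big(r_l(s,a) + P_l(\cdot\mid s,a)^\top V\big)_{(s,a)}\Big).
\end{align*}
Both $f$ and $g$ are coordinatewise non-decreasing and convex in $V$ (composition of the coordinatewise non-decreasing convex $u$ with an affine map having non-negative coefficients, then averaging). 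The induction then closes in two moves. First, the map $Q\mapsto f(\max_{a'}Q(\cdot,a'))$ is coordinatewise non-decreasing and convex in $Q\in\reals^{\Sc_{l+1}\times\Ac}$ (composition of coordinatewise non-decreasing convex $f$ with the coordinatewise non-decreasing per-coordinate convex map $Q\mapsto(\max_{a'}Q(s',a'))_{s'}$), so applying $(I_{l+1})$ yields $\Expect_{\modif\phi} f(V^\star_{l+1}) \geq \Expect_\phi f(V^\star_{l+1})$. Second, \Cref{lemma_property2_SO} applied entrywise and lifted via the independence-hybrid argument gives $f(V)\geq g(V)$ for every $V$ with $\mathrm{Span}(V)\leq L-l$; since under $\phi$ rewards lie in $[0,1]$ almost surely (Assumption~\ref{assumption_1}), $V^\star_{l+1}\in[0,L-l]$ so the span bound holds and hence $\Expect_\phi f(V^\star_{l+1})\geq\Expect_\phi g(V^\star_{l+1})$. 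Chaining $\Expect_{\modif\phi} u(Q^\star_l) = \Expect_{\modif\phi} f(V^\star_{l+1}) \geq \Expect_\phi f(V^\star_{l+1})\geq\Expect_\phi g(V^\star_{l+1})=\Expect_\phi u(Q^\star_l)$ yields $(I_l)$.

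The main obstacle — and the reason the induction must be run on the joint statement $(I_l)$ rather than on the pointwise SO relation — is that the coordinates of $Q^{\star}_{l+1}$ under either belief are \emph{not} mutually independent: they are coupled through the shared future value function, so \Cref{lemma_property1_monotonicity} cannot be applied directly on $Q^{\star}_{l+1}$. The class of coordinatewise non-decreasing convex test functions sidesteps this, since it is closed under the $\max$-then-affine-nondecreasing composition, and independence is only required at the level of the \emph{fresh} randomness $\omega_l$ and $(r_l,P_l)$, which by Assumptions~\ref{assumption_1}--\ref{assumption_2} are genuinely independent across $(s,a)$ and of the future. A closely related subtlety is that the span hypothesis in \Cref{lemma_property2_SO} only holds for $V^\star_{l+1}$ under the bounded-reward prior $\phi$, not under $\modif\phi$ where Gaussian rewards are unbounded; this forces the ordering of the two inequalities above, with $(I_{l+1})$ invoked \emph{before} Property 2.
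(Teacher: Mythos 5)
Your proof is correct, but it is structured quite differently from the paper's. The paper runs the induction directly on the entrywise stochastic-optimism relation between composed operators, $(\modif{\Bc}_{l+1}\cdots\modif{\Bc}_L 0)(s,a)\mid\modif\phi \;\geq_{SO}\; (\Bc_{l+1}\cdots\Bc_L 0)(s,a)\mid\phi$, and closes the inductive step by first invoking the monotonicity property (\Cref{lemma_property1_monotonicity}) to push $\modif{\Bc}_l$ through, and then \Cref{lemma_property2_SO} to swap the outermost operator. You instead strengthen the inductive hypothesis to a multivariate increasing-convex-order statement $(I_l)$ over the whole random vector $Q^\star_l$, integrate out the fresh layer-$l$ randomness into the deterministic comparison $f\geq g$, and never invoke \Cref{lemma_property1_monotonicity} at all. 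The payoff of your route is precisely the point you flag: \Cref{lemma_property1_monotonicity} requires the entries of $Q_{l+1}$ to be independent across $(s,a)$ conditioned on the beliefs, and for $l+1<L$ the entries of $\Bc_{l+1}\cdots\Bc_L 0$ are coupled through the shared downstream value function, so the paper's application of that lemma in the inductive step is not literally licensed by its stated hypotheses (an issue inherited from the RLSVI-style analysis it follows). Your class of coordinatewise non-decreasing convex test functions is closed under the $\max$-then-affine composition and only ever needs independence of the genuinely fresh randomness ($\omega_l$ on one side, $(r_l,P_l)$ on the other), which Assumptions \ref{assumption_1}--\ref{assumption_2} do provide; taking $u$ to depend on a single coordinate then recovers the paper's entrywise claim. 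Both arguments use \Cref{lemma_property2_SO} at the same point and both rely on the span bound holding only on the $\phi$ side, so your ordering of the two inequalities mirrors the paper's. In short: same key optimism lemma, but your induction is carried on a stronger, correlation-robust statement and is the more careful of the two.
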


\begin{proof}
    Note that under $\phi$, $Q_1^{\star} = \Bc_1 \Bc_2 \ldots \Bc_L 0$ and under $\modif\phi$, $Q_1^{\star} = \modif{\Bc}_1 \modif{\Bc}_2 \ldots \modif{\Bc}_L 0$. By \Cref{lemma_property2_SO}, 
\begin{align*}
    (\modif{\Bc}_L 0)(s,a)  \mid \modif\phi ~ \geq_{SO} ~ (\Bc_L 0)(s,a)  \mid \phi.
\end{align*}
Proceeding by induction, suppose that for some $l \leq L$, 
\begin{align*}
    (\modif{\Bc}_{l+1} \ldots \modif{\Bc}_L 0)(s,a)  \mid \modif\phi ~ \geq_{SO} ~ (\Bc_{l+1} \ldots \Bc_L 0)(s,a)  \mid \phi.
\end{align*}
Combining this with \Cref{lemma_property1_monotonicity} shows 
\begin{align*}
   \modif{\Bc}_{l} (\modif{\Bc}_{l+1} \ldots \modif{\Bc}_L 0)(s,a) \mid \modif\phi  ~ &\geq_{SO} ~ \modif{\Bc}_{l}(\Bc_{l+1} \ldots \Bc_L 0)(s,a)  \mid \modif\phi \\
   &\geq_{SO} ~ \Bc_{l}(\Bc_{l+1} \ldots \Bc_L 0)(s,a)  \mid \phi,
\end{align*}
where the final step uses \Cref{lemma_property2_SO} and the fact that $\mathrm{Span}(\Bc_{l+1} \ldots \Bc_L 0) \leq L - l$.
\end{proof}

We are now ready to prove \Cref{lemma_bound_Vstar_modified}. We apply the property of stochastic optimism derived in \Cref{lemma_Qstars_SO_induction} to the convex increasing function $u_s(Q) \coloneqq \max_{a \in \Ac} Q(s,a)$ for every step $l \in [L]$ and state $s \in \Sc_l$, which yields the desired inequality $\Expect_\phi V_l^{\star}(s) \leq \Expect_{\modif\phi} V_l^{\star}(s)$.

\subsection{Proof of \Cref{lemmareductionregret}}

\lemmareductionregret*

\begin{proof}
    We recall that we denote by $\phi$ the prior on the MDP, by $\phi^t \coloneqq\phi(\cdot \mid \Fc_t)$ the posterior beliefs at the beginning of each episode $t$ and by $\posteriormap$ the mapping that transforms the posteriors according to \Cref{lemma_reduction_V}. Using the tower property of conditional expectation (where the outer expectation is with respect to $\Fc_1, \Fc_2, \ldots$), we can define the Bayes regret over $T$ timesteps of \textup{alg} under the sequence of original and transformed posteriors respectively as 
    \begin{align*}
    \bayesregret_{\phi}(\alg, T) &\coloneqq \Expect \sum_{t=1}^N \Expect_{s \sim \rho} \Expect_{\phi^t} \left[ V_1^{\star}(s) - V_1^{\pi^t}(s) \right], \\ 
    \bayesregret_{\posteriormap, \phi}(\alg, T) &\coloneqq \Expect \sum_{t=1}^N   \Expect_{s \sim \rho} \Expect_{\posteriormap(\phi^t)} \left[ V_1^{\star}(s) - V_1^{\pi^t}(s) \right].
    \end{align*}
    The result then immediately comes from combining \Cref{lemma_bound_Vstar_modified,lemma_equal_Vpi_modified}.
\end{proof}

\begin{lemma}\label{lemma_equal_Vpi_modified}
     For any fixed policy $\pi$, it holds that 
    \begin{align*}
        \Expect_\phi V_1^{\pi}(s)  = \Expect_{\modif\phi}  V_1^{\pi}(s).
    \end{align*}
\end{lemma}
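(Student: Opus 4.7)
The plan is to show that $\Expect_\phi V_1^\pi(s)$ depends on the joint prior only through the marginal means $\Expect_\phi r$ and $\Expect_\phi P$, which coincide with the corresponding quantities under $\modif\phi$. First, I would write $V_1^\pi(s)$ via the standard path decomposition
\[
V_1^\pi(s) \;=\; \sum_{l=1}^L \sum_{s',a'} \Prob^\pi_P(s_l = s',\, a_l = a' \mid s_1 = s) \, r_l(s',a'),
\]
where $\Prob^\pi_P$ denotes the trajectory distribution induced by policy $\pi$ under transition kernel $P$. Expanding this probability as a sum over trajectories, it becomes a polynomial in the variables $\{P_k(s_{k+1} \mid s_k, a_k)\}_{k=1}^{l-1}$ with coefficients depending only on $\pi$.

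Next I would take the expectation under $\phi$. By Assumption~\ref{assumption_1}, the rewards are independent of the transitions, so the expectation over $r_l(s', a')$ factors out and yields $\Expect_\phi r_l(s',a')$. It remains to verify
\[
\Expect_\phi\! \Prob^\pi_P(s_l = s',\, a_l = a' \mid s_1 = s) \;=\; \Prob^\pi_{\Expect_\phi P}(s_l = s',\, a_l = a' \mid s_1 = s).
\]
Because the MDP is layered, any trajectory from step $1$ to step $l$ visits a distinct layer at each step, so the transition kernels $P_1, P_2, \ldots, P_{l-1}$ involved in the path expansion are indexed by disjoint layer indices and are independent under Assumption~\ref{assumption_2}. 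Therefore the expectation of a product of transitions equals the product of their expectations, which is precisely the corresponding probability under the deterministic kernel $\Expect_\phi P$.

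Finally, I would compare with $\Expect_{\modif\phi} V_1^\pi(s)$. Under $\modif\phi$ the dynamics are the deterministic kernel $\Expect_\phi P$, so the path probability is simply $\Prob^\pi_{\Expect_\phi P}(s_l = s', a_l = a' \mid s_1 = s)$, and the rewards are $\mathcal{N}(\Expect_\phi r,\, \modif\sigma^2)$ whose mean is $\Expect_\phi r_l(s',a')$. Thus both $\Expect_\phi V_1^\pi(s)$ and $\Expect_{\modif\phi} V_1^\pi(s)$ admit the same closed form
\[
\sum_{l,s',a'} \Prob^\pi_{\Expect_\phi P}(s_l = s',\, a_l = a' \mid s_1 = s) \cdot \Expect_\phi r_l(s',a'),
\]
which proves the claim. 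The only nontrivial step is the interchange of expectation and product over transitions, and the layered, time-inhomogeneous structure together with independence of the Dirichlet priors across $(l, s, a)$ makes it immediate; there is no real obstacle beyond carefully tracking the indices in the path expansion.
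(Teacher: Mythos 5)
Your proof is correct, but it takes a different route from the paper's. The paper proves the stronger statement $\Expect_\phi Q_l^\pi(s,a) = \Expect_{\modif\phi} Q_l^\pi(s,a)$ by \emph{backward induction} on $l$: the inductive step peels off one application of the Bellman operator and uses the single identity $\Expect_{\phi}[P_l(s'\mid s,a)\, Q_{l+1}^{\pi}(s',a')] = \Expect_{\phi} P_l(s'\mid s,a)\, \Expect_{\phi} Q_{l+1}^{\pi}(s',a')$, justified by the time-inhomogeneous structure (the downstream return is independent of the kernel that delivered the agent to that layer). You instead unroll the value function \emph{forward} into an explicit sum over trajectories and argue that, because a path in a layered MDP visits each layer exactly once, every transition factor $P_k(\cdot\mid s_k,a_k)$ appears at most once per path, so the expectation of the product factors into the product of expectations and $\Expect_\phi \Prob^\pi_P = \Prob^\pi_{\Expect_\phi P}$. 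The two arguments rest on the same independence facts (independent Dirichlet kernels across layers and state-actions, rewards independent of transitions), but your version makes the role of the layered assumption maximally explicit --- it is exactly what prevents a squared factor $\Expect[P^2] \neq (\Expect P)^2$ from appearing --- at the cost of heavier trajectory-indexed notation. The paper's induction is more compact, localizes the independence requirement to a single step, and matches the style of the companion stochastic-optimism argument for $V^\star$; your path expansion is more elementary and self-contained. One small point worth making explicit in your write-up: the independence of the mean rewards from the transition kernels is a standing feature of the prior (Assumptions~\ref{assumption_1} and~\ref{assumption_2} posit independent priors on each), and you need it to pull $\Expect_\phi r_l(s',a')$ out of the product with the path probability; you invoke it correctly but attribute it only to Assumption~\ref{assumption_1}.
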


\begin{proof} 
    We prove by induction on the step $l$ that $\Expect_{\phi} Q_{l}^{\pi}(s,a) =  \Expect_{\modif\phi} Q_{l}^{\pi}(s,a)$ for any $(s,a) \in \Sc_{l} \times \Ac$. This is true at step $L+1$. Assuming that it is true at step $l+1$, then
    \begin{align*}
        &\Expect_{\phi} Q_l^{\pi}(s,a) -  \Expect_{\modif\phi} Q_{l+1}^{\pi}(s',a') \\
        &= \Expect_{\phi} \left[ r_l(s,a) + \sum_{s'} P_l(s'\mid  s,a)  \sum_{a'} \pi_{l+1}(s',a') Q_{l+1}^{\pi}(s',a') \right] \\
        &\quad - \Expect_{\modif\phi} \left[ \Expect_{\phi} r_l(s,a) +   \omega_l(s,a) + \sum_{s'} \Expect_{\phi} P_l(s'\mid  s,a)  \sum_{a'} \pi_{l+1}(s',a') Q_{l+1}^{\pi}(s',a') \right] \\
        &= \sum_{s'} \Expect_{\phi} P_l(s'\mid  s,a) \sum_{a'} \pi_{l+1}(s',a') \left[ \Expect_{\phi} Q_{l+1}^{\pi}(s',a')  - \Expect_{\modif\phi} Q_{l+1}^{\pi}(s',a')  \right] \\
        &= 0,
    \end{align*}
because $\Expect_{\phi} Q_{l}^{\pi}(s',a') =  \Expect_{\modif\phi} Q_{l}^{\pi}(s',a')$ for every $(s',a') \in \Sc_{l+1} \times \Ac$ by induction hypothesis. In the above we used that $\Expect_{\phi} [P_l(s'\mid  s,a) Q_{l+1}^{\pi}(s',a')] = \Expect_{\phi} P_l(s'\mid  s,a) \Expect_{\phi} Q_{l+1}^{\pi}(s',a')$, which holds due to the time-inhomogeneity of the MDP, since the future return from a fixed state-action pair cannot be influenced by the dynamics that gets the agent to that state-action. As a result, 
\begin{align*}
    \Expect_{\phi} V_1^{\pi}(s)  - \Expect_{\modif\phi}V_1^{\pi}(s) = \sum_{a} \pi_{1}(s,a) \Expect_{\phi} Q_{1}^{\pi}(s,a) - \sum_{a} \pi_{1}(s,a) \Expect_{\modif\phi} Q_{1}^{\pi}(s,a) = 0.
\end{align*}
\end{proof}

\subsection{Proof of \Cref{thm_regret_bound}}

\thmregretbound*

\begin{proof}
    
We first explicitly bound the uncertainty measure $\modif\sigma^t$ given our assumptions. The prior over the transition function $P_l(~\cdot \mid s, a)$ is assumed Dirichlet, and let us denote the parameter of the Dirichlet distribution $\alpha_l^0(s, a) \in \reals_+^{S_{l+1}}$ for each $(s, a)$, where $\sum_{s' \in \Sc_{l+1}} \alpha_l^0(s, a, s') \geq 1$, \ie, we start with a total pseudo-count of at least one for every state-action (as done in \cite{klearning}). Since the likelihood for the transition function is a Categorical distribution, conjugacy of the categorical and Dirichlet distributions implies that the posterior over $P_l(~\cdot \mid s, a)$ at time $t$ is Dirichlet with parameter $\alpha_l^t(s, a)$, where $\alpha_l^t(s, a, s') = \alpha_l^0(s, a, s') + n_l^t(s,a, s')$ for each $s' \in \Sc_{l+1}$, where $n_l^t(s, a, s') \in \nats$ is the number of times the agent has been in state $s$, taken action $a$, and transitioned to state $s'$ at timestep $l$, and note that $\sum_{s' \in \Sc_{l+1}} n_l^t(s, a, s') = n_l^t(s, a)$, the total visit count to $(s, a)$. Meanwhile, the reward noise is assumed additive $\nu$-sub-Gaussian, so we can upper bound the uncertainty measure $\sigma^t \in \realsplsa$ as $\sigma^t_l(s,a) \leq \sqrt{(\nu^2 + 1) / (n_l^t(s,a) \vee 1)}$. As a result, we can bound
\begin{align}\label{eq_wtsigma_counts}
    \modif\sigma_l^t(s,a)^2 \leq \frac{3.6^2 (\nu^2 + 1) + (L-l)^2}{(n_l^t(s,a) \vee 1)}.
\end{align}
The result follows from 
\begin{align*}
    \bayesregret_{\phi}(\vaporalg, T) \leq {\bayesregret}_{\posteriormap, \phi}(\vaporalg, T) &\leq \sqrt{2 (3.6^2 (\nu^2 + 1) + L^2) T S A \log(SA) (1 + \log T/L)} \\
    &= \widetilde{O}(L \sqrt{S A T}),
\end{align*}
where the first inequality stems from \Cref{lemmareductionregret} and the second inequality applies \Cref{thm_generic_regret_bound} whose condition \eqref{eq_condition_sigma} holds for $c_0 = \sqrt{3.6^2 (\nu^2 + 1) + L^2}$.
\end{proof}

\subsection{Proof of \Cref{lemma_TS_lambda}}

\lemmaTSlambda*

\begin{proof}
Given any environment $\Mc$ sampled from the posterior $\phi$, the optimal policy $\pi^{\star}_{\Mc}$ induces the occupancy measure $\lambda^{\pi^{\star}_{\Mc}}_l(s,a) = \Prob_{\Mc}\left(\op_l(s,a)\right)$. Marginalizing over the environment $\Mc$ yields $\Expect[ \lambda^{\TS}_l(s,a)] = \int_{\Mc} \lambda^{\pi^{\star}_{\Mc}}_l(s,a) d\Prob(\Mc) = \Prob_{\phi}\left( \op_{l}(s,a) \right)$.
\end{proof}

\subsection{Connection between K-learning and \vaporalg}
\label{app_klearning}
We sketch a proof of the connection between K-learning and \vaporalg here. It proceeds via the dual in \Cref{lemma_dual}, by setting $\tau_l(s,a) = \tau$ for all $l, s, a$ and explicitly minimizing over the $V \in \realslssmall$ variable. 
First, we define the following $K$-function as done in the original K-learning paper \cite{klearning}
\begin{align*}
    K_l(s,a) = \Expect_\phi r_l(s,a)  + \frac{\sigma^2_l(s,a)}{2 \tau} + \sum_{s'} P_l(s' \mid s,a)V_{l+1}(s'), 
\end{align*}
for each $l, s,a$. After setting the $\tau$ parameters to be equal, we obtain the following dual over $K \in \mathbb{R}^{L,S \times A}, V\in\mathbb{R}^{L,S}, \tau \in \mathbb{R}$,
\[
\begin{array}{ll}
\mbox{minimize} & \sum_s \rho(s) V_1(s) + \tau \sum_{l,s,a} \exp\left( \frac{K_l(s,a) - V_l(s)}{\tau} - 1 \right) \\
\mbox{subject to} & K_l(s,a) \geq \Expect_\phi r_l(s,a)  + \frac{\sigma^2_l(s,a)}{2 \tau} + \sum_{s'} P_l(s' \mid s,a)V_{l+1}(s')
\end{array}
\]
where we have introduced the $K$-function as an inequality constraint that becomes tight at the optimum and preserves convexity. Explicitly minimizing over $V_1$ we obtain
\[
V_1(s) = \tau \log \sum_a \exp\left(\frac{K_1(s,a)}{\tau}\right) - \tau \log \rho(s) - \tau 
\]
for each $s \in \Sc_1$, and doing the same for $V_l(s)$, $l>1$ we have
\[
V_l(s) = \tau \log \sum_a \exp\left(\frac{K_l(s,a)}{\tau}\right) - \tau \log \mu_l(s) - \tau 
\]
for each $s \in \Sc_l$, where $\mu_l(s)$ is the dual variable associated with the inequality constraint and corresponds to the stationary state distribution at $s$.
Substituting in for $V$ we can write the K-values as
\begin{align*}
    K_l(s,a) = \Expect_\phi r_l(s,a)  + \frac{\sigma^2_l(s,a)}{2 \tau} + \sum_{s'} P_l(s' \mid s,a)
    \tau \log \frac{1}{\mu_l(s)} \sum_{a^\prime} \exp\left(\frac{K_{l+1}(s^\prime,a^\prime)}{\tau}\right) - \tau, 
\end{align*}
Thus we have derived a variant of K-learning, with additional terms corresponding to the stationary state distribution. These additional terms becomes entropy in the objective function, and since the sum of these entropy terms can be bounded above by $L \log S$, it only contributes a small additional term to the regret bound. Practically speaking this version of K-learning requires knowing the stationary state distribution $\mu(s)$, so it is more challenging to implement than the original K-learning. This analysis is not to suggest this algorithm as a practical alternative, but instead to demonstrate the connection between K-learning and \vaporalg.

\newpage 

\section{Gaussian Sub-Gaussian Stochastic Optimism}\label{app_gaussian_subgaussian_so}

The following technical lemma, which may be of independent interest, establishes a relation of stochastic optimism (\Cref{definition_SO}) between a sub-Gaussian and a Gaussian whose variance is inflated by a small universal constant.

\begin{lemma} \label{lemma_SO_subgaussian}
    Let $Y$ be $\sigma$-sub-Gaussian. Let $X \sim \Nc(0, \kappa^2 \sigma^2)$ with $\kappa \geq 3.6$. Then $X \geq_{SO} Y$. 
\end{lemma}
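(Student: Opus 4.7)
The plan is to reduce the stochastic optimism claim to a tail-integral comparison and verify it by combining the sub-Gaussian tail bound for $Y$ with the explicit Gaussian formula for $X$. The standard characterization of the increasing convex order (the cone of convex increasing test functions) says that $X \geq_{SO} Y$ is equivalent to $\Expect (X-t)_+ \geq \Expect(Y-t)_+$ for every $t \in \reals$, which follows from the integral representation of any convex increasing function against the identity, constants, and the hockey-stick functions $(\cdot-t)_+$. Since $\Expect X = 0$, taking $u(x)=x$ in the definition of $\geq_{SO}$ already forces $\Expect Y \leq 0$; the lemma is invoked later with $Y = r_l(s,a) - \Expect_\phi r_l(s,a)$, which has mean zero, so I will assume $\Expect Y = 0$ throughout.

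I would then split by the sign of $t$. For $t < 0$, the identity $(z)_+ - (z)_- = z$ combined with $\Expect X = \Expect Y = 0$ gives $\Expect(X-t)_+ - \Expect(Y-t)_+ = \Expect((-X)-(-t))_+ - \Expect((-Y)-(-t))_+$. Since $-X$ has the same law as $X$ and $-Y$ is still zero-mean $\sigma$-sub-Gaussian (the paper's MGF definition is symmetric in the sign of the parameter), the case $t < 0$ reduces to the case $t \geq 0$ with $Y$ replaced by $-Y$. For $t \geq 0$, Fubini's theorem and the sub-Gaussian tail bound yield
\begin{equation*}
    \Expect(Y-t)_+ = \int_t^\infty \Prob(Y > s)\,ds \leq \int_t^\infty e^{-s^2/(2\sigma^2)}\,ds = \sigma\sqrt{2\pi}\,\Phi^c(t/\sigma),
\end{equation*}
while the classical Gaussian identity gives $\Expect(X-t)_+ = \kappa\sigma\,\phi(t/(\kappa\sigma)) - t\,\Phi^c(t/(\kappa\sigma))$, where $\phi$ and $\Phi^c$ denote the density and complementary CDF of the standard normal. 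Letting $u \coloneqq t/\sigma$ and dividing by $\sigma$, the whole claim reduces to the one-variable inequality
\begin{equation*}
    f_\kappa(u) \coloneqq \kappa\,\phi(u/\kappa) - u\,\Phi^c(u/\kappa) - \sqrt{2\pi}\,\Phi^c(u) \geq 0 \qquad \text{for all } u \geq 0.
\end{equation*}

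The main obstacle is the verification of this single-variable inequality, which is precisely what pins down the constant $3.6$. I expect to handle it with elementary calculus. At the left endpoint, $f_\kappa(0) = \kappa/\sqrt{2\pi} - \sqrt{\pi/2}$, which is strictly positive iff $\kappa > \pi \approx 3.14$; this is the origin of the universal constant. Differentiation gives the pleasantly compact expression $f_\kappa'(u) = e^{-u^2/2} - \Phi^c(u/\kappa)$, which is strictly positive at $u = 0$ (value $1/2$) and eventually negative, since the standard Gaussian density decays much faster than the dilated Gaussian complementary CDF once $\kappa > 1$. A Mills-ratio expansion shows $f_\kappa(u) \sim \kappa^3\,\phi(u/\kappa)/u^2 > 0$ as $u\to\infty$, so the right tail of $f_\kappa$ approaches zero from above. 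A routine derivative analysis then confirms that $f_\kappa'$ has a single sign change on $(0,\infty)$ when $\kappa \geq 3.6$, so $f_\kappa$ rises unimodally to a maximum and descends monotonically to zero without dipping below. Any $\kappa$ strictly larger than $\pi$ already handles $u=0$; the slightly larger $3.6$ ensures the inequality holds uniformly in $u$, which is close to tight and easy to verify numerically.
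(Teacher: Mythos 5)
Your proposal is correct, and it takes a genuinely different route from the paper's. The paper reduces the claim to nonnegative convex functions $f$ with $f(0)=0$ (by subtracting a supporting line at the origin), applies the layer-cake formula over the interval-valued sublevel sets of $f$, and then compares tails pointwise: a technical lemma shows $e^{-x^2/(2\sigma^2)} \leq d\,\Prob\left(\Nc(0,c^2\sigma^2) \geq x\right)$ for $c \approx 1.33$ and $d \approx 2.68$, after which the leftover factor $d$ is absorbed by the dilation inequality $d\,\Expect f(Z) \leq \Expect f(dZ)$; the constant $3.6$ is exactly $c\cdot d$. You instead invoke the hockey-stick characterization of the increasing convex order, dispose of $t<0$ by symmetry, and compare the closed-form Gaussian partial expectation with the integrated sub-Gaussian tail, landing on a single one-variable inequality. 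Your computation $f_\kappa'(u) = e^{-u^2/2} - \Phi^c(u/\kappa)$ is right, and your route buys something the paper's does not: once unimodality of $f_\kappa$ is established, $f_\kappa(0)>0$ together with $f_\kappa(u)\to 0^+$ gives the conclusion for every $\kappa > \pi$, strictly improving on $3.6$ (your closing remark that $3.6$ is what makes the bound hold "uniformly in $u$" undersells your own argument). Two things to tighten. First, state explicitly that the lemma is applied with $\Expect Y = 0$ (the paper does the same), and note that the hockey-stick reduction requires writing a convex increasing $u$ as an affine increasing part plus a mixture of functions $(\cdot - t)_+$, with monotone convergence justifying the interchange; integrability is automatic here since both variables have Gaussian-type tails. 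Second, the "routine derivative analysis" asserting a single sign change of $f_\kappa'$ is the actual crux and deserves a proof: for instance, $g(u) \coloneqq -u^2/2 - \log\Phi^c(u/\kappa)$ satisfies $g''(u) = -1 + \kappa^{-2}M'(u/\kappa) < 0$ for $\kappa>1$, where $M = \phi/\Phi^c$ is the inverse Mills ratio with $0 < M' < 1$; since $g(0)=\log 2 >0$ and $g$ is strictly concave, $g$ (and hence $f_\kappa'$) changes sign exactly once on $(0,\infty)$, from positive to negative. That step is comparable in effort to the paper's technical lemma, but once it is written out your argument is complete and slightly sharper.
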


\begin{proof}
    Let $u: \reals \rightarrow \reals$ be a convex (increasing) function. We aim to show that $\Expect[u(X)] \geq \Expect[u(Y)]$. Let $s_0 \in \reals$ be a subgradient of $u$ at $0$ (which exists since $u$ is convex). Then by convexity of $u$, we have for all $x \in \reals$, $u(x) \geq u(0) + s_0 x$. Define the function $f: \reals \rightarrow \reals$ as $f(x) \coloneqq u(x) - u(0) - s_0 x$. 
    We have that $f(x) \geq 0$ for all $x \in \reals$ and $f(0) = 0$. Since $\Expect X = \Expect Y = 0$, it suffices to show that $\Expect[f(X)] \geq \Expect[f(Y)]$. For any $\epsilon \geq 0$, define the $\epsilon$-sublevel set of the convex function $f$ as  $K_{\epsilon} \coloneqq \{ x \in \reals : f(x) \leq \epsilon \}$. Since $K_{\epsilon}$ is convex and by property of $f$, there exists $a(\epsilon) \in \reals_{-} \cup \{ -\infty\}$ and $b(\epsilon) \in \reals_+ \cup \{ +\infty\}$ such that $K_{\epsilon} = [a(\epsilon), b(\epsilon)]$.
    Then, for any random variable $Z$,
    \begin{align*}
        \Expect f(Z) &= \int_{0}^{\infty} \Prob(f(Z) \geq \epsilon) \diff\epsilon \\
        &=  \int_{0}^{\infty} \left( \Prob(Z \geq b(\epsilon)) + \Prob(Z \leq a(\epsilon))  \right) \diff\epsilon.
    \end{align*}
    By the $\sigma$-sub-Gaussian property of $Y$, 
    \begin{align*}
        \Expect f(Y) &=  \int_{0}^{\infty} \left( \Prob(Y \geq b(\epsilon)) + \Prob(Y \leq a(\epsilon))  \right) \diff\epsilon \\
        &= \int_{0}^{\infty} \left( \exp\left( \frac{-b(\epsilon)^2}{2 \sigma^2}\right) + \exp\left( \frac{-a(\epsilon)^2}{2 \sigma^2}\right)  \right) \diff\epsilon \\
        &= \int_{0}^{\infty} \left( G(b(\epsilon)) + G(a(\epsilon)) \right) \diff\epsilon,
    \end{align*}
    where we define 
    \begin{align*}
        G(x) \coloneqq \exp\left(-x^2/(2\sigma^2)\right).        
    \end{align*}
    Let $d \coloneqq 2.69$ and $c \coloneqq \frac{\kappa}{d} \geq 1.338$. Let $Z \sim \Nc(0, c^2 \sigma^2)$ and $F(x) \coloneqq \Prob(Z \geq x)$. Note that
    \begin{align*}
        F(x) =  \frac{1}{2}\left( 1 - \erf\left(\frac{x}{\sigma c\sqrt{2}}\right) \right),
    \end{align*}
    where $\erf$ denotes the error function, \ie, $\erf(y) = 2 \pi^{-1/2} \int_0^y \exp(-t^2) \diff t$. From \Cref{technical_lemma}, by choice of constants $c, d$, it holds that $G(x) \leq d F(x)$.
    Therefore, 
    \begin{align*}
        \Expect \left[f(Y)\right] &\leq d \int_{0}^{\infty} \left( F(b(\epsilon)) + F(a(\epsilon)) \right) \diff\epsilon \\
        &= d \int_{0}^{\infty} \left( \Prob(Z \geq b(\epsilon)) + \Prob(Z \geq a(\epsilon)) \right) \diff\epsilon \\
        &= d \Expect \left[ f(Z) \right] \\
        &\leq \Expect\left[f(d Z) \right] \\
        &= \Expect\left[f(X) \right],
    \end{align*}
    where the last inequality applies \Cref{useful_lemma_convexity} to the convex function $f$ that satisfies $f(0)=0$, and the last equality is because $\kappa = c d$ which means that $X$ and $d Z$ follow the same distribution.
\end{proof}

\begin{lemma}
\label{technical_lemma}
    Denote by $c_0 > 0$ the (unique) solution of the equation 
    \begin{align*}
        \frac{1}{2}\left( 1 - \erf\left(\frac{1}{c_0\sqrt{2}}\right) \right) = \frac{1}{c_0 \sqrt{2 \pi}} \exp\left(-1/(2 c_0^2)\right).
    \end{align*}
    and let $d_0 > 0$ be defined as
    \begin{align*}
        d_0 \coloneqq \frac{2 \exp(-1/2)}{1 - \erf{(\frac{1}{c_0\sqrt{2}})} }.
    \end{align*}
    Note that $c_0 \approx 1.33016$ and $d_0 \approx 2.68271$.
    
    Then for any $c \geq c_0, d \geq d_0$, the function $\phi:\reals \rightarrow \reals$ defined as 
    \begin{align*}
        \phi(x) \coloneqq \frac{d}{2}\left( 1 - \erf\left(\frac{x}{c\sqrt{2}}\right) \right) - \exp(-x^2/2)
    \end{align*}
    is non-negative everywhere.
\end{lemma}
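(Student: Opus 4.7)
The plan is to reduce the problem to the boundary case $(c,d)=(c_0,d_0)$ and then show that the resulting function $\phi_0(x) := \frac{d_0}{2}\bigl(1-\erf(x/(c_0\sqrt{2}))\bigr) - \exp(-x^2/2)$ attains its global minimum value $0$ on $\mathbb{R}$ at precisely $x=1$. The key structural observation driving the whole argument is that the two defining equations for $c_0$ and $d_0$ are engineered so that $\phi_0(1)=0$ and $\phi_0'(1)=0$ simultaneously, i.e.\ $x=1$ is a double root of $\phi_0$. For the reduction, if $x\le 0$ then $\erf(x/(c\sqrt{2}))\le 0$, so the first term of $\phi(x)$ is at least $d/2 \ge d_0/2 > 1 \ge \exp(-x^2/2)$, giving $\phi(x)>0$ regardless of the precise values of $c\ge c_0$ and $d\ge d_0$. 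For $x\ge 0$, the function $\phi$ is manifestly non-decreasing in $d$, and a direct computation of $\partial\phi/\partial c = \frac{d\,x}{c^2\sqrt{2\pi}}\exp(-x^2/(2c^2))$ shows it is non-decreasing in $c$ as well. Hence $\phi(x;c,d)\ge \phi_0(x)$ on $[0,\infty)$ whenever $c\ge c_0$, $d\ge d_0$, and it suffices to prove $\phi_0\ge 0$ on $[0,\infty)$.

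A direct substitution using the definition $d_0 = 2\exp(-1/2)/(1-\erf(1/(c_0\sqrt{2})))$ gives $\phi_0(1)=\exp(-1/2)-\exp(-1/2)=0$. Differentiating, $\phi_0'(x)=x\exp(-x^2/2) - \frac{d_0}{c_0\sqrt{2\pi}}\exp(-x^2/(2c_0^2))$, and at $x=1$ the defining equation for $c_0$ asserts $\frac{1}{c_0\sqrt{2\pi}}\exp(-1/(2c_0^2))=\frac{1}{2}(1-\erf(1/(c_0\sqrt{2})))$; combining these two identities immediately yields $\phi_0'(1)=0$. This establishes the tangency, but tangency alone does not determine the sign of $\phi_0$ globally, so a critical-point analysis is still required.

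To conclude, I would analyze the sign of $\phi_0'$ globally by factoring $\phi_0'(x)=\exp(-x^2/(2c_0^2))\cdot[h(x)-K]$ with $h(x):=x\exp\bigl(-x^2(c_0^2-1)/(2c_0^2)\bigr)$ and $K:=d_0/(c_0\sqrt{2\pi})$. Since $c_0>1$, the function $h$ vanishes at $0$, tends to $0$ at $+\infty$, and has a unique interior maximum at $x^\star = c_0/\sqrt{c_0^2-1}$, which is easily checked to be strictly greater than $1$. Because $h(1)=K$ by the previous paragraph and $1<x^\star$, the equation $h(x)=K$ has exactly two solutions on $[0,\infty)$: $x=1$ and some $x_1>x^\star$. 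This yields the sign pattern $\phi_0'<0$ on $[0,1)$, $\phi_0'>0$ on $(1,x_1)$, $\phi_0'<0$ on $(x_1,\infty)$. Therefore $\phi_0$ decreases from $\phi_0(0)=d_0/2-1>0$ to $\phi_0(1)=0$, rises to a positive local maximum at $x_1$, then decreases again. The standard tail bound $1-\erf(z)\sim \exp(-z^2)/(z\sqrt{\pi})$ shows the first term of $\phi_0$ decays like $\exp(-x^2/(2c_0^2))$ while $\exp(-x^2/2)$ decays strictly faster (as $c_0>1$), so $\phi_0(x)\to 0^+$; combined with monotonicity on $(x_1,\infty)$ this forces $\phi_0\ge 0$ there too, and assembling all the pieces gives $\phi_0\ge 0$ everywhere. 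The main obstacle is entirely bookkeeping rather than technique: verifying that the two transcendental defining equations really do collapse to a double root at $x=1$, checking that the maximizer $x^\star$ of $h$ lies strictly to the right of $1$ (so that $x=1$ is on the increasing branch of $h$, yielding a second root $x_1$), and confirming that $\phi_0$ approaches $0$ from above rather than from below on the second decreasing branch.
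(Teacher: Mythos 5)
Your proof is correct and follows essentially the same route as the paper's: reduce to the boundary case $(c_0,d_0)$, observe that the defining equations force the double root $\phi_0(1)=\phi_0'(1)=0$, determine the sign pattern of $\phi_0'$ (negative, then positive on $[1,x_1]$, then negative), and conclude via $\phi_0(x)\to 0$ as $x\to\infty$. The only difference is that you supply the details the paper asserts without proof --- in particular the explicit monotonicity-in-$(c,d)$ reduction with the separate treatment of $x\le 0$ (where monotonicity in $c$ actually reverses, a point the paper glosses over), and the factorization $\phi_0'(x)=e^{-x^2/(2c_0^2)}\bigl(h(x)-K\bigr)$ with the unimodality of $h$ and the check $x^\star>1$ that justifies the claimed ``variations of $\phi_0$'' --- so your write-up is a strictly more rigorous version of the same argument.
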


\begin{proof}
    Define \begin{align*}
        \phi_0(x) \coloneqq \frac{d_0}{2}\left( 1 - \erf\left(\frac{x}{c_0\sqrt{2}}\right) \right) - \exp(-x^2/2).
    \end{align*}
    It suffices to prove that $\phi_0(x) \geq 0$ for all $x \in \reals$. By the choices of $c_0,d_0$, it holds that
    \begin{align*}
    \phi_0(1) &= \frac{d_0}{2}\left( 1 - \erf\left(\frac{1}{c_0\sqrt{2}}\right) \right) - \exp(-1/2) = 0, \\
    \phi_0'(1) &= \frac{-d_0}{c_0 \sqrt{2 \pi}} \exp\left(-1/(2 c_0^2)\right) + \exp(-1/2) = 0.
    \end{align*}
    Moreover, analyzing the variations of $\phi$ yields that there exists $x_0 > 1$ such that $\phi_0'(x) \geq 0$ for $x \in [1, x_0]$, and $\phi_0'(x) \leq 0$ otherwise. Therefore $\phi_0$ is non-decreasing on $[1, x_0]$ and non-increasing elsewhere. Since $\lim_{x \rightarrow +\infty} \phi_0(x) = 0$ and $\phi_0(1) = 0$, we get that $\phi_0(x) \geq 0$ for all $x \in \reals$.  
\end{proof}

\begin{lemma} \label{useful_lemma_convexity}
    For any convex function $g:\reals \rightarrow \reals$ such that $g(0) = 0$, for any $s \geq 1$ and $x \in \reals$, it holds that $s g(x) \leq g(s x)$.      
\end{lemma}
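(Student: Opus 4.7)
The plan is to exploit convexity directly via a convex combination that expresses $x$ in terms of $sx$ and $0$. Specifically, for $s \geq 1$, the scalar $1/s$ lies in $(0,1]$, so we can write
\[
x = \tfrac{1}{s}(sx) + \bigl(1 - \tfrac{1}{s}\bigr)\cdot 0,
\]
which is a valid convex combination of the two points $sx$ and $0$.

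Applying convexity of $g$ to this decomposition yields
\[
g(x) \leq \tfrac{1}{s}\, g(sx) + \bigl(1 - \tfrac{1}{s}\bigr)\, g(0).
\]
Using the hypothesis $g(0) = 0$, the second term vanishes, leaving $g(x) \leq \tfrac{1}{s}\, g(sx)$. Multiplying both sides by $s > 0$ gives the desired inequality $s\, g(x) \leq g(sx)$.

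There is no real obstacle here: the only subtlety is that the argument requires $s \geq 1$ precisely to ensure $1/s \in (0,1]$ so that the convex-combination coefficient is non-negative, and it requires $g(0) = 0$ to kill the boundary term. The statement holds for all $x \in \reals$ (positive, negative, or zero) without any sign restriction, since convexity of $g$ on all of $\reals$ is what is assumed. This lemma is exactly the well-known fact that for a convex function vanishing at the origin, the map $s \mapsto g(sx)/s$ is non-decreasing on $(0, \infty)$, specialized to the comparison between $s=1$ and $s \geq 1$.
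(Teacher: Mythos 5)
Your proof is correct and is essentially identical to the paper's: both write $x$ as the convex combination $\tfrac{1}{s}(sx) + (1-\tfrac{1}{s})\cdot 0$, apply convexity, use $g(0)=0$ to drop the boundary term, and multiply through by $s$. Nothing further to add.
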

\begin{proof}
    By convexity of $g$, for any $x' \in \reals$, we have $g(s^{-1} x') \leq s^{-1} g(x') + (1 - s^{-1}) g(0) = s^{-1} g(x')$, so by setting $x' = s x$ it holds that $s g(x) = s g(s^{-1} x') \leq g(x') = g(s x)$.
\end{proof}

\newpage 

\section{\vaporlite Analysis} \label{app_vaporlite}

Denoting the uncertainty signal of the \vaporopt optimization problem by $\wh \sigma$, we define the \vaporlitenospace$(c)$ optimization problem for any multiplicative scalar $c > 0$ as follows
\begin{align}
\begin{split}
    \max_{\lambda \in \Lambda(\Expect_{\phi} P)}  \quad &\Uc_{\wh\phi}(\lambda, c) \\ 
    &\coloneqq\sum_{l,s,a} \lambda_l(s,a) \left( \Expect_{\phi} r_l(s,a) + c \wh\sigma_l(s,a) - \sum_{a'} c \wh\sigma_l(s,a') \frac{\lambda_l(s,a')}{\sum_{b} \lambda_l(s,b)} \log \left( \frac{\lambda_l(s,a')}{\sum_{b} \lambda_l(s,b)} \right) \right) \\
    &= \sum_{l,s,a} \lambda_l(s,a) \left( \Expect_{\phi} r_l(s,a) + c \wh\sigma_l(s,a) + \Hc_{c \wh\sigma_l(s,\cdot)}\left( \frac{\lambda_l(s,\cdot)}{\sum_{b} \lambda_l(s,b)} \right) \right).
    \label{eq_vaporlite_c} 
    \end{split}
\end{align}
Note that this corresponds to the same objective as \eqref{eq_vaporlite_pi} except that it is optimized over the occupancy measures instead of the policies.

\begin{lemma}
    The \vaporlitenospace$(c)$ optimization problem is concave in $\lambda$.
\end{lemma}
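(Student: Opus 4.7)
The plan is to decompose $\Uc_{\wh\phi}(\,\cdot\,, c)$ into a manifestly linear part plus a weighted-entropy part, and to show the latter is concave in $\lambda$. The linear contributions $\sum_{l,s,a} \lambda_l(s,a)\bigl(\Expect_\phi r_l(s,a) + c\,\wh\sigma_l(s,a)\bigr)$ require no work, so the entire task reduces to the entropic summand.

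Since the inner entropy $\Hc_{c\wh\sigma_l(s,\cdot)}\bigl(\lambda_l(s,\cdot)/\mu_l(s)\bigr)$ depends only on $(l,s)$ (not on $a$), where $\mu_l(s) \coloneqq \sum_b \lambda_l(s,b)$, summing against $\lambda_l(s,a)$ over $a$ produces the clean form
\[
\sum_{l,s,a} \lambda_l(s,a)\, \Hc_{c\wh\sigma_l(s,\cdot)}\!\!\left(\frac{\lambda_l(s,\cdot)}{\mu_l(s)}\right)
\;=\; \sum_{l,s,a} c\,\wh\sigma_l(s,a)\, h\!\left(\lambda_l(s,a),\, \mu_l(s)\right),
\]
where I define $h(x,y) \coloneqq x\log(y/x)$ on $\reals_+^2$, extended by continuity with $h(0,y) \coloneqq 0$. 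The key structural observation is that both arguments of $h$ are \emph{linear} functions of $\lambda$.

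The crux is joint concavity of $h$ on $\reals_+^2$. My preferred route is the perspective-function argument: $-h(x,y) = x\log(x/y)$ is the perspective $t\,\phi(u/t)$ (with $u = x$, $t = y$) of the convex map $\phi: u \mapsto u\log u$, and perspectives of convex functions are jointly convex; hence $h$ is jointly concave. A direct two-line Hessian computation, yielding determinant zero and negative trace, gives the same conclusion.

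Given joint concavity of $h$, each term $h(\lambda_l(s,a), \mu_l(s))$ is concave in $\lambda$ as the composition of $h$ with a linear map into $\reals_+^2$; the non-negative weights $c\,\wh\sigma_l(s,a) \ge 0$ preserve concavity under summation, and adding the linear contributions yields concavity of $\Uc_{\wh\phi}(\,\cdot\,, c)$. The main conceptual obstacle is spotting the perspective-function (equivalently, negative relative-entropy) structure hidden inside $x\log(y/x)$; once that is identified the rest is a routine application of standard concavity-preserving operations.
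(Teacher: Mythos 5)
Your proof is correct and follows essentially the same route as the paper's: the paper's one-line argument invokes concavity of the normalized entropy $u \mapsto \sum_i u_i \log(\mathbf{1}^\top u / u_i)$ via the perspective operation (Boyd--Vandenberghe, Example 3.19), which is precisely your joint concavity of $h(x,y) = x\log(y/x)$ composed with the linear maps $\lambda \mapsto \bigl(\lambda_l(s,a), \sum_b \lambda_l(s,b)\bigr)$. Your version is marginally more careful in that establishing \emph{per-term} concavity is what is actually needed to accommodate the non-negative weights $c\,\wh\sigma_l(s,a)$, a point the paper's citation of the unweighted sum glosses over.
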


\begin{proof}
    The objective is concave due to the fact that the normalized entropy function $h$ of $u \in \reals_{>0}^n$ given by $h(u) = \sum_{i=1}^n u_i \log(\bold{1}^\top u / u_i)$ is concave (see \cite[Example 3.19]{boyd2004convex}) since the perspective operation preserves convexity. 
\end{proof}

\begin{lemma}\label{lemma_vaporlite_upperbound}
    For any $\eta > 0$ and $\lambda \in \Lambda(\Expect_{\phi} P)$, it holds that 
    \begin{align*}
        \V_{\wh\phi}(\lambda) &\leq \Uc_{\wh\phi}(\lambda, c_\eta) + \sqrt{2} \wh\sigma_{\max} L S \eta,
    \end{align*}
where $c_\eta \coloneqq \sqrt{2} ( 1 + \log 1/ \eta)$ and $\wh\sigma_{\max} \coloneqq \max_{l,s,a} \wh\sigma_l(s,a)$.
\end{lemma}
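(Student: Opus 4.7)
The plan is to upper-bound $\V_{\wh\phi}(\lambda)$ by $\V_{\wh\phi}(\lambda, \tau)$ for a judicious choice of $\tau$ and then use the occupancy-measure factorization $\lambda_l(s,a) = \pi_l^\lambda(s,a)\, \mu_l^\lambda(s)$ to separate the policy entropy (which appears inside $\Uc_{\wh\phi}$) from the state-marginal entropy (which must be absorbed into the error term). A natural choice is $\tau_l(s,a) = c_\eta\, \wh\sigma_l(s,a)$, aligning the temperature weighting with that of $\Uc_{\wh\phi}(\lambda, c_\eta)$. Equivalently, the core pointwise ingredient is the Young-style inequality $\sqrt{-2\log x} \leq c_\eta - \log x / (2 c_\eta)$ on $(0, 1]$, which holds because $c_\eta \cdot (2c_\eta)^{-1} = 1/2$ (AM-GM applied after squaring).

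Multiplying this pointwise bound by $\wh\sigma_l\, \lambda_l$, summing over $(l,s,a)$, and substituting $\log \lambda_l(s,a) = \log \pi_l^\lambda(s,a) + \log \mu_l^\lambda(s)$ splits the resulting entropy into a ``policy'' piece that matches the entropy term inside $\Uc_{\wh\phi}(\lambda, c_\eta)$ and a ``state-marginal'' remainder. After rearranging, this yields
\begin{align*}
\V_{\wh\phi}(\lambda) - \Uc_{\wh\phi}(\lambda, c_\eta) \leq \Big( \tfrac{1}{2 c_\eta} - c_\eta \Big) \sum_{l,s,a} \wh\sigma_l\, \lambda_l\, (-\log \pi_l^\lambda) \;+\; \tfrac{1}{2 c_\eta} \sum_{l,s,a} \wh\sigma_l\, \lambda_l\, (-\log \mu_l^\lambda).
\end{align*}
Since $c_\eta \geq \sqrt{2}$ and $-\log \pi_l^\lambda(s,a) \geq 0$, the first sum is non-positive and can be discarded.

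For the residual state-marginal contribution, the idea is to use $\sum_a \wh\sigma_l(s,a)\, \lambda_l(s,a) \leq \wh\sigma_{\max}\, \mu_l^\lambda(s)$ and then invoke the elementary bound $-x \log x \leq x \log(1/\eta) + \eta$ valid for $x, \eta > 0$ (which follows from $\max_{x > 0}\, x \log(\eta/x) = \eta/e \leq \eta$). Summing over $s$ using $\sum_s \mu_l^\lambda(s) = 1$ yields $\sum_s -\mu_l^\lambda(s) \log \mu_l^\lambda(s) \leq \log(1/\eta) + S\eta$, and then summing over $l$ gives a total of $L \log(1/\eta) + L S \eta$. Combined with the $c_\eta = \sqrt{2}(1 + \log(1/\eta))$ scaling introduced by the Young inequality, this is what should deliver the advertised $\sqrt{2}\, \wh\sigma_{\max}\, L S\, \eta$ remainder.

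The main obstacle is precisely this last constant-tracking step: because $c_\eta$ grows only logarithmically in $1/\eta$, dividing the $L \log(1/\eta)$ piece by $2 c_\eta$ yields a term that is bounded but does not automatically vanish as $\eta \to 0$. Matching the stated constant therefore calls for a sharper calibration, e.g., using $\sqrt{-2\log x} \leq \sqrt{2}\,(1 - \log x)$ (which corresponds to the choice $a = b = \sqrt{2}$ with $a b = 2 \geq 1/2$, trading the $1/c_\eta$ factor for a direct $\sqrt{2}$ prefactor), or splitting the sum according to whether $\lambda_l(s,a) \geq \eta$ or $\lambda_l(s,a) < \eta$ and estimating each regime separately, using that $x \sqrt{-2\log x}$ is concave and bounded by $\eta/\sqrt{e}$ on the small-$x$ regime. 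In either case the crux is to route all the slack into multiples of $\eta$ rather than $\log(1/\eta)$.
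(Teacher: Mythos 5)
Your overall architecture matches the paper's: factor $\lambda_l(s,a) = \mu^\pi_l(s)\pi_l(s,a)$, bound $\sqrt{-2\log x}$ by an affine function of $-\log x$, absorb the policy-entropy piece into $\Uc_{\wh\phi}(\lambda, c_\eta)$ (using that its coefficient there, $c_\eta\wh\sigma$, dominates the one you produce), and push the state-marginal entropy into the error via $-x\log x \le x\log(1/\eta) + \eta$. The problem is the calibration of the affine bound, and you have correctly diagnosed it yourself: with $\sqrt{-2\log x} \le c_\eta - \tfrac{\log x}{2c_\eta}$ the state-marginal entropy enters with coefficient $\tfrac{1}{2c_\eta}$, and after applying $-\mu\log\mu \le \mu\log(1/\eta) + \eta$ and summing over $l,s$ you are left with a residual of order $\wh\sigma_{\max} L \log(1/\eta)/(2 c_\eta)$. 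Since $c_\eta = \sqrt{2}(1+\log(1/\eta))$, this tends to $\wh\sigma_{\max}L/(2\sqrt 2)$ as $\eta\to 0$ and is therefore not bounded by $\sqrt 2\,\wh\sigma_{\max} L S \eta$. So the chain of inequalities you actually write down does not establish the lemma.

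The first repair you mention is exactly the paper's proof: use $\sqrt{-2\log x} \le \sqrt{2}\,(1-\log x)$ on $(0,1]$. Then the coefficient of $-\log\lambda_l(s,a)$ is $\sqrt 2\,\wh\sigma_l(s,a)$; the state-marginal part contributes $\sqrt2\,\wh\sigma_l\lambda_l\log(1/\eta) + \sqrt2\,\wh\sigma_l\pi_l\eta$, the first of which combines with the constant part to give exactly $c_\eta\wh\sigma_l\lambda_l$, and the second of which sums to at most $\sqrt2\,\wh\sigma_{\max}\eta\sum_{l,s,a}\pi_l(s,a) \le \sqrt2\,\wh\sigma_{\max}LS\eta$; the policy-entropy part carries coefficient $\sqrt2\,\wh\sigma \le c_\eta\wh\sigma$ and is dominated by the entropy term already present in $\Uc_{\wh\phi}(\lambda,c_\eta)$ since that entropy is non-negative. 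You name this inequality but do not carry it through, so as written the proposal stops one (easy but essential) step short of a proof. Your second suggested repair, splitting at $\lambda_l(s,a)\ge\eta$, does not work as stated: $\max_{0<x\le\eta}x\sqrt{-2\log x} = \eta\sqrt{2\log(1/\eta)}$ for $\eta\le e^{-1/2}$, not $\eta/\sqrt e$, and the extra $\sqrt{\log(1/\eta)}$ factor again spoils the $O(\eta)$ remainder you need.
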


\begin{proof}
    Any occupancy measure $\lambda \in \Lambda(\Expect_{\phi} P)$ can be decomposed as $\lambda_l(s,a) = \mu^\pi_l(s) \pi_l(s,a)$, where $\mu^\pi$ denotes the stationary state distribution under $\pi$, \ie, $\mu^\pi_l(s) \coloneqq \sum_{a \in \Ac} \lambda_l(s,a)$, and $\pi$ is the policy given by $\pi_l(s,a) \coloneqq \lambda_l(s,a) / \mu_l^\pi(s)$ so long as $\mu_l^\pi(s) > 0$, otherwise $\pi_l(s,\cdot)$ can be any distribution, \eg, uniform. It holds that 
\begin{align*}
         \lambda_l(s,a) \sqrt{- \log  \lambda_l(s,a)}  &\myineqi  \lambda_l(s,a) -   \lambda_l(s,a)\log  \lambda_l(s,a) \\
        &=  \lambda_l(s,a) -   \mu^{\pi}_l(s) \pi_l(s,a) \log \pi_l(s,a) -  \pi_l(s,a)   \mu^{\pi}_l(s) \log  \mu^\pi_l(s) \\
        &\myineqii  \lambda_l(s,a) -   \mu^{\pi}_l(s) \pi_l(s,a) \log \pi_l(s,a) +  \pi_l(s,a)   \mu^{\pi}_l(s) \log 1/\eta + \pi_l(s,a) \eta \\
        &=  \lambda_l(s,a) \left( 1 + \log 1/\eta \right) -   \mu^{\pi}_l(s) \pi_l(s,a) \log \pi_l(s,a) + \pi_l(s,a) \eta,
    \end{align*}
where (i) uses that $\sqrt{-\log x} \leq 1 - \log(x)$ for any $0 < x \leq 1$, and (ii) uses that $- x \log x \leq x \log 1/\eta + \eta$ for any $0 \leq x \leq 1$ and $\eta > 0$. Hence, we have 
\begin{align*}
        \V_{\wh\phi}(\lambda) &= \sum_{l,s,a} \lambda_l(s,a) \Expect_\phi r_l(s,a)   + \sum_{l,s,a} \wh\sigma_l(s,a) \lambda^{\pi}_l(s,a) \sqrt{- 2 \log \lambda^{\pi}_l(s,a)}  \\ 
        &\leq  \sum_{l,s,a} \lambda_l(s,a) \left( \Expect_\phi r_l(s,a)  + \sqrt{2}(1 + \log 1/\eta) \wh\sigma_l(s,a) \right) \\
        &\quad  + \sum_{l,s} \mu^{\pi}_l(s) \left( -  \sum_a \sqrt{2} \wh\sigma_l(s,a) \pi_l(s,a) \log \pi_l(s,a) \right) +  \sum_{l,s,a} \sqrt{2} \eta \wh\sigma_l(s,a) \pi_l(s,a) \\
        &\leq \Uc_{\wh\phi}(\lambda, c_\eta) + \sqrt{2} \wh\sigma_{\max} L S \eta.
\end{align*}
\end{proof}

\begin{lemma}
    Consider the algorithm \vaporlite that at each episode $t=1, \ldots, N$ solves the optimization problem \vaporlitenospace$(c=\sqrt{2}(1 + \log S L t))$. Then under Assumptions \ref{assumption_1} and \ref{assumption_2}, it holds that
    \begin{align*}
        \bayesregret_{\phi}(\vaporlite, T) \leq
      \widetilde{O}(L \sqrt{S A T}).
    \end{align*}
\end{lemma}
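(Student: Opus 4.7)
The plan is to mirror the proof of \Cref{thm_regret_bound}, but with $\V_{\modifphit}$ replaced by the looser \vaporlite surrogate $\Uc_{\modifphit}$ via \Cref{lemma_vaporlite_upperbound}. First, I would invoke \Cref{lemmareductionregret} to reduce to the case of known dynamics $\Expect^t P$ under the transformed beliefs $\modifphit = \posteriormap(\phi^t)$, where the count-based bound $(\modif\sigma_l^t(s,a))^2 \leq (3.6^2(\nu^2+1) + L^2)/(n_l^t(s,a)\vee 1)$ holds under Assumptions~\ref{assumption_1} and \ref{assumption_2}. Setting $\eta_t \coloneqq 1/(SLt)$ so that $c_t = c_{\eta_t}$ in the notation of \Cref{lemma_vaporlite_upperbound}, I chain four observations: (i) the Bayesian value decomposition used in the proof of \Cref{thm_generic_regret_bound}; (ii) \Cref{lemma_variational_knownP} applied under $\modifphit$; (iii) \Cref{lemma_vaporlite_upperbound}, which gives $\max_{\lambda} \V_{\modifphit}(\lambda) \leq \max_{\lambda} \Uc_{\modifphit}(\lambda, c_t) + \sqrt{2}\,\modif\sigma_{\max}^t LS\eta_t$; and (iv) the optimality of the \vaporlite occupancy $\lambda^t$ for $\Uc_{\modifphit}(\cdot, c_t)$. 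This yields a per-episode regret upper bound of the form
\[
c_t \sum_{l,s,a} \lambda_l^t(s,a) \modif\sigma_l^t(s,a) \ +\ \sum_{l,s} \mu_l^{\pi^t}(s) \Hc_{c_t \modif\sigma_l^t(s,\cdot)}(\pi_l^t(s,\cdot)) \ +\ \sqrt{2}\,\modif\sigma_{\max}^t LS\eta_t.
\]

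The bias terms sum to $\widetilde O(L \log N)$ across episodes under the count-based bound on $\modif\sigma_{\max}^t$. The uncertainty bonus $\sum_t c_t \sum_{l,s,a} \lambda^t \modif\sigma^t$ is handled exactly as in the proof of \Cref{thm_generic_regret_bound}: one Cauchy-Schwarz gives $\sum \lambda \modif\sigma \leq \sqrt{L}\,\sqrt{\sum \lambda \modif\sigma^2}$, and a further Cauchy-Schwarz over $t$ combined with the pigeonhole bound $\sum_t \sum \lambda^t/(n^t \vee 1) \leq SA(1+\log N)$ delivers $\widetilde O(L\sqrt{SAT})$ since $c_{\max} = \widetilde O(1)$.

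The main obstacle is the weighted policy entropy term $c_t \sum_{l,s,a} \modif\sigma_l^t(s,a) \lambda_l^t(s,a) \log(1/\pi_l^t(s,a))$, which does not fit the Cauchy-Schwarz template of \Cref{thm_generic_regret_bound} because $\log(1/\pi)$ is unbounded as $\pi \to 0$. My plan is to split the sum at the threshold $\pi_l^t(s,a) = 1/(SLt)$, exploiting the matching scales baked into the schedule $c_t$. On the event $\{\pi^t < 1/(SLt)\}$, the monotonicity of $x \log(1/x)$ on $(0, 1/e)$ yields $\pi^t \log(1/\pi^t) \leq \log(SLt)/(SLt)$, so after summing over state-actions and over episodes this part contributes only $\widetilde O(AL/S)$ in total. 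On the complementary event $\{\pi^t \geq 1/(SLt)\}$, I bound $\log(1/\pi^t) \leq \log(SLt) = \widetilde O(1)$, after which the same Cauchy-Schwarz plus pigeonhole argument used for the uncertainty bonus delivers $\widetilde O(L\sqrt{SAT})$.

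Summing the three contributions and tracing back through \Cref{lemmareductionregret} gives $\bayesregret_\phi(\vaporlite, T) \leq \widetilde O(L\sqrt{SAT})$, as required. The delicate point, and the reason for choosing $c_t = \sqrt{2}(1 + \log SLt)$, is precisely to balance the bias from \Cref{lemma_vaporlite_upperbound} against the logarithmic upper bound on $\log(1/\pi^t)$ used in the entropy split: a smaller $c_t$ would loosen the $\V$-to-$\Uc$ inequality, while a larger $c_t$ would inflate both the uncertainty bonus and the entropy term above the $\widetilde O(L\sqrt{SAT})$ target.
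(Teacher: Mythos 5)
Your proposal is correct and follows the paper's overall skeleton exactly: reduce to known dynamics via \Cref{lemmareductionregret}, chain $\Expect_{s\sim\rho}\Expect^t V_1^\star(s) \leq \V(\Expect^t\pop) \leq \Uc(\Expect^t\pop, c_t) + \mathrm{bias} \leq \Uc(\Expect^t\lambda^{\pi^t}, c_t) + \mathrm{bias}$ with $\eta_t = 1/(SLt)$, and then bound the uncertainty bonus by Cauchy--Schwarz plus the pigeonhole principle. Where you genuinely diverge is the one new technical step of this proof, the weighted policy entropy term $Z_2$. The paper first uses $\Expect^t\lambda_l^{\pi^t}(s,a) \leq \pi_l^t(s,a)$ to replace $\log(1/\pi_l^t)$ by $\log(1/\Expect^t\lambda_l^{\pi^t})$, and then applies Cauchy--Schwarz against the \emph{second moment of information} $\Hc^{(2)}(\Expect^t\lambda_l^{\pi^t}) = \sum_{s,a}\lambda\,(\log\lambda)^2 \leq \log^2(SA)$, which slots the entropy term directly into the same template as the bonus term. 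You instead split at the threshold $\pi_l^t(s,a) = 1/(SLt)$: monotonicity of $-x\log x$ on $(0,1/e)$ makes the small-$\pi$ part a lower-order additive term (your $\widetilde O(AL/S)$ accounting checks out), and the crude bound $\log(1/\pi) \leq \log(SLt)$ on the large-$\pi$ part reduces it to the bonus term up to a $\widetilde O(1)$ factor. Both arguments are valid and give the same $\widetilde O(L\sqrt{SAT})$ rate; yours is more elementary and avoids the $\Hc^{(2)}$ machinery and the $\lambda \leq \pi$ conversion, while the paper's is more compact and keeps every term in occupancy-measure form so that a single Cauchy--Schwarz/pigeonhole pass handles everything. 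Your closing remark on the role of the schedule $c_t = \sqrt{2}(1+\log SLt)$ correctly identifies the trade-off between the $\V$-to-$\Uc$ bias and the inflation of the bonus and entropy terms.
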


\begin{proof}
Denote by $\pi^t$ the policy executed by the algorithm at episode $t$, and by $\Expect^t \pop$ the posterior probability of state-action optimality in the expected model $\Expect^t P$. It holds that 
\begin{align*}
     \Expect_{s \sim \rho} \Expect^t V_1^\star(s) &\myineqi \V_{\wh\phi^t}( \Expect^t\pop) \\
    &\myineqii \Uc_{\wh\phi^t}\left( \Expect^t\pop, 1/(S L t) \right) + \sqrt{2} \wh\sigma_{\max} /t \\
    &\myineqiii \Uc_{\wh\phi^t}\left( \Expect^t \lambda^{\pi^t}, 1/(S L t) \right) + \sqrt{2} \wh\sigma_{\max} / t,
\end{align*}
where (i) combines \Cref{lemma_variational_knownP,lemma_reduction_V}, (ii) applies \Cref{lemma_vaporlite_upperbound} to $\Expect^t \pop \in \Lambda(\Expect^t P)$ with the choice $\eta = 1/(S L t)$, and (iii) comes from the fact that $\Expect^t \lambda^{\pi^t} \in \argmax_{\lambda \in \Lambda(\Expect^t P)} \Uc_{\wh\phi^t}\left( \lambda, 1/(S L t) \right)$. Hence, retracing the steps of the regret derivation of the proof of \Cref{thm_generic_regret_bound}, we can bound the Bayes regret of \vaporlite as
\begin{align*}
&\bayesregret_{\phi}(\vaporlite, T) \\
  &\leq  \Expect \sum_{t=1}^{\episodes} \left[ \Uc_{\wh\phi^t}(\Expect^t \lambda^{\pi^t}, 1/(S L t)) -  \sum_{l=1}^L  \sum_{s,a} \Expect^t \lambda_l^{\pi^t}(s, a) \Expect^t r_l(s, a) \right] + \sqrt{2} \wh\sigma_{\max} \sum_{t=1}^{\episodes} \frac{1}{t}\\
  &\leq \underbrace{\Expect \sum_{t=1}^{\episodes} \sum_{l=1}^L \sum_{s,a} \Expect^t \lambda_l^{\pi^t}(s,a) \sqrt{2}(1 + \log S L t) \wh\sigma_l^t(s,a)}_{\coloneqq Z_1} \\
  &\quad + \underbrace{\Expect \sum_{t=1}^{\episodes} \sum_{l=1}^L \sum_{s,a} \Expect^t \lambda_l^{\pi^t}(s,a) \sqrt{2}(1 + \log S L t)  \sum_{a'} \left( - \wh\sigma_l^t(s,a') \pi^t_l(s,a') \log  \pi^t_l(s,a') \right)}_{\coloneqq Z_2} \\
  &\quad + \sqrt{2} \wh\sigma_{\max} \sum_{t=1}^{\episodes} \frac{1}{t}. 
\end{align*}
To bound $Z_1$ we apply the Cauchy-Schwarz inequality and the pigeonhole principle, as in the proof of \Cref{thm_generic_regret_bound}, which gives that $Z_1 = \wt{O}(\wh\sigma_{\max} \sqrt{S A T})$. To bound $Z_2$, we introduce the second moment of the information \cite{juergensen2010entropy} of a discrete random variable $X$ supported on finite set $\Xc$ as 
\begin{align*}
    \Hc^{(2)}(X) \coloneqq \sum_{x \in \Xc} \Prob(X = x) \left( - \log \Prob(X = x) \right)^2,
\end{align*}
and note that $\Hc^{(2)}(X) \leq \log^2 (| \Xc |)$ for $| \Xc | \geq 3$ \cite[Proposition 8]{juergensen2010entropy}. We bound
\begin{align*}
    Z_2 &\myineqi  \Expect \sum_{t=1}^{\episodes} \sum_{l=1}^L \sum_{s,a} \Expect^t \lambda_l^{\pi^t}(s,a) \sqrt{2}(1 + \log S L t) \wh\sigma_l^t(s,a) \left( - \log  \Expect^t \lambda_l^{\pi^t}(s,a) \right) \\
    &\myineqii  \Expect \sqrt{ \sum_{t=1}^{\episodes} \sum_{l=1}^L \Hc^{(2)}(\Expect^t \lambda_l^{\pi^t}) } \sqrt{\sum_{t=1}^{\episodes} \sum_{l=1}^L \sum_{s,a} 2 \Expect^t \lambda_l^{\pi^t}(s,a) (1 + \log S L t)^2 \wh\sigma_l^t(s,a)^2 } \\
    &\myineqiii \wt{O}(\wh\sigma_{\max} \sqrt{S A T}),
\end{align*}
where (i) uses that $\Expect^t \lambda_l^{\pi^t}(s,a) \leq \pi^t_l(s,a)$, (ii) applies the Cauchy-Schwarz inequality and (iii) uses the pigeonhole principle (as in the $Z_1$ bound) combined with the aforementioned logarithmic upper bound on $\Hc^{(2)}$. Finally, we bound the last term by using that $\sum_{t=1}^{\episodes} \frac{1}{t} \leq 1 + \log N$. Putting everything together and using that $\wh\sigma_{\max} = O(L)$ yields the desired bound $\bayesregret_{\phi}(\vaporlite, T) \leq \widetilde{O}(L \sqrt{S A T})$.
\end{proof}

\newpage 

\begin{figure}[t]
\centering
\begin{minipage}{.45\textwidth}
  \centering
\includegraphics[width=\linewidth]{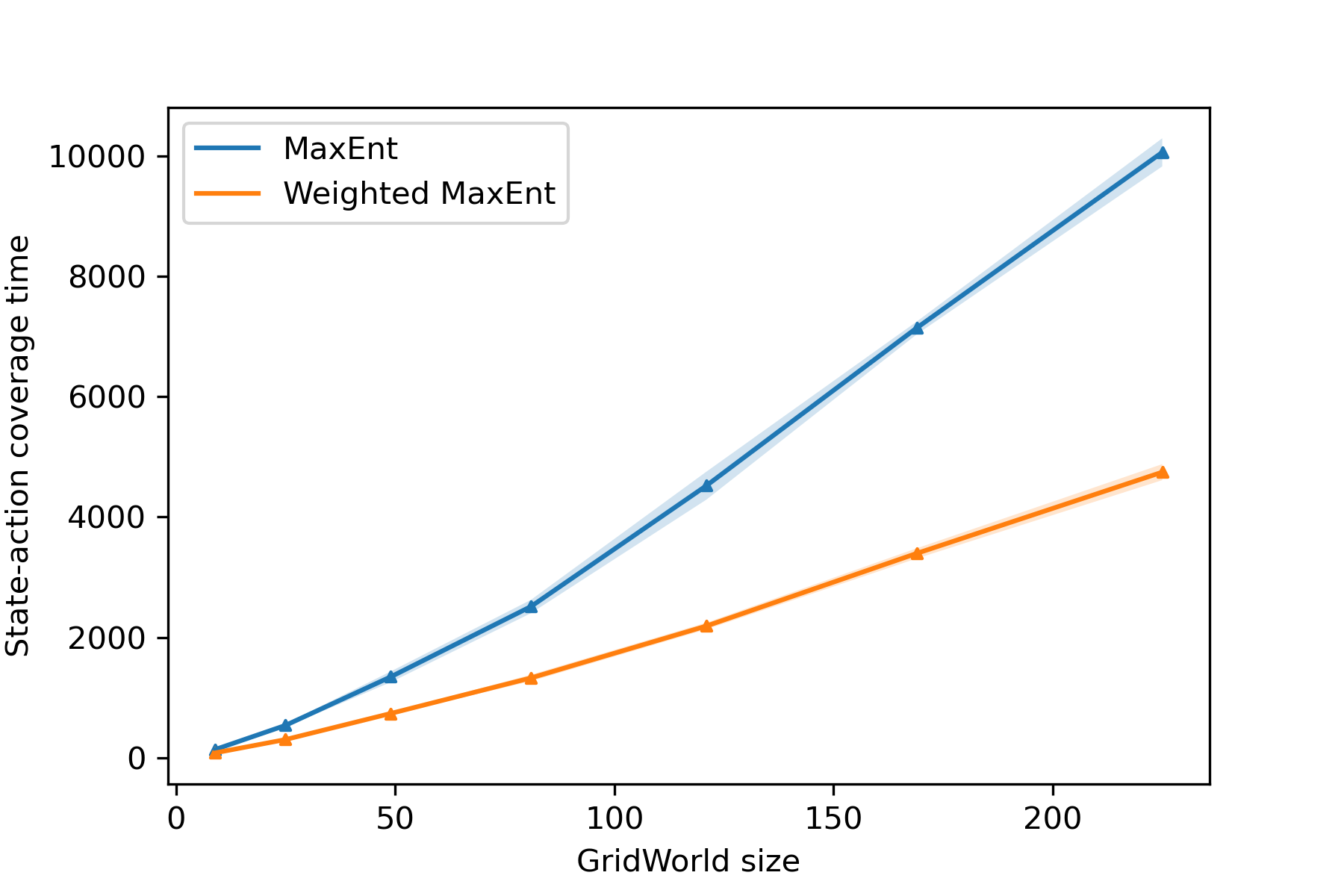}
\captionof{figure}{Optimizing for a weighted maximum entropy objective results in faster coverage time and thus more aggressive exploration in a reward-free GridWorld.}
\label{fig_maxent_gridworld}
\end{minipage}%
\hfill%
\begin{minipage}{.45\textwidth}
  \centering
  \includegraphics[width=\linewidth]{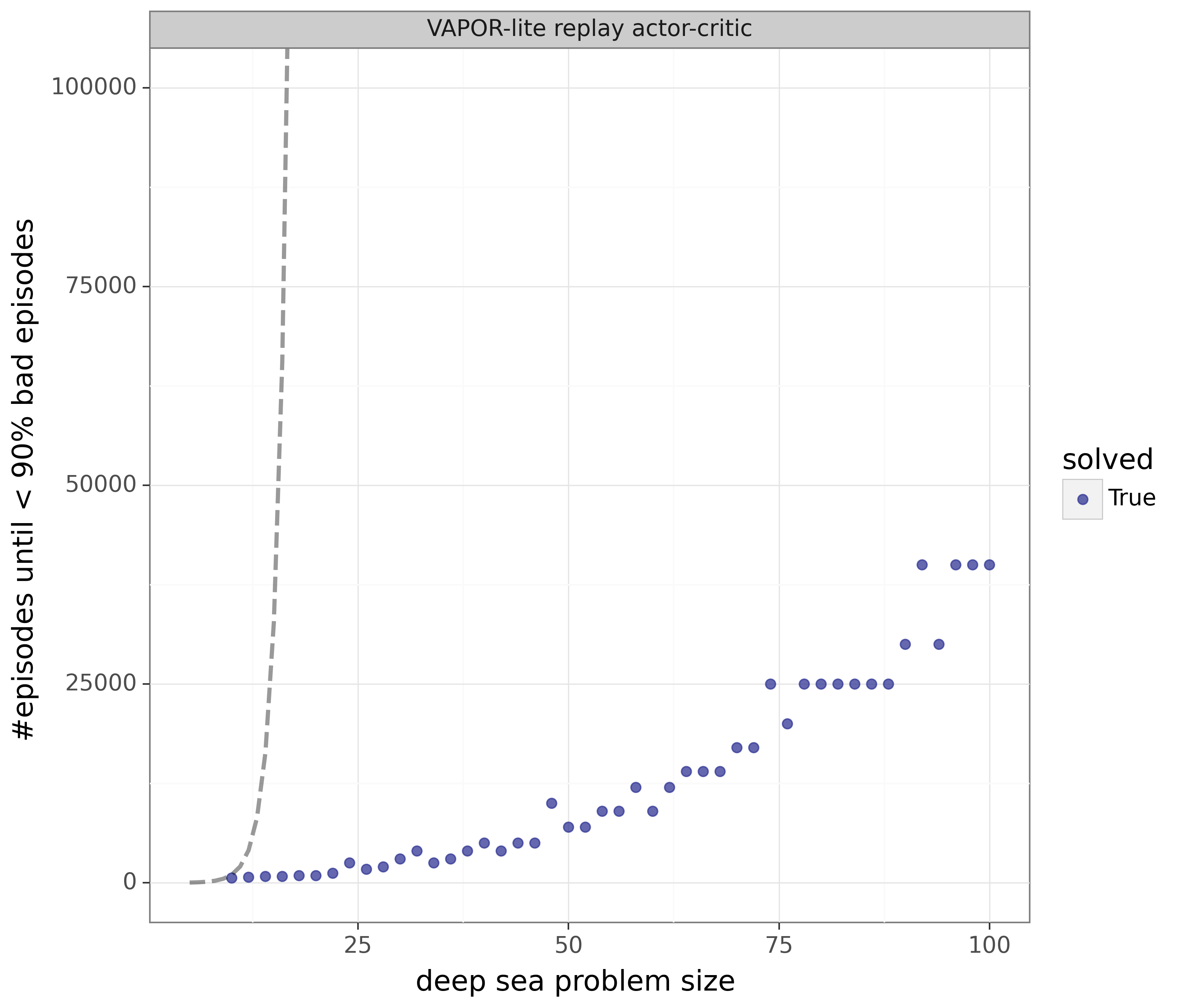}
\captionof{figure}{Learning time of \vaporlite on DeepSea (one-hot pixel representation into neural net). Dashed line represents $2^L$ where $L$ denotes the depth.}
\label{fig_vaporlite_deepsea}
\end{minipage}
\vspace{-0.2in}
\end{figure}

\section{Experimental Details} \label{app_exp_details}

\subsection{Reward-Free GridWorld}

We empirically complement our discussion in \Cref{section_connections} on the connection of \vaporalg to maximum entropy exploration. We consider a simple 4-room GridWorld domain with no rewards, known deterministic transitions, four cardinal actions and varying state space size. We measure the ability of the agent to cover the state-action space as quickly as possible, \ie, visit each state-action at least once. We consider the state-of-the-art algorithm of \cite{tiapkin2023fast} for the original maximum entropy exploration objective $\max_{\lambda \in \Lambda(P)} \mathcal{H}(\lambda)$, and compare it to the same algorithm that optimizes for weighted entropy $\max_{\lambda \in \Lambda(P)} \mathcal{H}_{\sigma}(\lambda)$, where we define the uncertainty as \mbox{$\sigma(s,a) = \ind{n(s,a) = 0}$}, with $n(s,a)$ the visitation count to state-action $(s,a)$. We see in \Cref{fig_maxent_gridworld} that optimizing for this weighted entropy objective results in faster coverage time. This illustrates the exploration benefits of the state-action weighted entropy regularization of \vaporopt (\eg, weighted by the uncertainty as done in \vaporlite).

\subsection{DeepSea (tabular)}

We provide details on the experiment of \Cref{fig_deep_sea}. For the transition function we use a prior Dirichlet$(1/\sqrt{S})$ and for rewards a standard normal $\Nc(0, 1)$, as done by \cite{o2020making}. Similar to \cite{osband2019deep,klearning,luis2023model}, we accelerate learning by imagining that each experienced transition $(s, a, s', r)$ is repeated $100$ times. Effectively, this strategy forces the MDP posterior to shrink faster without favoring any algorithm, making them all converge in fewer episodes. 

\textbf{\vaporalg implementation.} The \vaporopt optimization problem is an exponential cone program that can be solved efficiently using modern optimization methods. We point out that we do not need to solve the two-player zero-sum game between $\lambda$ and $\tau$ since the minimization over $\tau$ conveniently admits a closed-form solution, see \Cref{eq_defV_min_tau}. In our experiments, we use CVXPY \cite{diamond2016cvxpy}, specifically the ECOS solver \cite{ecos} (with 1e-8 absolute tolerance). In terms of runtime, this took a few seconds on the largest DeepSeas, which was sufficient for our purposes. There is a natural trade-off between a less accurate optimization solution (i.e., better computational complexity) and a more accurate policy (i.e., better sample complexity), which can be balanced with the choice of CVXPY solver, its desired accuracy, its maximum number of iterations, \etc The implementation of the \vaporopt optimization problem in CVXPY is straightforward using the library's atomic functions. Just note that although the expression $x \sqrt{-\log x}$ that appears in the $\V_{\phi}$ function \eqref{eq_defV_min_tau} is not an atomic function in CVXPY, a simple trick 
is to use the equivalence 
\begin{equation*}
\begin{aligned}
\max_{x} \quad & x \sqrt{-\log x}  \qquad \iff \qquad  &&\max_{x, y} \quad y \\
\textrm{s.t.} \quad & x \geq 0 \qquad && \textrm{s.t.} \quad x \geq 0, \quad y \geq 0, \quad -x \log x \geq y^2 / x
\end{aligned}
\end{equation*}
where $-x \log x$ ($\textrm{cvxpy.entr}(x)$) and $y^2/ x$ ($\textrm{cvxpy.quad\_over\_lin}(y,x)$) are atomic functions.

\subsection{DeepSea (neural network)}

We consider the DeepSea domain where instead of using a tabular state representation, we feed a one-hot representation of the agent location into a neural network, using bsuite \cite{osband2019behaviour}. As discussed in prior works \cite{osband2019deep,o2020making,o2023efficient}, agents that do not  adequately perform deep exploration are far from being able to solve depths of up to $100$ within $10^5$ episodes. This includes vanilla actor-critic or Soft Q-learning (which can only solve depths up to around $14$, showcasing a learning time that suffers from an exponential dependence on depth), but also agents with some exploration mechanisms such as optimistic actor-critic or Bootstrapped DQN (depths up to around $50$). In contrast, \Cref{fig_vaporlite_deepsea} shows that \vaporlite is able to solve DeepSea instances out to size $100$ within $4 \times 10^4$ episodes, without a clear performance degradation. We chose the exact same algorithmic configuration and hyperparameter choices as our experiments on Atari (\Cref{app_atari}), except $\sigma_{\text{scale}} = 3.0$ and $\text{Replay fraction} = 0.995$. This experiment shows that \vaporlite is capable of \emph{deep exploration}, as suggested by the regret analysis in \Cref{app_vaporlite}.

\subsection{Atari} \label{app_atari}

\paragraph{RL agent.} Our setup involves actors generating experience and sending them back to a learner, which mixes online data and offline data from a replay buffer to  to update the network weights \cite{hessel2021podracer}. Our underlying agent is an actor-critic algorithm with use of replay data, thus we call it `Replay Actor-Critic (RAC)'. We also made use of V-trace clipped importance sampling to the off-policy trajectories \cite{espeholt2018impala}. Replay was prioritized by TD-error and when sampling the replay prioritization exponent was $1.0$ \cite{schaul2015prioritized}. We note that, as commonly done on Atari, we consider the discounted RL setting (rather than the finite-horizon setting used to derive our theoretical results). We refer to \Cref{tab_HP_atari} for hyperparameter details. 

\begin{table}[t!]
\centering
\renewcommand*{\arraystretch}{1.2}
\begin{small}
\begin{normalfont}
         \begin{tabular}{ c | c | c | c | c }
      Common Hyperparameter & \multicolumn{4}{c}{Value} \\
      \hline
     Discount factor & \multicolumn{4}{c}{$0.995$} \\
     Replay buffer size & \multicolumn{4}{c}{$1e5$} \\
     Replay fraction & \multicolumn{4}{c}{$0.9$} \\
     Replay prioritization exponent & \multicolumn{4}{c}{$1.0$} \\
     Adam step size & \multicolumn{4}{c}{$1e-4$} \\
     $\lambda$ of V-Trace($\lambda$) & \multicolumn{4}{c}{$0.9$} \\
    \hline
    \hline
    \makecell{Algorithm-specific Hyperparameter \\ (\Cref{fig_atari_entreg})} & None & Fixed, scalar &  Tuned, scalar & Tuned, state-action \\
    \hline
    Entropy regularization $\beta$ & / & $0.01$ & / & / \\ 
    Uncertainty scale $\sigma_{\text{scale}}$ & $0.01$ & $0.01$ & $0.01$ & $0.005$ \\
    $\tau_{\min}$ & / & / & $0.005$ & / \\
    $\tau_{\max}$ & / & / & $10$ & / \\
    $\tau_{\text{init}}$ & / & / & $0.02$ & / \\
    $\tau$ step size & / & / & $1e-4$ & / \\
    \end{tabular}
\end{normalfont}
\end{small}
\vspace{0.2in}
\caption{Hyperparameters used in the Atari experiments.}
\label{tab_HP_atari}
\end{table}

\paragraph{Uncertainty measure in \vaporlite.} For the uncertainty measure $\wh\sigma$, we use an ensemble of reward predictors. Specifically, we set it to the standard deviation of $H = 10$ randomly initialized reward prediction heads $\wh r^{(i)}$ with randomized prior functions \cite{osband2018randomized}, \ie,
\begin{align*}
    \wh\sigma(s,a) = \min\left(\sigma_{\text{scale}} \,\textrm{std}_{1 \leq i \leq H}( \{ \wh r^{(i)}(s,a) \}), 1\right),
\end{align*}
with $\sigma_{\text{scale}} > 0$ a scaling hyperparameter. We also added a small amount of noise to the rewards in the replay buffer for the targets for the reward prediction ensemble (specifically, $0.1$ times an independent standard normal distribution) \cite{dwaracherla2022ensembles}, to prevent the ensemble from potentially collapsing due to the use of replay data. 

\afterpage{%
\begin{figure}[t!]
\centering
\includegraphics[width=\linewidth]{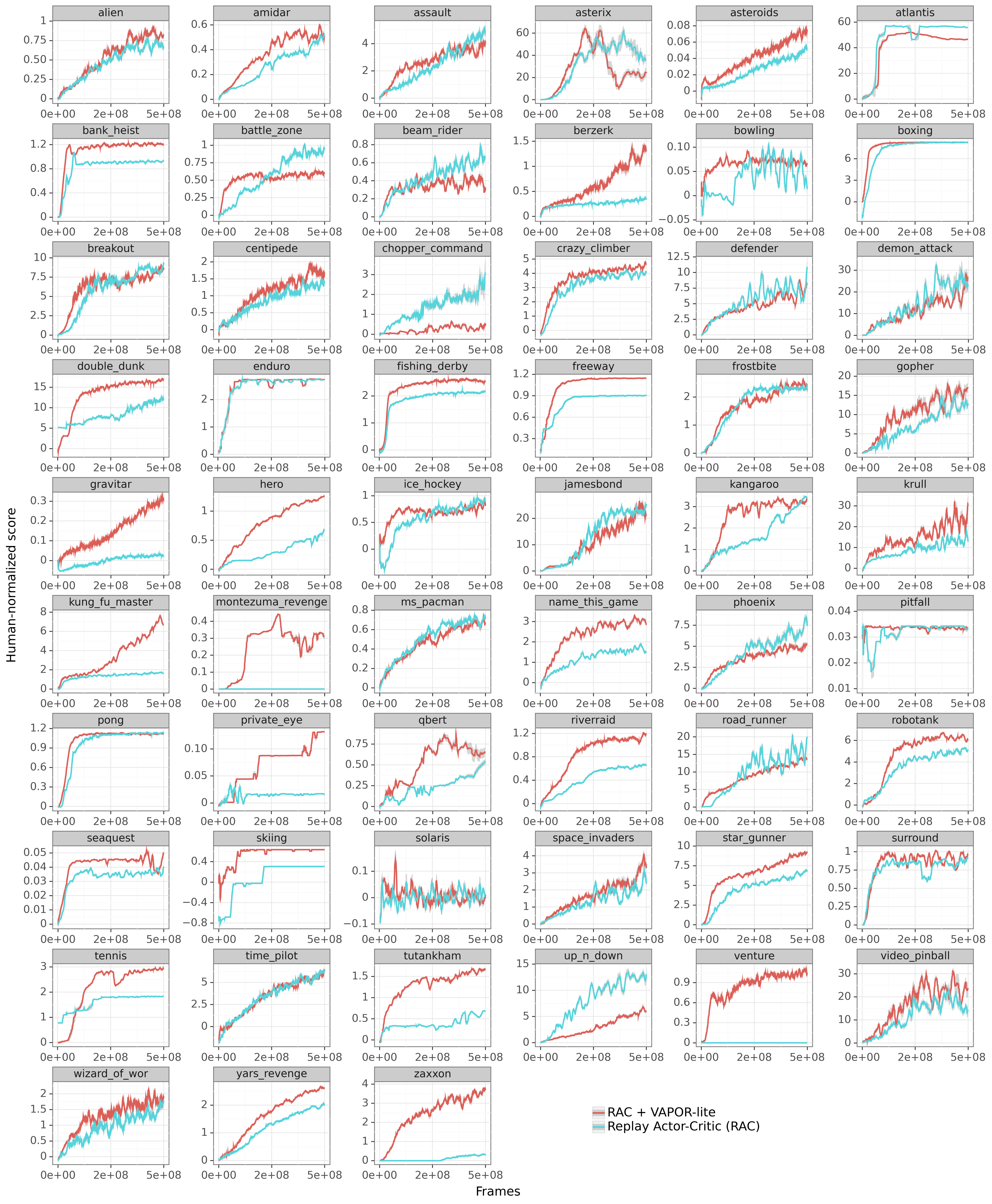}
\caption{Per-game human normalized score (averaged over $5$ seeds).}
\label{fig_per_game}
\end{figure}
\clearpage
}

\paragraph{Baselines.} In \Cref{fig_atari}, we compare \vaporlite, whose policy loss may be condensely expressed as
\begin{align}\label{eq_oac_weighted}
\max_{\pi \in \Pi} \quad (\mu^{\pi} \pi) ^\top \left( r + \wh\sigma \right)  +  (\mu^{\pi}) ^\top \left(\mathcal{H}_{\wh\sigma}(\pi) \right),
\end{align}
to a standard actor-critic objective
\begin{align}\label{eq_ac}
    \max_{\pi \in \Pi} \quad (\mu^{\pi} \pi) ^\top r,
\end{align}
as well as to an actor-critic with \emph{fixed scalar} entropy regularization 
\begin{align}\label{eq_ac_entreg}
    \max_{\pi \in \Pi} \quad (\mu^{\pi} \pi) ^\top r    +  (\mu^{\pi}) ^\top \left( \beta \mathcal{H}(\pi) \right),
\end{align}
where $\beta > 0$ is a hyperparameter (\eg, $\beta = 0.01$ in \cite{mnih2016asynchronous}, which is also used in \Cref{fig_atari} as it performed best). We later compare to additional baselines (\Cref{fig_atari_entreg}).

\paragraph{On entropy regularization for policy improvement/evaluation.} We refer to \cite{yu2022you} for in-depth discussion on the difference of using entropy for policy improvement and/or for policy evaluation in entropy-regularized RL. Theory prescribes both \cite{ziebart2010modeling} and some algorithms such as Soft Actor-Critic \cite{haarnoja2018soft} implement both, yet as remarked by \cite{yu2022you}, omitting entropy for policy evaluation (\ie, not having entropy as an intrinsic reward) tends to result in more stable and efficient learning. We also observed the same and thus only use entropy for policy improvement. 

\paragraph{Related works.} We mention here a couple of related works to our proposed algorithm. \cite{grau2019soft} argue that standard policy entropy regularization can be harmful when the RL problem contains actions that are rarely useful, and propose a method that uses mutual-information regularization to optimize a prior action distribution. \cite{han2021diversity} discuss the limitation of policy entropy regularization being `sample-unaware', and propose to regularize with respect to the entropy of a weighted sum of the policy action distribution and the sample action distribution from the replay buffer.

\paragraph{Per-game results.} \Cref{fig_per_game} reports the performance of the Replay Actor-Critic (RAC) baseline compared to RAC augmented with VAPOR-lite, on each Atari game. We see that VAPOR-lite yields improvements in many games, especially in hard-exploration games such as Montezuma's Revenge or Venture. 

\begin{figure}[t!]
\centering
\includegraphics[width=0.6\linewidth]{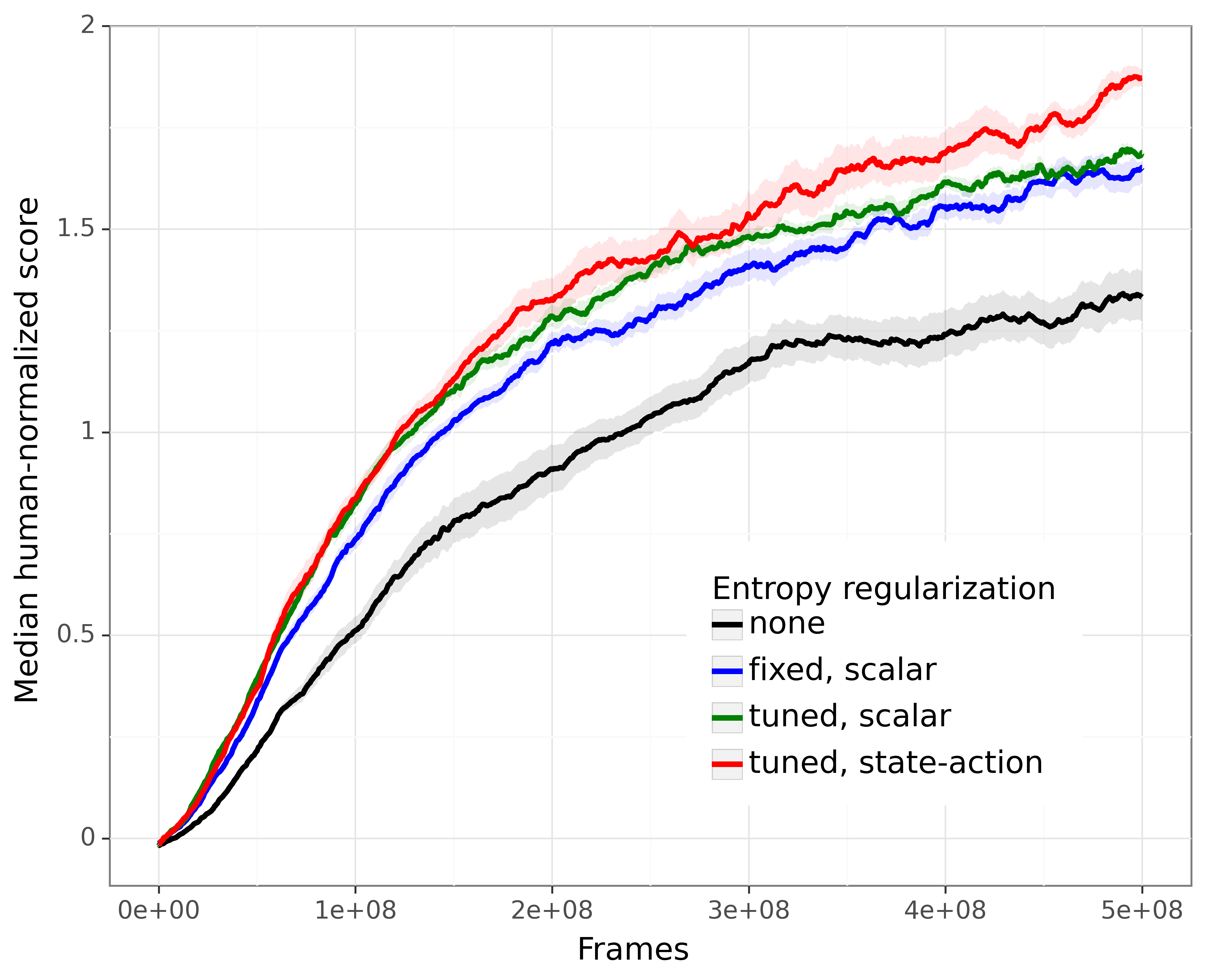}
\caption{Median human normalized score across $57$ Atari games (with standard errors across $5$ seeds) of an \emph{optimistic} replay actor-critic agent, for different choices of \emph{entropy regularization}. \vaporlite's `tuned, state-action' performs best.}
\label{fig_atari_entreg}
\end{figure}

\paragraph{Ablation on entropy regularization.} \vaporlite advocates for two ingredients: \emph{optimism} and \emph{weighted entropy regularization}. While the first has already been investigated in deep RL for various choices of uncertainty bonus \cite{bellemare2016unifying}, the second is novel so we focus our attention on it here. For a fair comparison, we augment all agents with the optimistic reward component, to isolate the effect of the adaptive state-action entropy regularization of \vaporlite. We consider the following policy losses for various choices of entropy regularization
\begin{align*}\label{eq_oac}
&\textrm{none:} \quad   &&\max_{\pi \in \Pi} \quad (\mu^{\pi} \pi) ^\top \left( r + \wh\sigma \right), \\
&\textrm{fixed, scalar:} \quad   &&\max_{\pi \in \Pi} \quad (\mu^{\pi} \pi) ^\top \left( r + \wh\sigma \right)  +  (\mu^{\pi}) ^\top \left( \beta \mathcal{H}(\pi) \right), \\
&\textrm{tuned, scalar \cite{o2023efficient}:} \quad   &&\max_{\pi \in \Pi} \min_{\tau > 0} \quad (\mu^{\pi} \pi) ^\top \left( r + \wh\sigma^2 / (2 \tau) \right) +  (\mu^{\pi}) ^\top \left( \tau  \mathcal{H}(\pi) \right), \\
&\textrm{tuned, state-action (\vaporlite):} \quad   &&\max_{\pi \in \Pi} \quad (\mu^{\pi} \pi) ^\top \left( r + \wh\sigma \right)  +  (\mu^{\pi}) ^\top \left(\mathcal{H}_{\wh\sigma}(\pi) \right).
\end{align*}
In particular, we have further compared to an optimistic actor-critic objective with \emph{tuned, scalar} entropy regularization, specifically the epistemic-risk-seeking objective of \cite{o2023efficient} which corresponds to solving a principled saddle-point problem between a policy player and a scalar temperature player $\tau$ parameterized by a neural network. We refer to \Cref{tab_HP_atari} for hyperparameter details. We observe in \Cref{fig_atari_entreg} the isolated benefits of \vaporlite's \emph{tuned, state-action} entropy regularization.

\paragraph{Future directions.} Below we list some relevant directions for future empirical investigation:
\begin{itemize}[leftmargin=.1in,topsep=-2.5pt,itemsep=1pt,partopsep=0pt, parsep=0pt]
    \item Although our simple $\wh\sigma$ uncertainty measure using an ensemble of reward predictors worked well, more sophisticated domain-specific uncertainty signals could be used \cite{ostrovski2017count,pathak2017curiosity,burda2018exploration}.
    \item A first `looseness' of \vaporlite with respect to \vaporopt is that it sets the temperatures $\tau$ to the uncertainties $\wh\sigma$ (which yields the same $\wt{O}$ regret bound). An interesting extension for \vaporlite could be to optimize directly the temperatures by minimizing \vaporopt's saddle-point problem with a state-action dependent temperature player $\tau$ parameterized by an independent neural network.
    \item A second `looseness' of \vaporlite with respect to \vaporopt is that it bypasses the challenging optimization of the entropy of the state visitation distribution, by only (weight-)regularizing with the entropy of the policy. Various attempts have turned to density models or non-parametric entropy estimation \cite{hazan2019provably,lee2019efficient,mutti2021task}, and it could be relevant to incorporate such techniques to get closer to the \vaporopt objective.
    \item Our agent is relatively simple compared to modern state-of-the-art Atari agents since it is missing components like model-based rollouts, distributional heads, auxiliary tasks, \etc An interesting extension would be to incorporate the techniques discussed in this paper into the most effective policy-based agents.
\end{itemize}

\end{document}